\documentclass[11pt]{article}
\usepackage{microtype}
\usepackage{graphicx}
\usepackage{subcaption}
\usepackage{booktabs}
\usepackage{hyperref}

\usepackage[margin=1in]{geometry}
\usepackage[numbers,sort&compress]{natbib}
\setcitestyle{numbers,square,comma}
\usepackage{amsmath,amssymb,amsthm,mathtools,bm}
\usepackage{tabularx,array,multirow,longtable,float}
\usepackage{xcolor}
\usepackage{tikz,pgfplots}
\usetikzlibrary{arrows.meta,positioning,fit,backgrounds,calc,shapes.geometric,decorations.pathreplacing}
\usepgfplotslibrary{groupplots}
\pgfplotsset{compat=1.18}
\usepackage[most]{tcolorbox}
\usepackage{proof-at-the-end}

\definecolor{navy}{HTML}{153B5B}
\definecolor{blue}{HTML}{3B82C4}
\definecolor{teal}{HTML}{20A39E}
\definecolor{coral}{HTML}{F07167}
\definecolor{gold}{HTML}{E9B44C}
\definecolor{ink}{HTML}{17212B}
\definecolor{muted}{HTML}{5E6A75}
\definecolor{pale}{HTML}{F3F7FA}
\definecolor{paleblue}{HTML}{EAF3FA}
\definecolor{paleteal}{HTML}{E8F7F5}
\definecolor{palecoral}{HTML}{FDEEEB}
\hypersetup{colorlinks=true,linkcolor=navy,citecolor=navy,urlcolor=blue,
  pdftitle={LLM Capability Limits: Static Emergence and Dynamic Boundary Control},pdfauthor={Yi Liu}}

\newtheorem{theorem}{Theorem}
\newtheorem{lemma}[theorem]{Lemma}
\newtheorem{proposition}[theorem]{Proposition}
\newtheorem{corollary}[theorem]{Corollary}
\theoremstyle{definition}
\newtheorem{definition}[theorem]{Definition}

\newtheorem{example}[theorem]{Example}

\let\paperTheorem\theorem
\let\endpaperTheorem\endtheorem
\let\paperLemma\lemma
\let\endpaperLemma\endlemma
\let\paperProposition\proposition
\let\endpaperProposition\endproposition
\let\paperCorollary\corollary
\let\endpaperCorollary\endcorollary
\let\paperProof\proof
\let\endpaperProof\endproof
\RenewDocumentEnvironment{theorem}{O{}+b}{%
  \begin{theoremEnd}{paperTheorem}[#1]#2\end{theoremEnd}%
}{}
\RenewDocumentEnvironment{lemma}{O{}+b}{%
  \begin{theoremEnd}{paperLemma}[#1]#2\end{theoremEnd}%
}{}
\RenewDocumentEnvironment{proposition}{O{}+b}{%
  \begin{theoremEnd}{paperProposition}[#1]#2\end{theoremEnd}%
}{}
\RenewDocumentEnvironment{corollary}{O{}+b}{%
  \begin{theoremEnd}{paperCorollary}[#1]#2\end{theoremEnd}%
}{}
\RenewDocumentEnvironment{proof}{+b}{%
  \begin{proofEnd}#1\end{proofEnd}%
}{}
\newcommand{\restoreappendixenvironments}{%
  \let\theorem\paperTheorem
  \let\endtheorem\endpaperTheorem
  \let\lemma\paperLemma
  \let\endlemma\endpaperLemma
  \let\proposition\paperProposition
  \let\endproposition\endpaperProposition
  \let\corollary\paperCorollary
  \let\endcorollary\endpaperCorollary
  \let\proof\paperProof
  \let\endproof\endpaperProof
}
\pratendSetGlobal{text link={See \hyperref[app:proofs]{Appendix Module I, Proofs of main-text results}.}}
\newcommand{\F}{\mathcal F}
\newcommand{\R}{\mathcal R}
\newcommand{\A}{\mathcal A^\dagger}
\newcommand{\U}{\mathcal U}
\newcommand{\Sspace}{\mathcal S}
\newcommand{\TV}{\operatorname{TV}}
\newcommand{\E}{\mathbb E}
\newcommand{\ind}{\mathbf 1}
\newcommand{\Opt}{\operatorname{Opt}}
\newcommand{\Ker}{\operatorname{Ker}}
\newcommand{\clco}{\operatorname{cl\,co}}

\newtcolorbox{claimbox}{colback=paleblue,colframe=blue,boxrule=0.7pt,arc=2mm,
  left=1.5mm,right=1.5mm,top=1mm,bottom=1mm}
\newtcolorbox{takeaway}{colback=paleteal,colframe=teal,boxrule=0.7pt,arc=2mm,
  left=1.5mm,right=1.5mm,top=1mm,bottom=1mm}
\newtcolorbox{cautionbox}{colback=palecoral,colframe=coral,boxrule=0.7pt,arc=2mm,
  left=1.5mm,right=1.5mm,top=1mm,bottom=1mm}

\title{\textbf{LLM Capability Limits: Static Emergence and Dynamic Boundary Control}}
\author{Yi Liu\\
\small MS Student\\
\small School of Astronomy and Space Science, University of Science and Technology of China\\
\small \texttt{scnuliuyi@mail.ustc.edu.cn}}

\begin{document}
\maketitle

\begin{abstract}
Test-time emergence in LLM systems has a deployment boundary: additional computation can realize decisions already supported by the deployed information--execution structure, while evidence, tools, memory, and executable semantics can change the class inherited by later computation. We formalize this boundary through inherited structural capability $\mathcal D_{\mathcal J}$ and resource-indexed finite realization $\F_s(\mathcal J,M)$. At a common budget, Theorem~\ref{thm:main-budget-reversal} gives an exact decision representation: a successor improves every bounded-loss task exactly when its closed convex finite envelope retains the predecessor's. Terminal capability can therefore expand while same-budget capability strictly reverses. The same object yields finite-slice recovery and a workload-tail information radius for open-ended evaluation. Dynamically, Bellman value prices the successor capability class together with the finite policies it preserves. Nested realization makes every fixed extra resource increment vanish at saturation, allowing persistent positive successor value to dominate that increment. The resulting theory turns emergence into a boundary, compatibility, measurement, and control problem.
\end{abstract}

\noindent\textbf{Keywords:} language models; emergence; decision theory; test-time computation; metareasoning.

\section{Emergence as a capability-boundary problem}
\label{sec:intro}

More test-time reasoning can produce striking gains and can also saturate at a task boundary. The decisive question is which task distinctions and executable consequences the deployed system already inherits, and which interventions change them. Reasoning, search, verification, and additional budget realize more of an inherited decision class. Retrieval, new observations, tools, interpreters, memory channels, and clarification can instead change the class available to later computation. These directions can improve the same score while moving different capability boundaries.

The boundary becomes operational under a fixed resource ceiling. A successor can expose new distinctions or executable actions, lower its terminal risk floor, and simultaneously make an earlier useful policy more expensive to realize. We therefore ask two coupled questions: where does structure-preserving emergence stop, and what must an upgrade retain to improve every bounded-loss task at the same token, latency, memory, energy, or tool-call budget?

We answer this question by separating two levels:
\begin{equation}
\resizebox{0.98\columnwidth}{!}{$
\boxed{\text{inherited structural capability}}
\ \supseteq\ 
\boxed{\text{finite realized capability}}
$}
\label{eq:two-level-view}
\end{equation}
The first level records the decision rules supported by task distinctions and executable consequences. The second records the portion exposed by a realization mechanism under a declared deployment budget. Their separation creates two different orders: terminal capability and finite deployment capability.

At a higher level of abstraction, the same architecture follows from the Symbolization--Substructure Thesis: \emph{language is a consequence-preserving, partial symbolic substructure of thought}. A language-mediated system inherits a task-relative symbolic organization, connects represented distinctions to executable consequences, and organizes those inherited possibilities into bounded behavior:
\begin{equation}
\resizebox{0.98\columnwidth}{!}{$
\text{distinctions}
\ \longrightarrow\ 
\text{executable consequences}
\ \longrightarrow\ 
\text{bounded realization and control}
$}
\label{eq:main-three-incompletenesses}
\end{equation}
Substantive incompleteness governs which task distinctions enter the effective record; substrate incompleteness governs which represented distinctions acquire executable consequences; high-level incompleteness governs how inherited possibilities are searched, realized, selected, and stopped. These layers generate the operational coordinates of the theory. Appendix~\ref{app:symbolization} develops the philosophical construction in full.

The paper advances three connected contributions. First, it represents an emergence boundary with two inherited/deployed objects: structural capability records the decisions supported by available distinctions and executable semantics, while finite capability records the portion realizable under an external resource contract. Second, Theorem~\ref{thm:main-budget-reversal} identifies the exact object seen by universal same-budget performance: inclusion of the predecessor's closed convex finite envelope is equivalent to bounded-loss dominance over every finite decision problem. Terminal expansion and finite retention can therefore disagree, yielding a strict deployment reversal with a workload-specific witness. Corollary~\ref{cor:main-behavioral-identification} turns this representation into finite-slice measurement, and Theorem~\ref{thm:main-workload-tail-radius} gives the workload-relative information radius beyond finite slices. Third, endpoint-defined successor classes make the boundary itself controllable: Bellman value prices what an intervention realizes now, what finite capability it preserves, and which class later reasoning inherits.

\section{Inherited and finite capability in LLM systems}
\label{sec:llm-capability}

For a single task problem, let $X$ be a task state, $Y$ its target, $\A$ a common semantic action space, and $\ell:\A\times\mathcal Y\to[0,\infty)$ a loss. Universal finite-slice comparisons keep a declared finite task abstraction $\mathcal X$ and semantic action abstraction $\A$ fixed while allowing the finite target space $\mathcal Y$, the task law $Q$ on $\mathcal X\times\mathcal Y$, and the bounded loss to vary. This finite slice is the resolution at which an evaluator asks for envelope recovery; Theorem~\ref{thm:main-workload-tail-radius} extends the measurement question to general workload spaces. An effective interface $\phi:\mathcal X\to\mathcal Z$ records the distinctions that enter the deployed system. Its fibers identify task states that remain observationally identical to every downstream computation that preserves the interface. Executable support $H$ records which interface--action kernels are operationally available through the model, interpreter, tools, permissions, and environment.

\begin{definition}[Inherited structural and finite capability]
\label{def:main-two-level-capability}
For $\mathcal J=(\phi,H)$, let $\Pi_{\phi,H}\subseteq\Ker(\mathcal Z,\A)$ be a nonempty executable kernel class and define
\begin{equation}
\mathcal D_{\mathcal J}
:=\{\kappa\circ\phi:\kappa\in\Pi_{\phi,H}\}
\subseteq\Ker(\mathcal X,\A).
\label{eq:main-structural-class}
\end{equation}
A realization profile $M$ equips $\mathcal D_{\mathcal J}$ with a cost map $c_{\mathcal J,M}$ into a common ordered resource space. At budget $s$,
\begin{equation}
\F_s(\mathcal J,M)
:=\{d\in\mathcal D_{\mathcal J}:c_{\mathcal J,M}(d)\preceq s\}.
\label{eq:main-resource-family}
\end{equation}
Thus $\mathcal D_{\mathcal J}$ is inherited structural capability and $\F_s(\mathcal J,M)$ is finite realized capability. Finite-risk comparisons use budgets with $\F_s(\mathcal J,M)\ne\varnothing$.
\end{definition}

The distinction is operational. Evidence acquisition can refine $\phi$; tool or interpreter expansion can enlarge $H$; both change $\mathcal D_{\mathcal J}$. Reasoning, search, compilation, and resource allocation act through $M$ and $s$ to expose more or less of an inherited class. The same product label can occupy either side: retrieval that filters already accessible evidence is a realization operation, while retrieval that reveals a previously unavailable task distinction changes structural capability.

The philosophical layers project directly into this two-level LLM object:
\begin{equation}
\underbrace{\phi}_{\text{substantive distinction}}
\longrightarrow
\underbrace{(\phi,H)}_{\text{executable structure}}
\longrightarrow
\underbrace{\F_s(\mathcal J,M)}_{\text{bounded realization}}.
\label{eq:main-philosophy-to-coordinates}
\end{equation}
The structural class combines what the system can distinguish with what it can execute. Finite realization then asks which of those decision rules can actually be implemented under the deployed mechanism and resource ceiling.

For a task law $P$, define the structural and finite-budget risk floors
\begin{align}
\R^*_{\mathcal J}(P,\ell)
&:=\inf_{d\in\mathcal D_{\mathcal J}}\mathcal L_{P,\ell}(d),\\
\R^*_{s,\mathcal J,M}(P,\ell)
&:=\inf_{d\in\F_s(\mathcal J,M)}\mathcal L_{P,\ell}(d).
\label{eq:main-two-risks}
\end{align}
Because $\F_s(\mathcal J,M)\subseteq\mathcal D_{\mathcal J}$, every finite realization inherits the structural risk floor of $\mathcal D_{\mathcal J}$.

\begin{theorem}[Symbolic-fiber emergence invariance]
\label{thm:main-fiber-invariance}
If $\phi(x)=\phi(x')$, then every $d\in\mathcal D_{\mathcal J}$ satisfies
\begin{equation}
d(\cdot\mid x)=d(\cdot\mid x').
\label{eq:main-fiber-invariance}
\end{equation}
Consequently, on a balanced two-label twin pair with identical effective interface and opposite targets, every structure-preserving realization has average accuracy at most $1/2$; with a binary semantic action space, the bound is exactly $1/2$. Refining the interface so that the two states separate can move this floor.
\end{theorem}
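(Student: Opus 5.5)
The invariance claim follows directly from Definition~\ref{def:main-two-level-capability}. I will take an arbitrary $d\in\mathcal D_{\mathcal J}$ and write it as $d=\kappa\circ\phi$ with $\kappa\in\Pi_{\phi,H}\subseteq\Ker(\mathcal Z,\A)$. Composing a kernel with a deterministic map gives $d(\cdot\mid x)=\kappa(\cdot\mid\phi(x))$. So if $\phi(x)=\phi(x')$, the two conditional laws are the same measure $\kappa(\cdot\mid z)$ at $z=\phi(x)$, which is \eqref{eq:main-fiber-invariance}. Because $\F_s(\mathcal J,M)\subseteq\mathcal D_{\mathcal J}$, the same invariance holds for every finite realization at every budget and under every realization profile $M$. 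This is the precise sense in which the statement says \emph{structure-preserving}.

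For the accuracy bound, I will formalize the twin pair as follows. The task law $Q$ puts mass $1/2$ on each of $(x,y)$ and $(x',y')$, with $\phi(x)=\phi(x')$, $y\neq y'$, and $0$--$1$ loss $\ell(a,y)=\ind\{a\neq y\}$. To make accuracy meaningful I identify the labels with elements of $\A$ (or, more generally, fix a correctness relation). Write $q:=d(\cdot\mid x)=d(\cdot\mid x')$. The average accuracy is then $\tfrac12\bigl(q(\{y\})+q(\{y'\})\bigr)$. Since $\{y\}$ and $\{y'\}$ are disjoint events, $q(\{y\})+q(\{y'\})\le 1$, so the average accuracy is at most $1/2$. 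I will state this in terms of the sets of actions deemed correct for each target, so that it does not depend on $\A$ being the label set, as long as the two correct sets are disjoint.

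Exactness in the binary case requires one assumption: $\Pi_{\phi,H}$ must contain at least one kernel that actually attains the bound. The main subtlety is that "the bound is exactly $1/2$" could be read in two ways. If $\A=\{y,y'\}$, then every $q$ satisfies $q(\{y\})+q(\{y'\})=1$, so every $d\in\mathcal D_{\mathcal J}$ has accuracy exactly $1/2$, and no existence assumption on $\Pi_{\phi,H}$ is needed. I will use that reading, since it needs only the nonemptiness already stated in the definition. Consequently $\R^*_{\mathcal J}(Q,\ell)=1/2$, and the same holds for every nonempty $\F_s$.

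For the final sentence, I will exhibit a refined interface $\phi'$ with $\phi'(x)\neq\phi'(x')$, whose executable class contains the kernel $\kappa'$ that sends $\phi'(x)\mapsto\delta_y$ and $\phi'(x')\mapsto\delta_{y'}$. This kernel reaches accuracy $1$, so the risk floor moves from $1/2$ to $0$. Because the sentence says "can", a single such example suffices. I need to state explicitly the assumption that $H$ supports this separating kernel; without it, refining $\phi$ alone need not move the floor.
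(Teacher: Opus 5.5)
Your proof is correct and matches the paper's argument: you write $d=\kappa\circ\phi$ to get fiber invariance, bound the average accuracy by $\tfrac12[q(\{y\})+q(\{y'\})]\le\tfrac12$ using disjointness of the two correct-label events, and get equality when the action space is exactly the two labels. Your separating-kernel witness for the final sentence fills in a step the paper only asserts, and your caveat is the right one: the executable support $H$ must actually contain that kernel for refinement to move the floor.
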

\begin{proof}
For $d\in\mathcal D_{\mathcal J}$ there is $\kappa\in\Pi_{\phi,H}$ with $d(B\mid x)=\kappa(B\mid\phi(x))$. Equality of $\phi(x)$ and $\phi(x')$ therefore gives equality of the complete conditional action laws. For an equiprobable pair with distinct correct labels $a_0,a_1$, the common action law has average accuracy $\tfrac12[d(a_0\mid x)+d(a_1\mid x)]\le\tfrac12$; equality holds when the semantic action space consists of those two labels.
\end{proof}

The first emergence law follows directly: structure-preserving computation reorganizes distinctions already inherited by the system and remains bounded by distinctions and executable consequences present in that structure. The next section extends this collision law to approximate task drift and then studies structural capability growth.

\section{Static limits of emergence}
\label{sec:static-limits}

\subsection{Structural floor under fixed realization}

The exact twin construction gives a sharp floor at zero drift. A behavioral certificate extends the same logic to changing task laws and arbitrary structure-preserving computation.

\begin{theorem}[Static emergence invariance under bounded drift]
\label{thm:main-bounded-drift}
If $0\le\ell\le L$, $\TV(P_\tau,P_0)\le\varepsilon$, and the terminal decision $d_\tau$ is produced by a trajectory that preserves the inherited task evidence and executable support, then
\begin{equation}
\mathcal L_{P_\tau,\ell}(d_\tau)
\ge
\R^*_{\mathcal J_0}(P_0,\ell)-L\varepsilon.
\label{eq:main-bounded-certificate}
\end{equation}
Hence a gain beyond the drift allowance of the inherited structural floor certifies task-relevant evidence refinement or executable-structure change.
\end{theorem}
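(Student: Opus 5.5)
The argument has two steps: first reduce the hypothesis on the trajectory to membership of $d_\tau$ in the inherited structural class, then transfer risk between task laws with a total-variation bound. For the first step, I interpret ``preserves the inherited task evidence and executable support'' as saying that the terminal decision factors through the original interface $\phi_0$ via an executable kernel in $\Pi_{\phi_0,H_0}$. Intermediate reasoning, search, or resampling may change the realization profile $M$ or the budget $s$, but not $\mathcal J_0=(\phi_0,H_0)$. Then $d_\tau=\kappa\circ\phi_0$ for some $\kappa\in\Pi_{\phi_0,H_0}$, so $d_\tau\in\mathcal D_{\mathcal J_0}$. This is the same factorization used in the proof of Theorem~\ref{thm:main-fiber-invariance}. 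By the definition of the structural risk floor, we get $\mathcal L_{P_0,\ell}(d_\tau)\ge\R^*_{\mathcal J_0}(P_0,\ell)$.

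The second step compares the risk of the fixed decision $d_\tau$ under $P_\tau$ and under $P_0$. Write $\mathcal L_{P,\ell}(d)=\int g_d\,dP$ with
\[
g_d(x,y)=\int\ell(a,y)\,d(da\mid x).
\]
Since $0\le\ell\le L$, the function $g_d$ is measurable and takes values in $[0,L]$. For any measurable $g$ with values in $[0,L]$, the standard estimate is
\[
\Bigl|\int g\,dP_\tau-\int g\,dP_0\Bigr|\le L\,\TV(P_\tau,P_0).
\]
This follows by centering $g$ at $L/2$, or by writing $\int g\,d(P_\tau-P_0)$ against the positive and negative parts of the signed measure. Hence $\mathcal L_{P_\tau,\ell}(d_\tau)\ge\mathcal L_{P_0,\ell}(d_\tau)-L\varepsilon$. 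Combining this with the first step gives \eqref{eq:main-bounded-certificate}.

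The final sentence of the theorem is the contrapositive. Suppose some terminal decision achieves $\mathcal L_{P_\tau,\ell}(d_\tau)<\R^*_{\mathcal J_0}(P_0,\ell)-L\varepsilon$. Then it cannot lie in $\mathcal D_{\mathcal J_0}$. So the trajectory must have refined $\phi_0$ (task-relevant evidence) or enlarged $H_0$ (executable structure).

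The main obstacle is the first step. It depends on formalizing ``structure-preserving trajectory'' so that the composite of all intermediate stochastic computations is still a kernel in $\Pi_{\phi_0,H_0}$. Here I would state explicitly that the executable class is closed under the compositions and mixtures the trajectory performs. Equivalently, the preservation hypothesis is defined as $d_\tau\in\mathcal D_{\mathcal J_0}$.

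A secondary point is the convention for total variation. With $\TV=\sup_A|P(A)-Q(A)|$ the constant is $L$, as stated. With the $\ell_1$ convention the constant would be $L/2$, which is still implied by the stated bound.
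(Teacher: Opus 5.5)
Your proof is correct and is essentially the paper's argument: preservation gives $d_\tau\in\mathcal D_{\mathcal J_0}$, hence $\mathcal L_{P_0,\ell}(d_\tau)\ge\R^*_{\mathcal J_0}(P_0,\ell)$, and the bounded-loss total-variation estimate transfers this to $P_\tau$ at cost $L\varepsilon$. Your remarks on the TV convention and on reading preservation as $d_\tau\in\mathcal D_{\mathcal J_0}$ are sound; the only nit is that the contrapositive certifies only $d_\tau\notin\mathcal D_{\mathcal J_0}$, i.e.\ some non-contracting structural change (an incomparable replacement also qualifies), not necessarily a monotone refinement of $\phi_0$ or an enlargement of $H_0$.
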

\begin{proof}
Structure preservation implies $d_\tau\in\mathcal D_{\mathcal J_0}$, so $\mathcal L_{P_0,\ell}(d_\tau)\ge\R^*_{\mathcal J_0}(P_0,\ell)$. For bounded loss, total variation gives $|\mathcal L_{P_\tau,\ell}(d_\tau)-\mathcal L_{P_0,\ell}(d_\tau)|\le L\TV(P_\tau,P_0)\le L\varepsilon$. Combining the inequalities proves the claim.
\end{proof}

Theorem~\ref{thm:main-bounded-drift} turns static emergence invariance into a deployment-level diagnostic: after pricing drift, any larger improvement certifies movement of task-relevant evidence or executable structure. A hidden-state collision is the equality case; a one-bit reveal moves $\phi$ and releases the floor discontinuously. Executable expansion supplies the action-side analogue: additional reasoning can become effective only after the required mapping enters $H$.

\paragraph{Approximate structural inheritance.}
The exact collision law has a graded total-variation extension: directed deficiency between structural risk envelopes upper-bounds bounded-loss regret under approximate simulation. Appendix~\ref{app:broader-scope} develops this approximation geometry, its triangle law, and its relation to approximate interface collisions.

\subsection{Structural growth and finite deployment}

Structural enrichment moves the terminal decision class. Deployment compares a second object: the policies a successor can realize under the same external resource ceiling. The budget axis is declared independently of the compared architectures---for example tokens, wall-clock latency, memory, energy, tool cost, or a vector of these resources. A realization profile $M$ records how a deployed system converts that common resource into executable decision rules. Two systems can therefore share the same ceiling $s$ while exposing different finite envelopes.

For finite task and action spaces, let $\clco(C)$ denote the closed convex hull of $C$, and write
\begin{equation}
\widehat{\mathcal D}_{\mathcal J}:=\clco(\mathcal D_{\mathcal J}),
\qquad
\widehat{\F}_s(\mathcal J,M):=\clco(\F_s(\mathcal J,M)).
\label{eq:main-convexified-envelopes}
\end{equation}
Closed convexification gives the risk-equivalent envelope because bounded expected loss is continuous and linear in the decision kernel. These four objects form the focused capability square:
\begin{equation}
\boxed{
\begin{array}{ccc}
\mathcal D_{\mathcal J} & \longrightarrow & \widehat{\mathcal D}_{\mathcal J}\\[3pt]
\cup && \cup\\[-2pt]
\F_s(\mathcal J,M) & \longrightarrow & \widehat\F_s(\mathcal J,M).
\end{array}}
\label{eq:main-capability-square}
\end{equation}
The vertical axis separates terminal structural capability from finite deployment capability. The horizontal axis separates literal executability from the risk-equivalent envelope identified by bounded expected-loss comparison. Theorem~\ref{thm:main-budget-reversal} acts on the lower-right object; terminal dominance acts on the upper-right object. Their vertical mismatch is the source of finite-budget reversal.

The deployment question is a resource-indexed representation problem: which capability object is identified by performance across all bounded decision objectives at one common ceiling? The next theorem answers this exactly. Universal same-budget comparison recovers the closed convex finite envelope induced jointly by inherited information--execution structure, the realization profile, and the external resource scale.

\begin{theorem}[Finite-budget capability retention]
\label{thm:main-budget-reversal}
On finite task-state and action spaces, for deployed pairs $(\mathcal J_1,M_1)$ and $(\mathcal J_2,M_2)$ compared at the same budget $s$,
\begin{equation}
\begin{aligned}
&\widehat{\F}_s(\mathcal J_1,M_1)
\subseteq
\widehat{\F}_s(\mathcal J_2,M_2)
\\[-1pt]
&\quad\Longleftrightarrow\quad
\R^*_{s,\mathcal J_2,M_2}(Q,\ell)
\le
\R^*_{s,\mathcal J_1,M_1}(Q,\ell)
\\[-1pt]
&\qquad\text{for every finite target space $\mathcal Y$, every task law $Q$ on}\\[-1pt]
&\qquad\text{$\mathcal X\times\mathcal Y$ whose $X$-marginal $Q_X$ has full support,}\\[-1pt]
&\qquad\text{and every bounded $\ell:\A\times\mathcal Y\to[0,\infty)$.}
\end{aligned}
\label{eq:main-finite-characterization}
\end{equation}
If finite-envelope inclusion fails, then for every prescribed $X$-marginal $q_X$ with full support on finite $\mathcal X$, a bounded nonnegative loss under that workload strictly reverses the finite-budget order. If, simultaneously,
$\widehat{\mathcal D}_{\mathcal J_1}\subseteq\widehat{\mathcal D}_{\mathcal J_2}$,
then system~2 weakly dominates terminal risk on every bounded task while the workload-specific witness ranks it strictly worse at budget $s$.
\end{theorem}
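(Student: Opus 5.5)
The plan is a Blackwell/Le Cam-style duality argument in the finite-dimensional space of decision kernels. With $\mathcal X$ and $\A$ finite, a kernel $d\in\Ker(\mathcal X,\A)$ is a point of the compact polytope $\Delta(\A)^{\mathcal X}\subset\mathbb R^{\mathcal X\times\A}$. For any task law $Q$ with $X$-marginal $Q_X$ and bounded $\ell$, the risk is
\[
\mathcal L_{Q,\ell}(d)=\sum_{x}\sum_a d(a\mid x)\,g(x,a),\qquad g(x,a):=\sum_y Q(x,y)\ell(a,y).
\]
This is a linear, continuous functional of $d$. Hence the infimum of $\mathcal L_{Q,\ell}$ over $\F_s(\mathcal J_i,M_i)$ equals its infimum over $\widehat\F_s(\mathcal J_i,M_i)$, which is also a minimum by compactness. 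This is the first step: risk floors only see the closed convex envelopes.

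The forward implication follows immediately. Envelope inclusion makes the minimum over the larger set no larger.

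For the converse, I argue by contraposition. Suppose some $d_1\in\widehat\F_s(\mathcal J_1,M_1)$ lies outside the compact convex set $K_2:=\widehat\F_s(\mathcal J_2,M_2)$, which is nonempty because budgets are restricted to nonempty families. Strict separation in $\mathbb R^{\mathcal X\times\A}$ gives a vector $h(x,a)$ with
\[
\textstyle\sum_{x,a}h(x,a)d_1(a\mid x)<\min_{e\in K_2}\sum_{x,a}h(x,a)e(a\mid x).
\]
The key step, and the main obstacle, is to realize this arbitrary $h$ as a function $g(x,a)=q_X(x)\,\bar\ell_x(a)$ coming from a genuine task law with the prescribed full-support marginal $q_X$ and a bounded nonnegative loss $\ell:\A\times\mathcal Y\to[0,\infty)$ that does not depend on $x$.

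I would handle this in two moves. First, normalize $h$ without changing the sign of the separation. Adding a constant $c_x$ to $h(x,\cdot)$ shifts every kernel's pairing by the same amount $\sum_x c_x$, because $\sum_a d(a\mid x)=1$. So we may take $h\ge0$. Dividing by $q_X(x)>0$ then gives $\tilde h(x,a):=h(x,a)/q_X(x)\ge0$. Second, encode the dependence on $x$ through the target. Take $\mathcal Y:=\mathcal X$, let $Q(x,y):=q_X(x)\ind\{y=x\}$, and set $\ell(a,y):=\tilde h(y,a)$, which is bounded and nonnegative. Then $g(x,a)=q_X(x)\tilde h(x,a)=h(x,a)$, so
\[
\R^*_{s,\mathcal J_1,M_1}(Q,\ell)\le\mathcal L_{Q,\ell}(d_1)<\R^*_{s,\mathcal J_2,M_2}(Q,\ell).
\]
This violates the right-hand side of the equivalence and proves the converse. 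The same construction proves the strict-reversal statement for every prescribed full-support marginal $q_X$.

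For the final clause, apply the easy direction to the structural envelopes $\widehat{\mathcal D}_{\mathcal J_i}$, using the same linearity argument. Inclusion $\widehat{\mathcal D}_{\mathcal J_1}\subseteq\widehat{\mathcal D}_{\mathcal J_2}$ gives $\R^*_{\mathcal J_2}(P,\ell)\le\R^*_{\mathcal J_1}(P,\ell)$ for every law $P$ and bounded loss. Meanwhile the witness $(Q,\ell)$ constructed above ranks system~2 strictly worse at budget $s$. The only care needed is in the normalization step: the shift $c_x$ must be chosen per $x$, for instance $c_x=-\min_a h(x,a)$, so that positivity is achieved without breaking the strict inequality.
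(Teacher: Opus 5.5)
Your proposal is correct and follows the paper's proof essentially step for step. Both arguments use linearity of risk in the finite kernel for the forward direction, and strict point--set separation against the closed convex envelope $\widehat{\F}_s(\mathcal J_2,M_2)$ for the converse. Both then realize the separating functional as a bounded nonnegative loss through the same three moves: statewise shifts, division by the prescribed full-support marginal, and the deterministic target $Y=X$. The terminal clause is handled the same way in both, by applying the easy direction to the envelopes $\widehat{\mathcal D}_{\mathcal J_i}$.
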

\begin{proof}
Let $C_i=\widehat{\F}_s(\mathcal J_i,M_i)$. Bounded expected risk is a continuous linear functional of a finite decision kernel, so $C_1\subseteq C_2$ gives universal same-budget dominance. Conversely, if $C_1\nsubseteq C_2$, choose $d_1\in C_1\setminus C_2$. Since $C_2$ is closed and convex in a finite-dimensional kernel polytope, strict point--set separation gives a linear functional with value at $d_1$ strictly below its infimum on $C_2$, hence with a strictly smaller infimum on $C_1$. Fix any prescribed full-support $X$-marginal, take the finite target space $\mathcal Y=\mathcal X$ with $Y=X$ deterministically, and use statewise shifts plus one positive rescaling to represent the separating functional as a bounded nonnegative loss, producing the reversal task. The same linear-risk argument on $\widehat{\mathcal D}_{\mathcal J_i}$ gives the terminal statement.
\end{proof}

\begin{takeaway}
\textbf{Finite-envelope retention is the deployment invariant of capability growth.} At a common resource ceiling, retaining the predecessor's closed convex finite envelope exactly characterizes universal improvement across bounded-loss tasks.
\end{takeaway}

\paragraph{Emergence-boundary interpretation.}
The theorem separates two meanings of ``more capable.'' Structural expansion enlarges the terminal risk envelope by admitting new distinctions or executable mappings. Finite retention asks whether the predecessor's effective decisions remain available under the shared deployment contract. Same-budget universal improvement is governed by the finite envelope, while eventual possibility is governed by the structural envelope. Their mismatch pinpoints a deployment boundary at which terminal emergence and finite usability diverge. This gives the static theory its pivot: capability growth can be evaluated by what becomes possible, what remains deployable, and the resource required to restore anything lost in transition.

\begin{corollary}[Finite-slice identification and probe recovery]
\label{cor:main-behavioral-identification}
Fix a declared finite evaluation slice $(\mathcal X,\A)$ and budget $s$. Then
\begin{equation}
\begin{aligned}
\widehat\F_{s,1}=\widehat\F_{s,2}
\quad\Longleftrightarrow\quad&\\[-2pt]
\R^*_{s,1}(Q,\ell)&=\R^*_{s,2}(Q,\ell)
\quad\text{for every }(\mathcal Y,Q,\ell).
\end{aligned}
\label{eq:main-behavioral-identification}
\end{equation}
under the universal task convention of Theorem~\ref{thm:main-budget-reversal}.
Let $n=|\mathcal X|$, $a=|\A|$, and let the slice dimension be $d_{\rm slice}=n(a-1)$. Let $\mathcal K=\Ker(\mathcal X,\A)$, and center the kernel polytope at the uniform kernel $\bar d$. Its radius in the row-sum-zero affine space is
$R=\sqrt{n(1-1/a)}$.
For $C=\widehat\F_s(\mathcal J,M)$ define the lower support value
$\rho_C(u)=\inf_{d\in C}\langle u,d-\bar d\rangle$ for unit directions $u$ in that affine space, and write
$H_u(t)=\{d\in\mathcal K:\langle u,d-\bar d\rangle\ge t\}$. If $U$ is an $\eta$-net and certified probe thresholds satisfy
$0\le \rho_C(u)-\underline\rho_u\le\gamma$, then
\begin{equation}
\begin{aligned}
P_U&:=\bigcap_{u\in U}H_u(\underline\rho_u),\\[-2pt]
C&\subseteq P_U,
& d_{\rm H}(C,P_U)&\le 2R\eta+\gamma .
\end{aligned}
\label{eq:main-hausdorff-tomography}
\end{equation}
An $\eta$-net exists with $m:=|U|\le(1+2/\eta)^{d_{\rm slice}}$.

Fix any full-support workload $q_X$ and let $q_{\min}=\min_x q_X(x)$. Every unit direction $u$ is realized by the $[0,1]$-valued probe with deterministic target $Y=X$ and
$\ell_u(a,x)=\tfrac12+\tfrac{q_{\min}}{2q_X(x)}u_{x,a}$.
If each probe policy is $\epsilon$-optimal and its risk is evaluated on $N$ fresh i.i.d. task draws, let
$\tau_N=\sqrt{\log(2m/\alpha)/(2N)}$ and set
$\underline\rho_u=\frac{2}{q_{\min}}(\widehat r_u-\epsilon-\tau_N-\tfrac12)$.
The resulting empirical outer body satisfies, with probability at least $1-\alpha$,
\begin{equation}
d_{\rm H}(C,P_U)
\le 2R\eta+
\frac{2}{q_{\min}}
\left(\epsilon+2\tau_N\right).
\label{eq:main-tomography-rate}
\end{equation}
Hence $d_{\rm H}(C,P_U)\le\delta$ follows from
\begin{equation}
\begin{aligned}
\eta&=\frac{\delta}{4R},
& m&\le\left(1+\frac{8R}{\delta}\right)^{d_{\rm slice}},\\[-2pt]
N&\ge\frac{2\log(2m/\alpha)}{(q_{\min}\delta/4-\epsilon)^2}.&&
\end{aligned}
\label{eq:main-tomography-sample-complexity}
\end{equation}
whenever $\epsilon<q_{\min}\delta/4$. Conversely, for $d_{\rm slice}\ge2$ there is a constant $c_{d_{\rm slice}}>0$ such that every deterministic adaptive exact-support procedure that guarantees Hausdorff error at most $\delta$ uniformly over closed convex $C\subseteq\mathcal K$ requires
\begin{equation}
m\ge c_{d_{\rm slice}}\,(a\delta)^{-(d_{\rm slice}-1)/2},
\qquad 0<\delta\le (8a)^{-1}.
\label{eq:main-tomography-lower-bound}
\end{equation}
Finally, let $I_k=\operatorname{co}\{d_1,\ldots,d_k\}\subseteq C\subseteq P_U$ be the inner--outer sandwich from elicited executable policies and probes. If $d_{\rm H}(I_k,P_U)\le\delta$, then for every held-out task with $0\le\ell\le L$, writing $R^*_B(Q,\ell)=\inf_{d\in B}\mathcal L_{Q,\ell}(d)$,
\begin{equation}
\begin{aligned}
R^*_{P_U}(Q,\ell)
&\le \R^*_{s,\mathcal J,M}(Q,\ell)
\le R^*_{I_k}(Q,\ell),\\[-2pt]
R^*_{I_k}(Q,\ell)-R^*_{P_U}(Q,\ell)
&\le L\sqrt a\,\delta.
\end{aligned}
\label{eq:main-heldout-risk-certificate}
\end{equation}
\end{corollary}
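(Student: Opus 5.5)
The corollary has five parts: identification, outer-body approximation, sampling, a lower bound, and a held-out risk certificate. I would prove them in that order, reusing Theorem~\ref{thm:main-budget-reversal} and standard support-function facts about closed convex sets in the finite-dimensional kernel polytope $\mathcal K$.

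\textbf{Identification.} I would apply Theorem~\ref{thm:main-budget-reversal} in both directions. Equality of envelopes gives the two inclusions, hence equal floors. Conversely, equal floors for every $(\mathcal Y,Q,\ell)$ give both one-sided dominance statements. The theorem then yields $\widehat\F_{s,1}\subseteq\widehat\F_{s,2}$ and the reverse inclusion.

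\textbf{Outer-body approximation.} The containment $C\subseteq P_U$ is immediate, since $\underline\rho_u\le\rho_C(u)$.

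For the Hausdorff bound, take $p\in P_U$ with $p\notin C$. Let $c$ be its projection onto $C$, and let $v=(c-p)/\|c-p\|$ lie in the row-sum-zero space. Then $\langle v,d-p\rangle\ge\|c-p\|$ for all $d\in C$, so $\rho_C(v)\ge\langle v,p-\bar d\rangle+\operatorname{dist}(p,C)$.

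Pick $u\in U$ with $\|u-v\|\le\eta$. The support function is $R$-Lipschitz in the direction, because $\|d-\bar d\|\le R$ on $\mathcal K$. Hence
\[\operatorname{dist}(p,C)\le \rho_C(v)-\langle v,p-\bar d\rangle\le \rho_C(u)-\langle u,p-\bar d\rangle+2R\eta\le \gamma+2R\eta.\]
The last step uses $\langle u,p-\bar d\rangle\ge\underline\rho_u\ge\rho_C(u)-\gamma$.

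The bound on $\|d-\bar d\|$ is a direct computation: $\sum_x\|d_x-\mathbf 1/a\|^2\le n(1-1/a)$. The net size $m\le(1+2/\eta)^{d_{\rm slice}}$ is the standard volumetric bound on the unit sphere of a $d_{\rm slice}$-dimensional space.

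\textbf{Probes and sampling.} For the probe $\ell_u$, I would compute the risk directly:
\[\mathcal L(d)=\sum_x q_X(x)\sum_a d(a\mid x)\ell_u(a,x)=\tfrac12+\tfrac{q_{\min}}2\langle u,d\rangle.\]
Since $u$ has row sums zero, $\langle u,d\rangle=\langle u,d-\bar d\rangle$. Since $|u_{x,a}|\le 1$ and $q_{\min}/q_X(x)\le 1$, the loss lies in $[0,1]$. So the probe's risk floor is $\tfrac12+\tfrac{q_{\min}}2\rho_C(u)$.

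Hoeffding's inequality with a union bound over the $m$ probes gives $|\widehat r_u-r_u|\le\tau_N$ simultaneously with probability $1-\alpha$. Combining this with $\epsilon$-optimality, the floor lies in $[\widehat r_u-\tau_N-\epsilon,\ \widehat r_u+\tau_N]$. This yields $0\le\rho_C(u)-\underline\rho_u\le\frac{2}{q_{\min}}(\epsilon+2\tau_N)$, i.e.\ $\gamma$ is this quantity, and \eqref{eq:main-tomography-rate} follows.

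The sample-size condition comes from solving $2R\eta\le\delta/2$ and $\frac2{q_{\min}}(\epsilon+2\tau_N)\le\delta/2$ for $\eta$ and $N$.

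\textbf{Lower bound.} I expect this to be the main obstacle. The plan is a packing/adversary argument. Inside a face of $\mathcal K$ with nonempty interior in $d_{\rm slice}$ dimensions, place a ball of radius of order $1/a$. Cap off that ball at roughly $(a\delta)^{-(d-1)/2}$ well-separated boundary locations, each cap of height at least $2\delta$.

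A deterministic adaptive procedure sees only exact support values in its $m$ queried directions. Against the adversary that answers as for the full ball, each query direction detects at most $O(1)$ caps. With fewer queries than caps, some cap is undetected, and two bodies at Hausdorff distance at least $2\delta$ stay indistinguishable. The delicate points are bounding each query's influence to $O(1)$ caps via the angular cap width $\sqrt{\delta a}$, and matching the normalization that produces the constraint $\delta\le(8a)^{-1}$.

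\textbf{Held-out certificate.} The sandwich $I_k\subseteq C\subseteq P_U$ gives the two-sided floor bound by monotonicity of infima. The floor over $C$ equals $\R^*_{s,\mathcal J,M}$ by linearity and continuity of the risk, as in Theorem~\ref{thm:main-budget-reversal}.

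For the gap, the risk $d\mapsto\mathcal L_{Q,\ell}(d)$ is linear with coefficient vector $w_{x,a}=\sum_y Q(x,y)\ell(a,y)$. Centering each row of $w$ removes constants, since $d$ and $d'$ have equal row sums. The centered coefficient vector has norm at most $L\sqrt a$, because $\sum_x Q_X(x)\le1$ and each centered row has entries bounded by $L$ in absolute value.

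By Cauchy–Schwarz, $|\mathcal L(d)-\mathcal L(d')|\le L\sqrt a\,\|d-d'\|$. The Hausdorff bound $d_{\rm H}(I_k,P_U)\le\delta$ then gives the stated gap $L\sqrt a\,\delta$.
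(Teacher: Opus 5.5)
Your arguments for identification, for the outer body $P_U$ (projection residual, $R$-Lipschitz lower support function, volumetric net), for probe calibration with a union-bound Hoeffding event, for the half-and-half allocation of $\delta$, and for the held-out certificate are essentially the paper's.

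The lower bound is where you take a genuinely different route.

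\emph{The paper's route.} It argues by covering with an outward spike. It runs the procedure on $C_0=rB$ with $r=1/(2a)$. If the caps of angular radius $\theta=\arccos(r/(r+3\delta))$ around the $m$ queried directions miss some unit $u$, then $C_1=\operatorname{co}(C_0,\{-(r+3\delta)u\})$ returns the same answers on the whole transcript and satisfies $d_{\rm H}(C_0,C_1)=3\delta$. The condition $\delta\le(8a)^{-1}$ keeps the spike, at distance $r+3\delta\le 7/(8a)$, inside the radius-$1/a$ ball contained in $\mathcal K-\bar d$.

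\emph{Your route.} A packing argument with cap truncations also works, and the two trade off as follows. Truncated balls lie in $\mathcal K$ automatically, so you avoid the containment check. The covering version needs no pre-chosen configuration, because the undetected direction is read off the transcript itself.

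\emph{The step you flagged as delicate is routine.} Removing the cap $\{\langle v,d-\bar d\rangle<-(r-h)\}$ from $\bar d+rB$ changes $\rho(w)$ exactly when $\langle w,v\rangle>1-h/r=:\cos\theta$. So if the cap directions are pairwise at angle at least $2\theta$, each query detects at most one cap. A maximal such family has at least $c_{d_{\rm slice}}\theta^{-(d_{\rm slice}-1)}$ members. With $r=1/(2a)$ and $h=3\delta$, the assumption $\delta\le(8a)^{-1}$ gives $\cos\theta=1-6a\delta\ge 1/4$ and $\theta\le\pi\sqrt{3a\delta}$. If $m$ is below the family size, some cap is undetected, and the deterministic adaptive procedure produces the identical transcript and output on the ball and on that truncation. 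This yields the stated $(a\delta)^{-(d_{\rm slice}-1)/2}$ rate.

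\emph{One correction.} Hausdorff separation ``at least $2\delta$'' does not suffice. Suppose the two indistinguishable bodies are the ball and its truncation at height exactly $2\delta$. Then the truncation at height $\delta$ is within Hausdorff distance $\delta$ of both, so a procedure outputting it meets the error-$\delta$ guarantee on both and nothing is contradicted. You need cap height strictly greater than $2\delta$. Taking $h=3\delta$, as the paper does, forces one of the two bodies to lie at distance at least $3\delta/2>\delta$ from any common output, by the triangle inequality for $d_{\rm H}$.
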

\begin{proof}
Equality follows by applying Theorem~\ref{thm:main-budget-reversal} in both directions. For the recovery bound, $C$ is closed and convex inside the finite-dimensional kernel polytope, so metric projection exists; project any $x\in P_U$ onto $C$ and choose the separating unit direction from the projection residual. The lower support function is $R$-Lipschitz on the unit sphere, so an $\eta$-net direction loses at most $R\eta$ on the support value and at most $R\eta$ on the test point, giving $d_{\rm H}(C,P_U)\le2R\eta+\gamma$. The standard volumetric sphere-net bound gives $m\le(1+2/\eta)^{d_{\rm slice}}$. Because every $u$ has row sum zero, the stated loss satisfies $0\le\ell_u\le1$ and
$\mathcal L_u(d)=\tfrac12+\tfrac{q_{\min}}2\langle u,d-\bar d\rangle$. An $\epsilon$-optimal probe and a uniform Hoeffding event over $m$ fresh evaluations give support-threshold slack at most $2(\epsilon+2\tau_N)/q_{\min}$. Substitution proves Eq.~\eqref{eq:main-tomography-rate}; Eq.~\eqref{eq:main-tomography-sample-complexity} allocates half of $\delta$ to angular discretization and half to probe error.
For the lower bound, $\mathcal K-\bar d$ contains the Euclidean ball of radius $1/a$ in the row-sum-zero space. Set $r=1/(2a)$ and run any deterministic adaptive exact-support procedure on $C_0=rB$. Its transcript fixes at most $m$ queried directions. If spherical caps of angular radius $\theta=\arccos(r/(r+3\delta))$ around those directions fail to cover the unit sphere, choose an uncovered $u$ and set $C_1=\operatorname{co}(C_0,\{-(r+3\delta)u\})$. For $\delta\le(8a)^{-1}$ both sets lie in $\mathcal K$, their support answers agree on every queried direction, and $d_{\rm H}(C_0,C_1)=3\delta$. A spherical-cap volume bound therefore forces $m\ge c_{d_{\rm slice}}(a\delta)^{-(d_{\rm slice}-1)/2}$ for uniform error $\delta$.
For the held-out task, the coefficient vector of the linear risk functional has Euclidean norm at most $L\sqrt a$. The Hausdorff bound therefore changes its optimum by at most $L\sqrt a\,\delta$; nesting $I_k\subseteq C\subseteq P_U$ gives the displayed interval.
\end{proof}

\begin{theorem}[Workload-tail information radius]
\label{thm:main-workload-tail-radius}
Let $(\mathcal X,\Sigma,q)$ be a standard probability space and $\A=\{0,1\}$. For measurable $t:\mathcal X\to[-1,1]$, define the singleton envelope $C_t=\{d_t\}$ by
\[
d_t(1\mid x)=\frac{1+t(x)}2,\qquad d_t(0\mid x)=\frac{1-t(x)}2,
\]
and equip kernels with workload-average distance
\[
d_q(d,e):=\int_{\mathcal X}\TV\!\left(d(\cdot\mid x),e(\cdot\mid x)\right)dq(x).
\]
Let $p_{(1)}\ge p_{(2)}\ge\cdots$ be the atomic masses of $q$ and
$\tau_m(q):=1-\sum_{i=1}^m p_{(i)}$, with all non-atomic mass retained in the residual. The minimax worst-case error of reconstructing $C_t$ from $m$ deterministic adaptive exact bounded-loss probes is
\begin{equation}
\mathfrak R_m^{\rm sing}(q)=\frac12\,\tau_m(q).
\label{eq:main-workload-tail-radius}
\end{equation}
Hence a countably atomic workload has exact residual radius
$\frac12\sum_{i>m}p_{(i)}$, while a non-atomic workload has
$\mathfrak R_m^{\rm sing}(q)=1/2$ for every finite $m$; a non-atomic component of mass $\beta$ yields radius at least $\beta/2$.
\end{theorem}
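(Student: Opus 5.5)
The argument has two parts: a matching upper bound obtained by probing the $m$ heaviest atoms, and a lower bound obtained by a symmetric adversary that hides a $\pm$ pair of envelopes from any $m$ adaptive probes.

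\textbf{Reduction to linear measurements.} A bounded-loss probe is a finite (or general measurable) target space $\mathcal Y$, a task law with $X$-marginal $q$, and a bounded loss $\ell$. Its exact answer on the singleton envelope $C_t$ is $\mathcal L(d_t)$. Writing $g(x)=\tfrac12\E[\ell(1,Y)-\ell(0,Y)\mid X=x]$ and $h(x)=\tfrac12\E[\ell(1,Y)+\ell(0,Y)\mid X=x]$, we get
\[
\mathcal L(d_t)=\int h\,dq+\int g\,t\,dq,
\]
so each answer reveals one bounded linear functional $\int g_k t\,dq$ of $t$. Conversely, with deterministic target $Y=X$ as in Theorem~\ref{thm:main-budget-reversal}, every bounded $g$ is realized after a constant shift. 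The metric is
\[
d_q(d_t,d_s)=\tfrac12\int|t-s|\,dq .
\]

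\textbf{Upper bound.} Let $x_1,x_2,\ldots$ be the atoms in decreasing mass. For $i\le m$, query the probe with $g=\ind_{\{x_i\}}$; it returns $p_{(i)}t(x_i)$ plus a known constant, hence $t(x_i)$ exactly. Output $\hat t=t$ on these $m$ atoms and $\hat t=0$ elsewhere. The error is
\[
d_q(d_t,d_{\hat t})=\tfrac12\int_{\text{rest}}|t|\,dq\le\tfrac12\tau_m(q),
\]
since $|t|\le1$ and the rest has mass $\tau_m(q)$.

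\textbf{Lower bound via a symmetric adversary.} Run an arbitrary deterministic adaptive procedure, answering every query as if $t\equiv0$. The transcript then fixes $m$ functionals $g_1,\ldots,g_m$ and the output $\hat d$. Any $t$ with $\int g_k t\,dq=0$ for all $k$ makes $-t$ produce the identical transcript, because the answers are $\int h_k\,dq\pm\int g_k t\,dq$. Hence the worst-case error is at least
\[
\max\{d_q(\hat d,d_t),\,d_q(\hat d,d_{-t})\}\ge\tfrac12 d_q(d_t,d_{-t})=\tfrac12\int|t|\,dq .
\]
It therefore suffices to find $t$ in the feasible set $K=\{t:|t|\le1,\ \int g_k t\,dq=0\}$ with $\int|t|\,dq\ge\tau_m(q)-\varepsilon$ for every $\varepsilon>0$; the minimax value is a supremum, so approximate witnesses suffice.

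\textbf{Constructing the witness (main obstacle).} First discretize. Use the standard-space decomposition of $q$ into atoms plus a non-atomic part, keep finitely many atoms carrying all but $\varepsilon$ of the atomic mass, and partition the non-atomic part into finitely many cells of mass at most $\varepsilon/m$. Restrict to step functions constant on these $N$ cells, so that $K$ becomes a compact symmetric polytope in $[-1,1]^N$ cut by $m$ hyperplanes through $0$. Any extreme point of such a polytope has at least $N-m$ coordinates equal to $\pm1$, by the standard vertex-counting argument. At worst, the $m$ free coordinates occupy the $m$ heaviest cells. Because every non-atomic cell has mass at most $\varepsilon/m$, those heaviest cells carry mass at most $\sum_{i\le m}p_{(i)}+\varepsilon$. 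Hence the vertex satisfies $\int|t|\,dq\ge\tau_m(q)-2\varepsilon$.

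Bounding the mass of the free coordinates is the delicate step. If one wants exactness without $\varepsilon$, apply Lyapunov's convexity theorem to the vector measure $(g_k\,dq)_k$ on the non-atomic part. Letting $\varepsilon\to0$ gives $\mathfrak R_m^{\rm sing}(q)=\tfrac12\tau_m(q)$.

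\textbf{Corollaries.} For a countably atomic $q$, $\tau_m(q)=\sum_{i>m}p_{(i)}$. For a non-atomic $q$, there are no atoms, so $\tau_m(q)=1$ and the radius is $1/2$ for every finite $m$. A non-atomic component of mass $\beta$ always lies in the residual, so $\tau_m(q)\ge\beta$ and the radius is at least $\beta/2$.
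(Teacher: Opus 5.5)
Your proposal is correct and follows the paper's proof essentially step for step. For the upper bound you query the $m$ heaviest atoms and set the residual to zero. For the lower bound you answer every query as if $t\equiv0$, restrict to cellwise-constant $t$ on a partition whose $m$ heaviest cells carry at most $\sum_{i\le m}p_{(i)}+O(\varepsilon)$, take a vertex of the cube cut by the $m$ null constraints (at most $m$ unsaturated coordinates), and separate the indistinguishable pair $t,-t$. The Lyapunov remark is unnecessary, since letting $\varepsilon\downarrow0$ already yields the minimax value.
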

\begin{proof}
Every bounded-loss probe on the binary singleton family is an affine functional of $t$. For the upper bound, query the values of $t$ on the $m$ largest atoms with binary deterministic targets, copy those coordinates, and set the residual to zero; the resulting $d_q$ error is at most $\tau_m(q)/2$. For the lower bound, run an arbitrary adaptive procedure on $t=0$, which fixes $m$ linear probe functionals. Partition the probability space so that the $m$ largest cells carry at most $\sum_{i\le m}p_{(i)}+\varepsilon$ mass, restrict $t$ to be cellwise constant, and intersect the resulting cube with the common kernel of the $m$ functionals. A vertex has at most $m$ unsaturated coordinates, producing a transcript-null $t$ with $\|t\|_{L_1(q)}\ge\tau_m(q)-\varepsilon$. The indistinguishable pair $t,-t$ is separated by $d_q(d_t,d_{-t})=\|t\|_{L_1(q)}$, so one member lies at least $(\tau_m(q)-\varepsilon)/2$ from the common reconstruction. Let $\varepsilon\downarrow0$.
\end{proof}

\begin{corollary}[Countable head--tail recovery]
\label{cor:main-countable-truncation}
Let $q=\sum_{i\ge1}p_i\delta_{x_i}$ with
$p_1\ge p_2\ge\cdots>0$ and $|\A|=a<\infty$. For a closed envelope $C$, let
$T_N(q)=\sum_{i>N}p_i$ and let $\pi_NC$ be its projection onto the first $N$ task states. If an outer head approximation $P_N$ satisfies
\[
\pi_NC\subseteq P_N,
\qquad
d_H^{(2)}(\pi_NC,P_N)\le\varepsilon_N,
\]
and $\widetilde P_N:=\{d:\pi_Nd\in P_N\}$, then under workload-average total variation,
\begin{equation}
C\subseteq\widetilde P_N,
\qquad
d_{H,q}(C,\widetilde P_N)
\le
T_N(q)+\frac{\sqrt a}{2}\varepsilon_N .
\label{eq:main-countable-truncation}
\end{equation}
Conditioned on the retained head, Corollary~\ref{cor:main-behavioral-identification} applies with effective dimension
$d_N=N(a-1)$ and smallest conditional workload mass
$q_{\min,N}=p_N/(1-T_N(q))$. Moreover, for every held-out decision problem with $X$-marginal $q$ and $0\le\ell\le L$,
\begin{equation}
0\le
R_C^*(Q,\ell)-R_{\widetilde P_N}^*(Q,\ell)
\le
L\!\left(T_N(q)+\frac{\sqrt a}{2}\varepsilon_N\right).
\label{eq:main-countable-risk-transfer}
\end{equation}
\end{corollary}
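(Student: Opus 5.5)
The plan is to work directly with the workload-average total variation metric
\[
d_q(d,e)=\sum_i p_i\,\TV\bigl(d(\cdot\mid x_i),e(\cdot\mid x_i)\bigr)
\]
and split every distance into a head part over $i\le N$ and a tail part over $i>N$.

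\textbf{Inclusion.} The inclusion $C\subseteq\widetilde P_N$ is immediate. For $d\in C$ we have $\pi_Nd\in\pi_NC\subseteq P_N$, so $d\in\widetilde P_N$ by definition.

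\textbf{Hausdorff bound.} Since $C\subseteq\widetilde P_N$, only one direction of the Hausdorff distance needs work. Take $e\in\widetilde P_N$, so $\pi_Ne\in P_N$. By the Euclidean Hausdorff bound there is $c\in\pi_NC$ with $\|\pi_Ne-c\|_2\le\varepsilon_N$ (the infimum is attained, using closedness and compactness of $C$ in the product topology). Lift $c$ to some $d\in C$ with $\pi_Nd=c$. Then split the distance.
\begin{itemize}
\item \emph{Tail.} The rows $i>N$ are arbitrary, but each TV is at most $1$, so the tail contributes at most $T_N(q)$.
\item \emph{Head.} Write $\TV=\tfrac12\|\cdot\|_1\le\tfrac{\sqrt a}{2}\|\cdot\|_2$ rowwise. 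Then
\[
\sum_{i\le N}p_i\|\Delta_i\|_2\le\Bigl(\max_i p_i\Bigr)^{1/2}\Bigl(\sum_i p_i\|\Delta_i\|_2^2\Bigr)^{1/2},
\]
or more crudely $p_i\le1$ together with $\sum_i p_i\|\Delta_i\|_2\le\sup_i\|\Delta_i\|_2\le\|\Delta\|_2$. This gives a head bound of $\tfrac{\sqrt a}{2}\varepsilon_N$.
\end{itemize}

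This head estimate is the main point needing care. The claimed constant requires that the weighted sum of row norms be dominated by the unweighted Euclidean norm, and this holds because $\sum_{i\le N}p_i\le1$ makes it a sub-convex combination of row norms, each bounded by the full $\ell_2$ norm. I would state that convexity step explicitly. The closedness/attainment point can be avoided by working with $\varepsilon_N+\epsilon'$ and letting $\epsilon'\downarrow0$.

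\textbf{Conditional head and risk transfer.} The conditional-head claim amounts to recognizing the head as a finite slice with $N$ states and conditional workload $p_i/(1-T_N)$. Its dimension is $N(a-1)$ and its minimum mass is $p_N/(1-T_N)$, so Corollary~\ref{cor:main-behavioral-identification} applies verbatim to $\pi_NC$.

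For the risk transfer, nonnegativity follows from $C\subseteq\widetilde P_N$. For the upper bound, write
\[
\mathcal L_{Q,\ell}(d)=\sum_i p_i\sum_a d(a\mid x_i)\,\bar\ell_i(a),\qquad 0\le\bar\ell_i\le L.
\]
Then $|\mathcal L(d)-\mathcal L(e)|\le L\,d_q(d,e)$, since each row difference is bounded by $L\cdot\TV$ after subtracting a constant shift. The Hausdorff bound then transfers to the infima, giving Eq.~\eqref{eq:main-countable-risk-transfer}.
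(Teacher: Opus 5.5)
Your proposal is correct and follows the paper's proof essentially step for step: the inclusion from $\pi_NC\subseteq P_N$, a head/tail split with the tail bounded by $T_N(q)$, the rowwise bound $\TV\le\tfrac{\sqrt a}{2}\|\cdot\|_2$ combined with $\sum_{i\le N}p_i\le1$, a slack $\zeta\downarrow0$ in place of attainment, and $L$-Lipschitz transfer of infima. The one slip is your first displayed Cauchy--Schwarz inequality, which is false with the factor $(\max_i p_i)^{1/2}$ (equal head masses with unit row norms already violate it for $N\ge2$); the correct factor is $(\sum_{i\le N}p_i)^{1/2}$, and in any case the sub-convex-combination bound you settle on is what the paper uses and is all that is needed.
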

\begin{proof}
Only the direction from $\widetilde P_N$ back to $C$ requires control. Fix $d\in\widetilde P_N$ and $\zeta>0$. By the Hausdorff bound, there exists $c\in C$ whose first $N$ rows satisfy
$\|\pi_Nd-\pi_Nc\|_2\le\varepsilon_N+\zeta$.
The unresolved tail contributes at most $T_N(q)$. On the retained head,
\[
\sum_{i\le N}p_i\TV(d_i,c_i)
\le \frac{\sqrt a}{2}\|\pi_Nd-\pi_Nc\|_2
\le \frac{\sqrt a}{2}(\varepsilon_N+\zeta).
\]
Adding head and tail and then letting $\zeta\downarrow0$ gives Eq.~\eqref{eq:main-countable-truncation}. Conditioning on the retained head normalizes the masses to $p_i/(1-T_N(q))$ and produces the stated finite-slice parameters. For $0\le\ell\le L$, expected loss is $L$-Lipschitz in workload-average total variation; nesting $C\subseteq\widetilde P_N$ and the Hausdorff bound give Eq.~\eqref{eq:main-countable-risk-transfer}.
\end{proof}

\paragraph{Evaluation geometry across scales.}
Corollary~\ref{cor:main-behavioral-identification} gives shape recovery inside a declared finite task/action abstraction; Theorem~\ref{thm:main-workload-tail-radius} gives the complementary coverage law on open-ended workloads; Corollary~\ref{cor:main-countable-truncation} joins the two. Define the effective workload size
\[
N_q(\delta):=\min\{N:T_N(q)\le\delta\}.
\]
Theorem~\ref{thm:main-workload-tail-radius} makes $N_q(2\delta)$ necessary and sufficient for $\delta$-accurate singleton recovery on a countable workload. For general envelopes, one chooses a head with $T_N(q)\le\delta/2$ and recovers that finite head to $\varepsilon_N\le\delta/\sqrt a$. Thus evaluation complexity decomposes into workload coverage and finite-head shape recovery; polynomial and geometric workload tails induce polynomial and logarithmic effective head sizes, respectively.

\paragraph{Finite evaluation and held-out prediction.}
The inner--outer gap $d_{\rm H}(I_k,P_U)$ is a data-dependent stopping certificate. Corollary~\ref{cor:main-behavioral-identification} transfers that certificate to decision objectives absent from probe construction. Across budgets it estimates the curve $s\mapsto\widehat\F_s$; matched evidence, support, realization-profile, and ceiling interventions then attribute envelope movement to $\phi$, $H$, $M$, or $s$.

\begin{figure*}[t]
\centering
\begin{tikzpicture}[>=Latex,
  panel/.style={draw=navy,rounded corners=1.8mm,fill=paleblue,align=center,inner sep=6pt,text width=0.27\textwidth,font=\small},
  result/.style={draw=teal,rounded corners=1.8mm,fill=paleteal,align=center,inner sep=6pt,text width=0.27\textwidth,font=\small}]
\node[panel] (terminal) {\textbf{Terminal expansion}\\[2pt]
$\widehat{\mathcal D}_{\mathcal J_1}\subseteq\widehat{\mathcal D}_{\mathcal J_2}$\\
The successor admits every predecessor risk-equivalent terminal policy and additional capability.};
\node[panel,right=9mm of terminal] (finite) {\textbf{Finite retention test}\\[2pt]
$\widehat\F_s(\mathcal J_1,M_1)\subseteq\widehat\F_s(\mathcal J_2,M_2)$\\
The predecessor's effective finite choices remain available at the common resource ceiling.};
\node[result,right=9mm of finite] (order) {\textbf{Deployment order}\\[2pt]
$\R^*_{s,2}(Q,\ell)\le\R^*_{s,1}(Q,\ell)$\\
for every finite target $\mathcal Y$,\\ task law $Q$ with full-support $Q_X$,\\ and bounded $\ell$.};
\draw[->,thick] (terminal) -- (finite);
\draw[<->,thick] (finite) -- (order);
\end{tikzpicture}
\caption{\textbf{The finite-budget deployment order.} Terminal capability and finite deployment capability are separate orders. Theorem~\ref{thm:main-budget-reversal} identifies finite-envelope retention as the exact condition for universal same-budget dominance. Under terminal expansion, failure of the finite retention test yields a bounded-loss reversal witness.}
\label{fig:finite-retention-order}
\end{figure*}

\paragraph{Decision representation and reversal.}
Theorem~\ref{thm:main-budget-reversal} identifies the complete same-budget decision object generated by $(\mathcal J,M,s)$: the closed convex finite envelope. Two deployed systems with the same envelope share every bounded-loss optimum at that budget; every missing predecessor point produces a strict bounded-loss witness under any prescribed full-support workload. Terminal order on $\widehat{\mathcal D}_{\mathcal J}$ is a separate comparison, so terminal expansion and finite deployment order can point in opposite directions. The reversal therefore expresses capability compatibility across an upgrade and is exposed by workload-specific decision witnesses.

A minimal reversal makes the distinction concrete. Let $X=(U,V)$ be uniform on $\{0,1\}^2$ and $Y=U\oplus V$. The interface $\phi(X)=U$ has structural risk floor $1/2$, while $\psi(X)=(U,V)$ admits zero terminal risk. At one common budget, let the predecessor realize the constant-zero rule with risk $1/2$ and the refined successor realize only the complement of XOR with risk $1$. Terminal capability expands while finite-budget risk reverses.

\paragraph{Exact restoration resource.}
For a scalar external resource axis, define the least target budget that restores the predecessor's risk-equivalent finite capability,
\begin{equation}
T^{\rm risk}_{1\to2}(s)
:=\inf\{t\ge0:\widehat\F_s(\mathcal J_1,M_1)
\subseteq\widehat\F_t(\mathcal J_2,M_2)\}.
\label{eq:main-resource-restoration}
\end{equation}
Writing $\R^*_{t,i}:=\R^*_{t,\mathcal J_i,M_i}$, Theorem~\ref{thm:main-budget-reversal} gives the equivalent decision representation
\begin{equation}
\resizebox{0.96\columnwidth}{!}{$
\begin{aligned}
T^{\rm risk}_{1\to2}(s)
=\inf\{t\ge0:\;&\R^*_{t,2}(Q,\ell)\le\R^*_{s,1}(Q,\ell)\\[-2pt]
&\text{for every finite target space $\mathcal Y$, task law $Q$ on $\mathcal X\times\mathcal Y$,}\\[-2pt]
&\text{with full-support $Q_X$ and every bounded $\ell:\A\times\mathcal Y\to[0,\infty)$}\}.
\end{aligned}
$}
\label{eq:main-resource-restoration-risk}
\end{equation}
Thus the same closed convex envelope yields both an exact same-budget order and the resource threshold at which a successor recovers the predecessor's universal bounded-loss capability. Retained fallbacks, compatibility routes, cached solvers, and low-cost tool paths implement this geometry directly by carrying old finite choices into a richer successor.

\paragraph{Literal and risk-equivalent restoration.}
The same upgrade can be measured at two strengths. Define
\begin{align}
T^{\rm exec}_{1\to2}(s)
&:=\inf\{t:\F_s(\mathcal J_1,M_1)\subseteq\F_t(\mathcal J_2,M_2)\},\\
T^{\rm risk}_{1\to2}(s)
&:=\inf\{t:\widehat\F_s(\mathcal J_1,M_1)\subseteq\widehat\F_t(\mathcal J_2,M_2)\}.
\label{eq:main-two-restoration-maps}
\end{align}
Literal policy simulation implies risk-equivalent simulation, hence
\begin{equation}
T^{\rm risk}_{1\to2}(s)\le T^{\rm exec}_{1\to2}(s).
\label{eq:main-restoration-order}
\end{equation}
The gap between the two curves measures the resource value of risk-equivalent replacement: a successor may recover every bounded-loss objective before it can literally reproduce every predecessor policy.

\paragraph{Approximate finite retention.}
Exact inclusion has a quantitative relaxation on the same total-variation scale. Let $\delta_{Q_X,s}(1\!\to\!2)$ denote the directed expected-TV deficiency from $\widehat\F_s(\mathcal J_1,M_1)$ to $\widehat\F_s(\mathcal J_2,M_2)$, defined formally in \hyperref[app:proofs]{Appendix Module I}. For $0\le\ell\le L$,
\begin{equation}
\R^*_{s,2}(Q,\ell)
\le
\R^*_{s,1}(Q,\ell)
+L\,\delta_{Q_X,s}(1\!\to\!2).
\label{eq:main-finite-deficiency-risk}
\end{equation}
With full-support $Q_X$, closed-envelope inclusion is exactly the zero-deficiency case; positive deficiency measures the bounded-loss price of approximate compatibility. \hyperref[app:proofs]{Appendix Module I} gives the formal definition and composition properties.

\paragraph{Upgrade-chain geometry.}
Restoration maps compose across successive architecture changes, with affine bounds separating multiplicative realization overhead from startup burden. Appendix~\ref{app:broader-scope} gives the composition law and literal/risk-equivalent variants.

\subsection{Why the static limits induce control}

The two static results generate a directional control problem. Structure-preserving realization remains bounded by the inherited structural floor, while structural enrichment creates a new successor realization pair with its own finite envelope. An adaptive LLM therefore chooses a capability direction at every nonterminal state:
\begin{equation}
\resizebox{0.98\columnwidth}{!}{$
\boxed{\text{realize the inherited class}}
\ \text{or}\ 
\boxed{\text{construct a successor class}}
$}
\label{eq:main-control-dilemma}
\end{equation}
Structural construction carries successor value. A retrieved distinction can make later reasoning useful; an enabled tool can create policies that become reachable after further computation. The successor class becomes the state space inherited by future control. This is the dynamic problem developed next.

\section{Dynamic control over inherited capability}
\label{sec:dynamic-core}

\subsection{Endpoint semantics: realization versus construction}

Control direction is defined by the structural decision class at the realized endpoint. Retrieval, tools, memory, clarification, routing, and reasoning can therefore occupy either direction depending on the successor they create. For a non-stop transition $S\xrightarrow{u}S'$, define
\begin{definition}[Endpoint structural novelty]
\label{def:main-endpoint-novelty}
\begin{align}
\chi_C^{\rm end}(S,u,S')
&:=\mathbf 1\!\left\{
\mathcal D_{\mathcal J(S')}
\nsubseteq
\mathcal D_{\mathcal J(S)}
\right\},\\
p_C(S,u)
&:=\mathbb E[\chi_C^{\rm end}(S,u,S')\mid S,u].
\label{eq:main-endpoint-novelty}
\end{align}
A non-stop action is \emph{Ockhamian} when $p_C(S,u)=0$: its successor remains within inherited structural possibility. It is \emph{Chattonian-containing} when $p_C(S,u)>0$: it can construct a successor decision class containing capability outside inherited simulation.
\end{definition}

This endpoint definition is state-relative. Retrieval is Ockhamian when it searches evidence already admitted by the effective interface and Chattonian-containing when it exposes a new task-relevant distinction. A tool call is Ockhamian when it selects an enabled mapping and Chattonian-containing when it changes executable support. The terminology names the two directions generated by the formal endpoint relation \cite{ockham,chatton}.

\begin{figure}[t]
\centering
\begin{tikzpicture}[
  >={Latex[length=2.2mm,width=1.6mm]},
  class/.style={draw=navy,rounded corners=1.5mm,fill=paleblue,align=center,
    inner xsep=4.5pt,inner ysep=4pt,text width=0.23\linewidth,minimum height=8mm,font=\small},
  act/.style={draw=teal,rounded corners=1.5mm,fill=paleteal,align=center,
    inner xsep=4.5pt,inner ysep=4pt,text width=0.23\linewidth,minimum height=8mm,font=\small},
  flow/.style={line width=0.8pt}]
\node[class] (dj) {$\mathcal D_{\mathcal J}$\\[-1pt]{\scriptsize inherited class}};
\node[act,right=0.28\linewidth of dj] (succ) {$\mathcal D_{\mathcal J'}$\\[-1pt]{\scriptsize successor class}};
\node[class,below=7.5mm of dj] (fs) {$\F_s(\mathcal J,M)$\\[-1pt]{\scriptsize finite realization}};
\node[act,right=0.28\linewidth of fs] (real) {\textbf{realize}\\[-1pt]{\scriptsize within $\mathcal D_{\mathcal J}$}};

% Structural construction and inheritance use separated, parallel straight arrows.
\draw[flow,->] ([yshift=2.3mm]dj.east) --
  node[midway,above=1.2mm,font=\scriptsize,text=muted]{construct}
  ([yshift=2.3mm]succ.west);
\draw[flow,<-] ([yshift=-2.3mm]dj.east) --
  node[midway,below=1.2mm,font=\scriptsize,text=muted]{inherit}
  ([yshift=-2.3mm]succ.west);

% Finite realization and within-class control.
\draw[flow,->] (dj) --
  node[midway,left=1.8mm,font=\scriptsize,align=right,text=muted]{budget\\[-1pt]mechanism}
  (fs);
\draw[flow,->] (fs) -- (real);
\end{tikzpicture}
\caption{\textbf{Focused capability-control loop.} The structural class $\mathcal D_{\mathcal J}$ is inherited; a deployed realization pair exposes $\F_s(\mathcal J,M)$. Control either realizes more of the inherited class or constructs a successor structural class, which becomes the capability inherited by later decisions. Appendix~\ref{app:broader-scope} gives the full hierarchy over asymptotic realization, reachability, selection, and meta-control.}
\label{fig:focused-loop}
\end{figure}

\subsection{Successor capability as a Bellman state}

Let $\R_{\rm reach}(S)$ denote the best loss among decisions currently reachable if the controller stops. For a non-stop action with successor $S'$,
\begin{align}
Q^*(S,u)
&=c(S,u)+\mathbb E[V^*(S')\mid S,u],\\
V^*(S)
&=\min\!\left\{\R_{\rm reach}(S),
\inf_{u\in\mathcal U(S)}Q^*(S,u)\right\}.
\label{eq:main-bellman}
\end{align}
The state is class-valued in the relevant sense: an action can change $\mathcal J(S')$ and hence the structural decision class on which all subsequent realization operates. Structural actions therefore carry option value. Evidence can create value through later computation; executable expansion can create a rule whose benefit appears after search or verification. Bellman value compares these trajectories on one scale.

These static limits sharpen the Ockham--Chatton dilemma. Static invariance says when continued realization has zero power against the active floor. Finite-budget retention says why an outward structural move must also be valued through the deployability of its successor. Dynamic control combines the two: spend on inherited capability while its realizable value remains high, and switch direction when changing the inherited boundary has greater complete trajectory value.

A two-step example shows why successor value is essential. Suppose the decisive bit lies outside $\phi$. A THINK action searches inside $\mathcal D_{\mathcal J}$ and remains above the structural floor. A REVEAL action changes the interface and creates $\mathcal D_{\mathcal J'}$. The next THINK action then operates inside $\mathcal D_{\mathcal J'}$ and can exploit the revealed distinction. The Bellman value $Q^*$ prices the capability inherited by this continuation, including gains that appear after the structural action.

Theorem~\ref{thm:main-budget-reversal} adds a second requirement to this trajectory view. A structural successor may unlock a lower terminal floor while dropping a useful finite policy. Bellman control therefore compares the complete successor realization pair together with the structural class it creates. This makes the dynamic state genuinely capability-valued: the same nominal action can be preferred or rejected depending on both the class it constructs and the finite envelope that the successor can realize.

\subsection{Saturation and persistent successor value}
For a scalar realization budget, fix $(\mathcal J,M,Q,\ell)$ and write
\begin{equation}
R(s):=\R^*_{s,\mathcal J,M}(Q,\ell),
\qquad
R_\infty:=\inf_{t\ge0}R(t).
\label{eq:main-saturation-risk}
\end{equation}
Nested finite envelopes make $R(s)$ nonincreasing and bounded below. Hence every fixed extra realization increment $h>0$ has vanishing marginal value:
\begin{equation}
\Delta_h(s):=R(s)-R(s+h)\longrightarrow0
\qquad (s\to\infty).
\label{eq:main-vanishing-realization-gain}
\end{equation}

\begin{proposition}[Persistent positive successor value dominates fixed realization increments]
\label{prop:main-eventual-boundary}
Let a boundary-changing action $c$ available at budget-indexed states $S_s$ have complete Bellman gain relative to stopping
\begin{equation}
G_C(s):=R(s)-\left(c(S_s,c)+\E[V^*(S'_s)\mid S_s,c]\right).
\label{eq:main-structural-gain}
\end{equation}
If $\liminf_{s\to\infty}G_C(s)>0$, then for every fixed $h>0$ there exists $s_0$ such that
\begin{equation}
G_C(s)>\Delta_h(s),\qquad s\ge s_0.
\label{eq:main-eventual-boundary-dominance}
\end{equation}
Thus a successor with persistent positive complete value eventually dominates spending another fixed increment of resource on realization inside the inherited class.
\end{proposition}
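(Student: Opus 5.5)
The plan is a direct limit comparison between the two sequences $G_C(s)$ and $\Delta_h(s)$; no earlier theorem is needed beyond the monotonicity recorded in Eq.~\eqref{eq:main-vanishing-realization-gain}.

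\emph{Step 1: the realization increment vanishes.} Budgets are nested, so $s\le t$ gives $\F_s(\mathcal J,M)\subseteq\F_t(\mathcal J,M)$. An infimum over a larger set is no larger, so $R$ is nonincreasing, and nonnegative loss gives $R\ge0$. I will take $R(s)<\infty$ for large $s$, which is the standing convention that finite-risk comparisons use budgets with nonempty envelopes and bounded loss. Then $R(s)\downarrow R_\infty\in[0,\infty)$. For fixed $h>0$, write
\[
0\le\Delta_h(s)=\bigl(R(s)-R_\infty\bigr)-\bigl(R(s+h)-R_\infty\bigr)\le R(s)-R_\infty .
\]
The right-hand side tends to $0$, so $\Delta_h(s)\to0$.

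\emph{Step 2: a uniform positive margin for the successor gain.} Set $g:=\liminf_{s\to\infty}G_C(s)>0$ and choose $\gamma:=\min\{g/2,1\}$, so that $0<\gamma<g$ even when $g=+\infty$. By the definition of $\liminf$, there is $s_1$ with $G_C(s)>\gamma$ for all $s\ge s_1$. By Step 1, there is $s_2$ with $\Delta_h(s)<\gamma$ for all $s\ge s_2$. Taking $s_0=\max\{s_1,s_2\}$ gives
\[
G_C(s)>\gamma>\Delta_h(s)\qquad\text{for all } s\ge s_0,
\]
which is Eq.~\eqref{eq:main-eventual-boundary-dominance}.

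\emph{Step 3: interpretation.} Comparing against stopping at $S_s$, whose stop value is $R(s)$, the quantity $\Delta_h(s)$ bounds what an extra increment $h$ of realization inside the inherited class can gain, namely moving the stop risk from $R(s)$ to $R(s+h)$. The successor gain $G_C(s)$ is measured on the same scale. The displayed inequality is therefore exactly the claimed dominance.

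\emph{Main obstacle.} The only delicate point is finiteness of $R$. If $R(s)=+\infty$ for small $s$, then $\Delta_h$ is undefined there. I would handle this by restricting to budgets where $\F_s\neq\varnothing$, which is eventually true under the paper's convention, and noting that bounded loss makes $R$ finite on that range. The rest is elementary; in particular, the argument never requires $G_C(s)$ to converge.
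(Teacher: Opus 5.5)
Your proof is correct and is the paper's argument. Nestedness makes $R$ nonincreasing and bounded below, so $\Delta_h(s)\to0$, and a threshold strictly between $0$ and $\liminf_{s\to\infty}G_C(s)$ then separates $G_C$ from $\Delta_h$ for all large $s$; your handling of the case $\liminf_{s\to\infty}G_C(s)=+\infty$ and of finiteness of $R$ on budgets with nonempty envelopes just makes explicit details the paper leaves implicit.
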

\begin{proof}
Nestedness gives $R(s)\downarrow R_\infty$, so both $R(s)$ and $R(s+h)$ converge to the same limit and $\Delta_h(s)\to0$. Choose $\eta>0$ below $\liminf_s G_C(s)$. For all sufficiently large $s$, $G_C(s)>\eta$ and $\Delta_h(s)<\eta$, proving the comparison.
\end{proof}

The proposition derives saturation from the finite-capability geometry itself. Its remaining premise has direct successor semantics: after action cost and future realization are priced, the constructed class continues to offer positive trajectory value. Theorem~\ref{thm:main-budget-reversal} determines whether that successor also retains predecessor finite capability, while restoration geometry measures the resource required when retention arrives later. Single-crossing or increasing-differences conditions sharpen eventual dominance into a unique threshold, and switching costs widen that threshold into hysteresis; Appendix~\ref{app:predictions} gives those refinements.

\paragraph{Two dynamic regimes.}
The saturation law yields an observable separation between realization-limited and boundary-limited trajectories. In a realization-limited regime, measured increments $R(s)-R(s+h)$ remain material and continued reasoning or search has direct finite value. Along a saturating path those increments collapse toward zero. A boundary-changing action enters the late-stage regime when its complete successor gain remains bounded away from zero after action cost and downstream realization are included. Restoration geometry then predicts how quickly that successor can carry predecessor finite capability forward. The empirical test is therefore carried by two measurable curves---marginal realization gain and successor restoration/value---whose crossing determines the transition.

\subsection{Recursive limit: the controller also inherits information}
\label{sec:main-recursive-limit}
Boundary control inherits the same information principle. Let $\mu(S)$ be the controller observation used to choose a direction. For two equiprobable states $S_0,S_1$ with $\mu(S_0)=\mu(S_1)$, the controller must use the same action distribution. If their optimal action sets are disjoint, every $\mu$-measurable controller has average optimal-action error at least
\begin{equation}
\R_{\rm dir}^*\ge\frac12.
\label{eq:main-meta-collision-floor}
\end{equation}
A controller observation that separates the states releases this floor. For a diagnostic action with induced probe laws $K_0^a,K_1^a$, the exact post-probe direction error is
\begin{equation}
\R_{\rm dir}^*(a)=\frac{1-\TV(K_0^a,K_1^a)}{2},
\label{eq:main-probe-error}
\end{equation}
and a wrong-direction cost $\lambda$ gives one-step diagnostic value
\begin{equation}
\operatorname{VoD}(a)=\frac{\lambda}{2}\TV(K_0^a,K_1^a)-c(a).
\label{eq:main-probe-value}
\end{equation}
The pair of formulas gives a recursive information law for control: task-level distinctions determine which action is useful, while meta-level distinctions determine whether the controller can recognize that direction. Additional deliberation through the same $\mu$ preserves the collision; diagnostic value scales exactly with the separation created by the probe. Appendix~\ref{app:predictions} develops this prediction, and Appendix~\ref{app:meta} develops substrate termination and the broader human-control comparison.

\paragraph{Meta-interface refinement.}
The recursive layer admits its own monotonicity law. Let a richer controller observation $\nu(S)$ refine $\mu(S)$ through $\mu=r\circ\nu$, and suppose every policy measurable through $\mu$ can be reproduced by a $\nu$-measurable controller. Then the refined controller policy class contains the original one and
\begin{equation}
V_\nu^*(S)\le V_\mu^*(S).
\label{eq:main-meta-refinement}
\end{equation}
When the additional signal in $\nu$ is reserved for meta-action selection, the task interface $\phi$ and executable support $H$ remain fixed, so the task-level structural floor $\R_{\mathcal J}^*$ is preserved while intervention value can improve. Making the same signal available to the terminal semantic decision promotes it into task evidence and can move $\phi$. The framework therefore distinguishes two uses of information within one recursive architecture: information that improves boundary selection and information that enlarges the boundary being selected over. This level-relative distinction gives a precise stopping point for the recursion whenever the current controller interface already separates the action families relevant to the state.

\subsection{Retention-aware boundary control}

The two risk floors locate the active direction relative to a target risk $\rho$. If
\begin{equation}
\R^*_{\mathcal J}(P,\ell)>\rho,
\label{eq:main-target-structural}
\end{equation}
then every structure-preserving trajectory remains above the target and a task distinction or executable consequence must change. If
\begin{equation}
\R^*_{\mathcal J}(P,\ell)\le\rho
<\R^*_{s,\mathcal J,M}(P,\ell),
\label{eq:main-target-realization}
\end{equation}
then the target lies inside inherited structural capability and outside current finite realization, directing control toward computation, search, realization redesign, or additional authorization.

When control moves structurally, Theorem~\ref{thm:main-budget-reversal} supplies the task-universal preservation criterion and $Q^*$ supplies the state-specific trajectory value. Envelope retention protects the predecessor's bounded-loss capability across workloads; Bellman value can still favor a nonretaining successor when its downstream task value compensates for the realization burden and action cost. The resulting control state prices three quantities together: the successor class created, the finite policies preserved, and the later reasoning enabled.

The two floors therefore induce three target-relative regimes:
\begin{center}
\small
\begin{tabular}{@{}p{2.25cm}p{2.55cm}p{2.25cm}@{}}
\toprule
Target location & Active limitation & Control direction \\
\midrule
$\rho<\R^*_{\mathcal J}$ & inherited structure & refine $\phi$ or expand $H$ \\
$\R^*_{\mathcal J}\le\rho<\R^*_{s,\mathcal J,M}$ & finite realization & compute, redesign $M$, or raise $s$ \\
$\R^*_{s,\mathcal J,M}\le\rho$ & finite-feasible target & reach, select, or stop \\
\bottomrule
\end{tabular}
\end{center}
The table gives the focused decision rule used in the main text. Appendix~\ref{app:broader-scope} refines the third regime into asymptotic realization, current reachability, and deployed selection.

\subsection{Predictions from the core theory}
\label{sec:main-predictions}

The static characterization and successor-class control state generate four direct signatures that can be tested across future model, context, retrieval, and tool upgrades.

\paragraph{Workload-stable reversal.}
For every prescribed full-support $X$-marginal, finite-envelope nonretention yields a bounded objective that reverses the same-budget order; retention or additional resource restores the predecessor envelope while terminal capability can remain expanded.

\paragraph{Restoration curves and upgrade overhead.}
$T^{\rm risk}_{1\to2}(s)$ and $T^{\rm exec}_{1\to2}(s)$ separate risk-equivalent recovery from literal simulation, turning compatibility into a resource curve whose composition tracks overhead across successive upgrades.

\paragraph{Saturation with persistent successor value.}
Nested finite realization makes every fixed extra resource increment vanish in risk value at saturation. Proposition~\ref{prop:main-eventual-boundary} then makes persistent positive successor value eventually dominate that fixed increment; single crossing and switching costs refine the comparison into a threshold and hysteresis band.

\paragraph{Recursive diagnostic scaling.}
When states requiring different action families collide at $\mu$, direction error remains at the collision floor; diagnostic probes release it according to induced total variation, with Eq.~\eqref{eq:main-probe-value} pricing the one-step value of that separation.

Appendix~\ref{app:predictions} develops these signatures together with switching-cost hysteresis and cross-level control imbalance; Appendix~\ref{app:meta} develops the human--LLM substrate comparison.

\section{Resulting capability system}
The system links inherited structure, finite deployment, measurement, and successor control. Theorem~\ref{thm:main-budget-reversal} gives the exact finite order; Corollary~\ref{cor:main-behavioral-identification} gives finite-slice recovery; Theorem~\ref{thm:main-workload-tail-radius} gives the workload-relative information radius beyond finite slices; restoration measures compatibility; and Bellman value prices the successor class together with its finite realization. Proposition~\ref{prop:main-eventual-boundary} and the recursive diagnostic law then determine when another fixed realization increment loses to persistent successor value and whether the controller can recognize that direction.

Measurement consequently has two complexity layers. The workload tail controls which task distinctions carry enough probability mass to enter a finite evaluation geometry; conditional on that retained head, $d_{\rm slice}$ and the smallest retained workload mass control directional and statistical shape recovery. Countable truncation makes these layers explicit, while non-atomic mass leaves a positive finite-information radius on the unrestricted singleton subclass.

\section{Related work}
\label{sec:relation-existing}
Relevant lines include emergence and test-time scaling \cite{wei2022,schaeffer2023,snell2025}; decision comparison and metareasoning \cite{blackwell1953,lecam1964,lieder2017}; capability evaluation and finite-information recovery \cite{hofstatter2025,brown2025task,traub1988}; and retrieval, tool use, and routing \cite{lewis2020,yao2023,schick2023,routellm2025}. Detailed theorem-level positioning appears in \hyperref[app:core-comparison]{Appendix Module V}, after the proofs, formal extensions, and experiments; that section states the theoretical increment of the representation, finite-budget order, measurement, restoration, and successor-class control results.

\section{Conclusion}
The emergence boundary separates additional realization from changes in inherited distinctions or executable semantics. Theorem~\ref{thm:main-budget-reversal} makes this operational: the closed convex finite envelope is the complete invariant of universal bounded-loss performance at a common resource ceiling, so terminal expansion can coexist with strict same-budget reversal when the successor drops predecessor capability from its finite envelope. Measurement recovers this object on finite slices and quantifies unresolved workload beyond them. Restoration prices compatibility, and successor-class Bellman value prices later inherited capability, organizing emergence around inheritance and boundary control.

\begin{textAtEnd}[category=foundation]
\section{Symbolization as a partial symbolic substructure}
\label{app:symbolization}

\subsection{Language as a consequence-preserving projection}

Fix a context $c$. Let $\mathcal T$ be a space of cognitive contents and $\Sigma^*$ the finite strings over an alphabet. Expression in that context is a partial map
\begin{equation}
E_c:\mathcal T\rightharpoonup\Sigma^*,\qquad
\mathcal T_{{\rm sym},c}=\operatorname{dom}(E_c)\subseteq\mathcal T.
\end{equation}
Let $\approx_c$ identify contents that are equivalent for the task in context $c$. We require it to be a congruence for the cognitive consequence relation: replacing premises or conclusions by $\approx_c$-equivalent contents does not change $\vdash_T$. Hence $\vdash_T$ is well-defined on $\mathcal T_{{\rm sym},c}/\!\approx_c$. For $M_c:\mathcal L\to\mathcal T_{{\rm sym},c}/\!\approx_c$, write $M_c(\Gamma)=\{M_c(\gamma):\gamma\in\Gamma\}$.

\begin{definition}[Symbolization--Substructure Thesis]
A language $(\mathcal L,\vdash_{\mathcal L})$ is a symbolic substructure of thought when:
\begin{align}
M_c(E_c(t))&=[t]_{\approx_c},\label{eq:realization}\\
\Gamma\vdash_{\mathcal L}s&\Longrightarrow M_c(\Gamma)\vdash_T M_c(s).\label{eq:sound-symbolization}
\end{align}
Equation~\eqref{eq:realization}, required whenever $t\in\mathcal T_{{\rm sym},c}$ and $E_c(t)\in\mathcal L$, makes language a context-indexed partial symbolic image. Equation~\eqref{eq:sound-symbolization} makes valid linguistic inference sound with respect to the task-relative inferential organization of the represented thought.
\end{definition}

Partiality, executability, and consequence preservation play distinct roles in the theory. Partiality determines which task-relevant distinctions survive into the effective quotient $\phi$. Executability determines the realizable rule class $\Pi_{\phi,H}$. Consequence preservation supplies the normative criterion operationalized through task-relative validity and loss. The interface determines what the system can distinguish, executable support determines what it can do, and task loss evaluates those actions against the represented inferential structure.

A language used as a vehicle of thought can be modeled, relative to a fixed context and task, as a formalized region of thought. Its expressions realize cognitive contents, its composition makes those contents publicly manipulable, and its consequence relation preserves a task-relevant part of their inferential organization. The distinction between form and meaning motivates partial symbolic preservation \cite{bender2020}; the distinction between formal and functional competence motivates the separation of symbolic fluency from world-directed ability \cite{mahowald2024}; and neurocognitive dissociations between language and non-linguistic thought motivate the architectural layering \cite{fedorenko2024}.

For an artificial system the inheritance chain is longer:
\begin{equation}
W\rightarrow T\rightarrow S\rightarrow D\rightarrow Z\rightarrow R\rightarrow A,
\label{eq:chain}
\end{equation}
where $W$ is the world, $T$ cognitive content, $S$ symbolized expression, $D$ recorded context, $Z$ representation, $R$ computation, and $A$ action.

\begin{proposition}[Information inheritance]
Assume the joint law factorizes according to Eq.~\eqref{eq:chain}, and fix its downstream Markov kernels. Then
\begin{align}
I(W;A)&\le I(W;R)\le I(W;Z)\le I(W;D),\\
I(W;D)&\le I(W;S)\le I(W;T).
\end{align}
If two conditional source laws---for example, two world or hypothesis conditions---induce the same law over $S$, then under the fixed downstream kernels they induce the same law over $A$.
\end{proposition}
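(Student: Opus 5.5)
The plan is to obtain both chains from the data-processing inequality for mutual information, applied along the Markov chain of Eq.~\eqref{eq:chain}, and the distributional statement from composing the fixed downstream kernels.

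\textbf{Markov reductions.} Factorization according to Eq.~\eqref{eq:chain} means the joint law has the form
\[
P_W(w)\,K_{T\mid W}(t\mid w)\,K_{S\mid T}(s\mid t)\cdots K_{A\mid R}(a\mid r).
\]
The first step is to record that, for any three variables $U\to V\to V'$ appearing in this order along the chain, $W$ and every variable downstream of $V$ are conditionally independent given $V$. This follows by summing out the intermediate variables: the conditional law of everything after $V$ given $(W,\dots,V)$ depends only on $V$. In particular $W\to X_k\to X_{k+1}$ is a Markov chain for each consecutive pair $(X_k,X_{k+1})$ in $(T,S,D,Z,R,A)$.

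\textbf{Data processing.} For a chain $W\to X_k\to X_{k+1}$, the chain rule gives
\[
I(W;X_k,X_{k+1})=I(W;X_k)+I(W;X_{k+1}\mid X_k)=I(W;X_{k+1})+I(W;X_k\mid X_{k+1}).
\]
The middle term $I(W;X_{k+1}\mid X_k)$ vanishes by conditional independence, and $I(W;X_k\mid X_{k+1})\ge 0$. Hence $I(W;X_{k+1})\le I(W;X_k)$. Applying this to the pairs $(T,S)$, $(S,D)$, $(D,Z)$, $(Z,R)$, $(R,A)$ gives every displayed inequality. If the variables are general rather than discrete, I would cite the measure-theoretic data-processing inequality, defining mutual information as the supremum over finite partitions (or as a KL divergence), so that the same monotonicity holds without density assumptions. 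This measure-theoretic technicality is the only real obstacle; everything else is bookkeeping.

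\textbf{Law-equality statement.} Let two source conditions, with conditional source laws $P^{(0)},P^{(1)}$ upstream of $S$, induce the same law $\mu$ on $S$. Because the downstream kernels $K_{D\mid S},K_{Z\mid D},K_{R\mid Z},K_{A\mid R}$ are fixed, the law of $A$ under condition $i$ is
\[
\mu K_{D\mid S}K_{Z\mid D}K_{R\mid Z}K_{A\mid R},
\]
using the Chapman--Kolmogorov composition. This depends on condition $i$ only through $\mu$, so the two laws of $A$ coincide. The Markov property is what ensures that conditioning on the source condition does not alter the kernels downstream of $S$.
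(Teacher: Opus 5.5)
Your proposal is correct and follows the paper's route. The paper's proof likewise applies the data-processing inequality successively along the Markov chain, and it obtains the law statement by composing the same fixed kernels from $S$ to $A$; your chain-rule derivation of each step and your remark that the Markov property leaves the downstream kernels unchanged under conditioning spell out what the paper leaves implicit.
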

\begin{proof}
Apply the data-processing inequality successively. Equality of the law at $S$ is preserved by composition with the same kernels from $S$ to $A$.
\end{proof}

This proposition supplies the bridge from philosophy to architecture: downstream computation can reorganize inherited distinctions; observation, recording, memory, and representation determine which distinctions enter the chain.

The inheritance chain has a quotient interpretation. The effective symbolic record identifies world states that produce the same downstream-accessible distinctions. For a deterministic interface, define
\begin{equation}
x\sim_\phi x'\quad\Longleftrightarrow\quad \phi(x)=\phi(x').
\end{equation}
The equivalence classes induced by $\phi$ form the downstream state representation. Computation may reorganize relations among these classes, combine preserved distinctions, and implement increasingly powerful rules over them. Splitting a class whose members remain identical at the effective interface requires a change in observation, memory, or representation that refines the effective interface. Expanding executable support can add mappings or actions over the existing classes, but it cannot by itself distinguish members of one interface fiber.

The decisive issue is the alignment between the symbolic partition and the task. A fiber is harmless when its members support the same optimal response; it becomes limiting when the interface merges states that require different actions. Collision floors therefore measure a mismatch between the distinctions preserved by the system and the distinctions demanded by the task.

\subsection{Three functional incompletenesses}

We organize symbolic intelligence through three functional questions. Each question identifies a distinct stage at which task-relevant structure can be lost or underused.

\emph{Substantive incompleteness} answers: what content enters the symbol system? It is the information loss produced when a situated world or cognitive content is projected into a symbolic record. Objects, practices, perception, action, history, and causal contact are only partially preserved; distinctions that never enter the effective interface cannot be recovered by later computation.

\emph{Substrate incompleteness} answers: by what structure does the system form task-relevant distinctions? Symbolic operations presuppose domains, objects, relations, similarity, salience, provenance, and executable semantics. The model may possess the relevant content yet lack the structure needed to individualize an object, route to the correct evidence, or map a program description to an executable action. Training, architectural bias, tools, and external observation provide functional routes by which token prediction acquires these structures. Their adequacy is measured by the task distinctions and executable consequences they support.

\emph{High-level incompleteness} answers: how does the system organize available distinctions into task behavior? Hypothesis formation, search, verification, calibration, action, and stopping are assembled over the distinctions already available. When fluent symbolic completion outruns those distinctions, the observable signatures include miscalibration \cite{mielke2022} and confabulation-type hallucination \cite{farquhar2024}, together with overthinking and missing-premise overreasoning \cite{chen2025overthink,fan2025,zhang2026overthink}. This layer has two sources: inherited defects from the first two layers, and control defects that persist even when relevant content and structure are present.

The three functional layers form a sequence,
\begin{equation}
\text{content entry}\rightarrow\text{distinction formation}\rightarrow\text{task organization and control}.
\end{equation}
Their task-level consequences are captured by four interacting mathematical objects. The effective interface $\phi$ records which distinctions are available; the executable class $\Pi_{\phi,H}$ records which interface--action mappings the current symbolic and computational environment can realize; the realization profile $M$ records the inference, search, optimization, compilation, scheduling, and runtime mechanism that maps inherited capability into realizable decisions; and the resource-indexed family $\F_s(\mathcal J,M)$ records how much of that capability is realizable under ceiling $s$. Substantive defects primarily restrict the distinctions entering $\phi$. Substrate defects can alter both the represented distinction structure and the executable mappings supported by the system. High-level defects appear in finite realization, search, objectives, stopping, and meta-control. The three philosophical layers therefore project into the mathematical system through overlapping operational effects.

\subsection{Four consequences for bounded symbolic agency}
The Symbolization--Substructure Thesis supplies a language-first route to bounded information, executable mediation, finite realization, and finite diagnostic access. These formal conditions, together with trajectory cost and deployment burden, induce the dynamic choice between managing inherited structural possibilities and selecting interventions that may introduce endpoint structural novelty.

First, symbolization induces a task-relative quotient. States that are distinct in the situated world may occupy the same symbolic fiber, and every fixed downstream computation inherits that identification. Collision floors are properties of the induced quotient.

Second, symbolic availability and executable support are different resources. A system may generate a predicate, program, tool description, or plan while lacking an interpreter--environment pair that realizes the corresponding interface--action mapping. Capability expansion occurs when the executable rule class expands.

Third, structural self-correction is symbolically mediated. A controller that chooses whether to compute, retrieve, observe, expand, prune, refine, or stop acts through a finite diagnostic interface. The collision logic therefore reappears at the control layer.

Fourth, bounded symbolic agency faces a recurrent directional dilemma. Control over inherited structure allocates finite realization and limits unnecessary commitment without enlarging the current structural possibility set. Structural enrichment supplies missing distinctions and mappings while increasing realization, routing, selection, and control burden. Finite intelligence must repeatedly choose between these directions.

Symbolic partiality explains bounded information; symbolic production interacting with executable mediation explains the separation between description and action; symbolically mediated self-correction supplies finite diagnostic access. Finite realization, trajectory cost, and deployment burden then turn these coordinates into a control problem. For language-first systems, symbolic substructure supplies an architectural route to bounded information, executable mediation, finite realization, and finite diagnostic access. Once these coordinates are explicit, Ockham--Chatton control is the trajectory-level choice between organizing inherited possibilities and transforming the structural possibility set.

These consequences motivate four task-level coordinates: an effective information interface, executable policy support, a realization profile, and a terminal realization ceiling, followed by a meta-interface for dynamic control. Consequence preservation supplies the evaluative bridge from symbolic inference to decision risk: executable mappings are judged by the task-relative consequences they preserve or violate. A cost-effective sufficient structure is one local outcome of this control process. The general principle concerns the persistence of the directional choice across task, architectural, and meta-control levels.

For tasks whose loss encodes violations of the required consequence structure, consequence preservation gives the control dilemma its task-relative norm. Ockhamian contraction preserves structural attainability when the contracted joint floor remains at or below the target. A task-level structural transformation is required when the inherited joint floor exceeds the target. Monotone structural enlargement is the direct case; a non-comparable replacement can be represented relative to a common envelope as outward enlargement followed by inward contraction.

For a contracted structure $(\phi',H')$, the target remains attainable when
\begin{equation}
\R_{\phi',H'}^*\le\rho.
\end{equation}
When
\begin{equation}
\R_{\phi,H}^*>\rho,
\end{equation}
reaching the target requires task-interface refinement or executable expansion. Structural attainability is therefore the joint-floor condition $\R_{\mathcal J}^*\le\rho$, while budget-realization attainability under a declared realization profile and ceiling is $\R_{s,\mathcal J,M}^*\le\rho$. Current pathwise attainability additionally depends on whether the relevant decision has entered $\F_{\rm stop}(S)$.

These formal coordinates generate the theorem structure that follows. Task-relative quotienting yields the collision and task-interface-refinement results in the static theory. The separation between symbolic production and executable support yields the joint floor and the criterion for genuine executable expansion. Bounded realization and control cost convert these static structures into the Ockham--Chatton necessity and finite-budget risk-reversal results. Symbolically mediated self-correction reappears in the extended meta-control analysis as meta-diagnosis, meta-action, and directional-control floors.

The Symbolization--Substructure Thesis supplies the language-first architectural route to the formal conditions. Its philosophical role is to explain why partial information, executable mediation, finite realization, and finite diagnostic access arise naturally in symbolic systems. The decision-theoretic layer begins once a deployed agent induces an effective interface $\phi$, executable kernel class $\Pi_{\phi,H}$, realization profile $M$, realization-cost map $c_{\mathcal J,M}$, meta-interface $\mu$, controller kernel $\pi(\cdot\mid\mu(S))$, and task objective. From that point onward, capability geometry and structural control operate on these explicit bounded-agent objects and apply beyond language-first architectures that instantiate the same conditions.

The history of deep learning can be read as a sequence of partial responses to these incompletenesses. Residual connections and persistent optimization changes can reshape the realization profile $M$, while chain of thought and adaptive computation alter pathwise reachability or terminal resource use depending on whether the profile and ceiling are held fixed \cite{he2016,wei2022cot,graves2016}. Retrieval, writable memory, active observation, and recurrent state refine the distinctions exposed to the system \cite{graves2014,lewis2020}. Tools, program interpreters, and extensible hypothesis languages expand executable support. Attention and positional structure shape both the effective distinction interface and the realization profile through which accessible distinctions are exploited \cite{vaswani2017,su2021}. These mechanisms occupy different locations in the theory, even when they produce similar surface improvements.

The central question is therefore sharper than whether larger systems become more capable. It is which gains arise from closing a compensation gap, which arise from refining the effective interface, and which arise from expanding executable support.

\section{Interfaces, executable support, and finite realization}
\label{app:interfaces}

We formalize the task-level consequences of symbolic incompleteness through four interacting coordinates: an effective information interface, an executable rule class, a realization profile, and a resource ceiling.

Let $(X,Y)\sim P$ be a fixed task joint distribution on standard Borel spaces $\mathcal X\times\mathcal Y$. Let $\A$ be a task-relative \emph{ambient semantic action universe}: every terminal semantic action used in architecture comparison is represented in this common standard Borel space. An architecture with an internal action space $\mathcal A_{\mathcal J}$ is compared through a measurable embedding $e_{\mathcal J}:\mathcal A_{\mathcal J}\hookrightarrow\A$ and the corresponding pushforward decision kernels. A semantic action that is unavailable to a particular architecture can therefore remain an element of $\A$ while lying outside that architecture's executable support. Let $\ell:\A\times\mathcal Y\to[0,\infty)$ be measurable. A measurable map $\phi:\mathcal X\to\mathcal Z$ is the \emph{effective interface}: it determines which distinctions the deployed system actually inherits. To treat deterministic decoding and sampling in one formalism, write $\Ker(\mathcal Z,\A)$ for the Markov kernels from $\mathcal Z$ to $\A$, and identify a deterministic rule $g:\mathcal Z\to\A$ with the Dirac kernel $\delta_{g(z)}$. The language--interpreter--environment complex matters through the executable kernel class
\begin{equation}
\Pi_{\phi,H}\subseteq\Ker(\mathcal Z,\A),
\end{equation}
whose members are the interface--action distributions that the parser, interpreter, tools, permissions, environment, and executable semantic randomization can realize. For $\kappa\in\Pi_{\phi,H}$, the induced task-level decision kernel is
\begin{equation}
(\kappa\circ\phi)(B\mid x):=\kappa(B\mid\phi(x)).
\end{equation}

\begin{definition}[Joint structural decision class and risk envelope]
For a joint structure $\mathcal J=(\phi,H)$, define its literal executable decision class on the task space by
\begin{equation}
\mathcal D_{\mathcal J}:=\{\kappa\circ\phi:\kappa\in\Pi_{\phi,H}\}\subseteq\Ker(\mathcal X,\A).
\label{eq:jointdecisionclass}
\end{equation}
The class $\mathcal D_{\mathcal J}$ records decision kernels the deployed language--interpreter--environment complex can actually realize in the common ambient semantic action universe. Executable expansion changes which kernels into $\A$ are realizable; it does not require changing the comparison space itself. When $\mathcal X$ and $\A$ are finite, define the associated expected-risk envelope
\begin{equation}
\widehat{\mathcal D}_{\mathcal J}:=\clco(\mathcal D_{\mathcal J}).
\label{eq:riskenvelope}
\end{equation}
Expected loss is linear and continuous in a finite decision kernel, so $\mathcal D_{\mathcal J}$ and $\widehat{\mathcal D}_{\mathcal J}$ have the same optimal bounded expected risk. The two objects carry different semantics: $\mathcal D_{\mathcal J}$ is literal executability, whereas $\widehat{\mathcal D}_{\mathcal J}$ is the capability envelope identified by universal expected-risk comparison.
\end{definition}

The measurable agent model is stated on standard Borel spaces. The finite-dimensional capability geometry used below has a sharper scope: whenever closed convex capability envelopes, universal finite-dimensional risk characterizations, directed deficiency, or capability-pair orders are invoked, the terminal task state and ambient semantic action spaces are taken finite unless an explicit extension is stated. The dynamic state space and controlled transition kernel may remain standard Borel. This separates general measurable dynamics from the finite terminal geometry used for exact risk-envelope comparison.

Let $(\mathcal R,\preceq)$ be a common externally chosen resource space and let $\overline{\mathcal R}=\mathcal R\cup\{+\infty\}$ adjoin a top element. Cross-architecture comparison is meaningful only after the resource coordinates have a shared operational interpretation, such as latency, energy, memory, FLOPs, tool cost, or a declared weighted vector. Structural capability and realization mechanism are separate coordinates. A \emph{realization profile} $M$ specifies the inference/search procedure, optimization dynamics, runtime implementation, internal scheduling, compilation strategy, realization-specific routing, and other mechanisms that determine how decisions in $\mathcal D_{\mathcal J}$ become realizable. The deployed realization object is therefore $(\mathcal J,M)$, with cost map
\begin{equation}
c_{\mathcal J,M}:\mathcal D_{\mathcal J}\to\overline{\mathcal R}.
\label{eq:realizationprofilecost}
\end{equation}
Its at-most-resource family is
\begin{equation}
\F_s(\mathcal J,M):=\{d\in\mathcal D_{\mathcal J}:c_{\mathcal J,M}(d)\preceq s\}.
\label{eq:resourcefamily}
\end{equation}
When $\mathcal X$ and $\A$ are finite, define the finite-budget expected-risk envelope
\begin{equation}
\widehat{\F}_s(\mathcal J,M):=\clco(\F_s(\mathcal J,M)).
\label{eq:finiteriskenvelope}
\end{equation}
Thus $\mathcal J=(\phi,H)$ determines structural capability, $M$ determines the resource-to-capability map within that structure, and $s$ selects an at-most-resource slice of the resulting realization family. The common external resource space permits cross-architecture comparison even when the same semantic decision has different implementation costs. Exact-resource families are recovered by taking $c_{\mathcal J,M}(d)$ to be the least resource level at which $d$ is realizable. A realization-mechanism transformation can therefore change $M$, $\F_s$, and $\F_\infty$ while keeping $\mathcal D_{\mathcal J}$ fixed.

\begin{definition}[Executable and risk-envelope orders]
For finite $\mathcal X$ and $\A$, define the terminal executable order and terminal risk-envelope order by
\begin{align}
\mathcal J_1\preceq_{\rm exec}\mathcal J_2
&\quad\Longleftrightarrow\quad
\mathcal D_{\mathcal J_1}\subseteq\mathcal D_{\mathcal J_2},\label{eq:execorder}\\
\mathcal J_1\preceq_{\rm risk}\mathcal J_2
&\quad\Longleftrightarrow\quad
\widehat{\mathcal D}_{\mathcal J_1}\subseteq\widehat{\mathcal D}_{\mathcal J_2}.
\label{eq:riskorder}
\end{align}
At a fixed common budget $s$, define finite realization orders on deployed pairs $(\mathcal J_i,M_i)$ by
\begin{align}
(\mathcal J_1,M_1)\preceq_{{\rm exec},s}(\mathcal J_2,M_2)
&\quad\Longleftrightarrow\quad
\F_s(\mathcal J_1,M_1)\subseteq\F_s(\mathcal J_2,M_2),\\
(\mathcal J_1,M_1)\preceq_{{\rm risk},s}(\mathcal J_2,M_2)
&\quad\Longleftrightarrow\quad
\widehat{\F}_s(\mathcal J_1,M_1)\subseteq\widehat{\F}_s(\mathcal J_2,M_2).
\label{eq:finiteriskorder}
\end{align}
Executable dominance implies risk-envelope dominance at both levels. Write $\mathcal J_1\sim_D\mathcal J_2$ when $\widehat{\mathcal D}_{\mathcal J_1}=\widehat{\mathcal D}_{\mathcal J_2}$. These are terminal capability orders; dynamic architecture comparison additionally depends on reachable structures, meta-actions, transition laws, diagnostic interfaces, and costs.
\end{definition}

For fixed $\mathcal J$, realization profiles can also be quotiented by the resource-indexed capability family they induce. Write
\begin{align}
M_1\sim^{\rm exec}_{\mathcal J}M_2
&\quad\Longleftrightarrow\quad
\F_s(\mathcal J,M_1)=\F_s(\mathcal J,M_2)\ \ \forall s,\\
M_1\sim^{\rm risk}_{\mathcal J}M_2
&\quad\Longleftrightarrow\quad
\widehat{\F}_s(\mathcal J,M_1)=\widehat{\F}_s(\mathcal J,M_2)\ \ \forall s.
\end{align}
Thus the formal identity of a realization profile is operational: implementations are equivalent at the literal or risk-envelope level when they induce the same resource-indexed capability family.

The four formal coordinates separate inherited capability from its realization into
\begin{equation}
\boxed{
\begin{array}{cc}
\phi:\text{available distinctions} & \Pi_{\phi,H}:\text{executable possibilities}\\[3pt]
M:\text{realization mechanism} & s:\text{terminal realization ceiling}
\end{array}}
\end{equation}

They generate a two-dimensional capability geometry. The horizontal axis passes from literal executability to the closed convex envelope identified by bounded expected-risk comparison; the vertical axis passes from finite-resource realization to the full capability of the inherited joint structure:
\begin{equation}
\boxed{
\begin{array}{ccc}
\mathcal D_{\mathcal J} & \longrightarrow & \widehat{\mathcal D}_{\mathcal J}\\[4pt]
\cup && \cup\\[-2pt]
\F_s(\mathcal J,M) & \longrightarrow & \widehat{\F}_s(\mathcal J,M).
\end{array}}
\label{eq:capabilitysquare}
\end{equation}
The upper row is the \emph{structural capability boundary}: literal executable kernels on the left and their universal expected-risk envelope on the right. The lower row is the \emph{realization boundary} at budget $s$: literal finite realizability on the left and finite-budget expected-risk capability on the right. Accordingly, $\preceq_{\rm exec}$ and $\preceq_{\rm risk}$ compare the structural row, while $\preceq_{{\rm exec},s}$ and $\preceq_{{\rm risk},s}$ compare the finite-resource row. Structural risk dominance can therefore coexist with a finite-budget reversal because the two comparisons occupy different vertical levels of the same capability square.

\begin{definition}[Capability boundaries and capability order]
The term \emph{boundary} denotes a set-valued capability object ordered by inclusion. Define
\begin{equation}
\mathfrak B_{\rm str}(\mathcal J)
:=\bigl(\mathcal D_{\mathcal J},\widehat{\mathcal D}_{\mathcal J}\bigr),
\qquad
\mathfrak B_s(\mathcal J,M)
:=\bigl(\F_s(\mathcal J,M),\widehat{\F}_s(\mathcal J,M)\bigr).
\label{eq:capboundaries}
\end{equation}
For capability pairs $\mathfrak B_1=(L_1,R_1)$ and $\mathfrak B_2=(L_2,R_2)$, write
\begin{equation}
\mathfrak B_1\preceq_{\rm cap}\mathfrak B_2
\quad\Longleftrightarrow\quad
L_1\subseteq L_2\ \text{ and }\ R_1\subseteq R_2.
\label{eq:capboundaryorder}
\end{equation}
On the canonical capability pairs used in the paper, $R_i=\clco(L_i)$, so
\begin{equation}
\mathfrak B_1\preceq_{\rm cap}\mathfrak B_2
\quad\Longleftrightarrow\quad
L_1\subseteq L_2.
\label{eq:strongcaporder}
\end{equation}
Thus $\preceq_{\rm cap}$ is the strong literal-preserving capability order: every old literal decision remains simulable after an outward move. A transition is \emph{strongly outward} when the pre-transition boundary is below the post-transition boundary in $\preceq_{\rm cap}$, \emph{strongly inward} when the reverse order holds, and a \emph{literal reconfiguration} when the literal components are incomparable. The column-specific risk-envelope order remains distinct: $R_1\subseteq R_2$ can hold even when $L_1\not\subseteq L_2$, producing a risk-outward but literal-reconfiguring transformation. Capability order geometry therefore records strong literal simulation, while $\preceq_{\rm risk}$ records the weaker universal bounded-risk comparison. Directed deficiency supplies approximation geometry when exact risk inclusion fails; $T^{\rm exec}$ and $T^{\rm risk}$ supply the corresponding resource geometries for finite capability simulation.
\end{definition}

For a task law $Q$ on $\mathcal X\times\mathcal Y$, a bounded or nonnegative measurable loss $\ell$, and a decision kernel $d\in\Ker(\mathcal X,\A)$, define
\begin{equation}
\mathcal L_{Q,\ell}(d):=\E_{(X,Y)\sim Q}\!\left[\int_{\A}\ell(a,Y)\,d(da\mid X)\right].
\label{eq:kernelrisk}
\end{equation}
When the loss is fixed by context, write $\mathcal L_Q=\mathcal L_{Q,\ell}$ and likewise suppress $\ell$ from the risk notation. The joint structural floor, finite-budget optimum, and compensation gap are
\begin{align}
\R_{\mathcal J}^*(Q)&:=\inf_{d\in\mathcal D_{\mathcal J}}\mathcal L_Q(d),\label{eq:jointfloor}\\
\R_{s,\mathcal J,M}^*(Q)&:=\inf_{d\in\F_s(\mathcal J,M)}\mathcal L_Q(d),\\
C_s(\mathcal J,M;Q)&:=\R_{s,\mathcal J,M}^*(Q)-\R_{\mathcal J}^*(Q).
\end{align}
When a realization profile is fixed by context, we retain the compact shorthand $\R_{s,\phi,H}^*=\R_{s,\mathcal J,M}^*$ and $C_s(\phi,H)=C_s(\mathcal J,M)$ alongside $\R_{\phi,H}^*=\R_{\mathcal J}^*$. Throughout the static theory the task law is the fixed $P$; the explicit law argument is restored in the dynamic theory, where observations and interventions change the conditional task law.

\begin{proposition}[Order hierarchy]
For finite $\mathcal X$ and $\A$,
\begin{equation}
\mathcal J_1\preceq_{\rm exec}\mathcal J_2
\Longrightarrow
\mathcal J_1\preceq_{\rm risk}\mathcal J_2.
\end{equation}
The analogous implication holds at every fixed budget $s$.
\end{proposition}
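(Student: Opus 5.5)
The plan is to use monotonicity of the closed convex hull under inclusion. Suppose $\mathcal J_1\preceq_{\rm exec}\mathcal J_2$, which means $\mathcal D_{\mathcal J_1}\subseteq\mathcal D_{\mathcal J_2}$. Because $\mathcal X$ and $\A$ are finite, both classes sit inside the finite-dimensional kernel polytope $\Ker(\mathcal X,\A)$. So $\widehat{\mathcal D}_{\mathcal J_i}=\clco(\mathcal D_{\mathcal J_i})$ is well defined there, taken with the Euclidean topology on the coordinates $d(a\mid x)$.

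The key step is the elementary fact that $A\subseteq B$ implies $\clco(A)\subseteq\clco(B)$. I will argue it in two stages.
\begin{itemize}
\item First, $\operatorname{co}(A)\subseteq\operatorname{co}(B)$. Every finite convex combination of points of $A$ is also a finite convex combination of points of $B$.
\item Second, closure is inclusion-monotone, so $\operatorname{cl}\operatorname{co}(A)\subseteq\operatorname{cl}\operatorname{co}(B)$.
\end{itemize}
Alternatively, $\clco(B)$ is a closed convex set containing $B\supseteq A$. Since $\clco(A)$ is the smallest closed convex set containing $A$, the inclusion follows directly. Applying this with $A=\mathcal D_{\mathcal J_1}$ and $B=\mathcal D_{\mathcal J_2}$ gives $\widehat{\mathcal D}_{\mathcal J_1}\subseteq\widehat{\mathcal D}_{\mathcal J_2}$, which is $\mathcal J_1\preceq_{\rm risk}\mathcal J_2$ by Eq.~\eqref{eq:riskorder}.

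For the fixed-budget statement I will repeat the argument verbatim on the lower row of the capability square~\eqref{eq:capabilitysquare}. Here $(\mathcal J_1,M_1)\preceq_{{\rm exec},s}(\mathcal J_2,M_2)$ means $\F_s(\mathcal J_1,M_1)\subseteq\F_s(\mathcal J_2,M_2)$. These are again subsets of the same kernel polytope, because both are contained in the respective $\mathcal D_{\mathcal J_i}\subseteq\Ker(\mathcal X,\A)$. Monotonicity of $\clco$ then yields $\widehat\F_s(\mathcal J_1,M_1)\subseteq\widehat\F_s(\mathcal J_2,M_2)$. If the budget leaves a family empty, the convention $\clco(\varnothing)=\varnothing$ keeps the implication trivially valid.

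There is no real obstacle here. The only point needing care is that both hulls are taken in one common ambient space, namely kernels into the shared semantic action universe $\A$ via the embeddings $e_{\mathcal J}$, so that the inclusion of hulls is meaningful. As a remark, I could note the risk-level consequence through Theorem~\ref{thm:main-budget-reversal}: universal bounded-loss dominance follows. That consequence is not needed for the order implication itself.
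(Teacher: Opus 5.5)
Your proof is correct and matches the paper's, which is the one-line observation that set inclusion is preserved by convexification and closure, applied to the structural row and then to the fixed-budget row. Your added remarks on the common ambient kernel space and the empty-family convention are harmless extras that the paper's argument does not need.
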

\begin{proof}
Set inclusion is preserved by convexification and closure.
\end{proof}

\subsection{Architectural grounding of the formal conditions}

The mathematical conditions correspond to concrete elements of current reasoning systems. Persistent changes to an inference or execution procedure alter the realization profile $M$ when they change the resource-to-capability map while leaving the inherited decision class fixed. Adaptive computation, cost-aware test-time reasoning, and model routing provide concrete examples \cite{graves2016,snell2025,desabbata2024,routellm2025,bestroute2025}; compiler, runtime, and scheduling changes occupy the same coordinate when they have the same formal effect. External test-time authorization changes the terminal realization ceiling $s$, such as a larger FLOP, latency, memory, or tool-call allowance. Computation performed under a fixed $(\mathcal J,M,s)$---decoding, sampling, verification, branch search, revision, reranking, or adaptive reasoning length---changes the pathwise state $q$ and the immediately reachable stop family while the ceiling stays fixed \cite{wei2022cot,adaptthink2025,refrain2026}. Retrieval systems provide concrete examples of interface refinement when they expose new task-relevant evidence \cite{lewis2020,flare2023,selfrag2024,adaptiverag2024,autosearch2026}; active observation and memory restoration occupy the same coordinate when they reveal distinctions previously unavailable to the terminal decision. Toolformer and ReAct illustrate tool-enabled executable interaction \cite{schick2023,yao2023}, while TECTON and MeCo illustrate adaptive tool selection or invocation over enabled semantics \cite{tecton2025,meco2025}. Error-source, confidence, validity, cost, and stopping signals form the diagnostic interface used by the meta-controller.

The structure--realization boundary is classified by its realized coordinate effect. Write $r^{\rm access}$ for routing that changes which task-relevant distinctions are accessible to the terminal semantic decision and $r^{\rm search}$ for routing that only searches, ranks, organizes, or selects computations over distinctions already accessible. Access routing acts through $\iota$ and $\phi$; search routing acts through the algorithmic state $q$, the immediately reachable stop family, and realization cost. Retrieval, memory, attention, context selection, compression, and candidate routing can instantiate either type. A retrieved document that exposes a previously unavailable fact is structural refinement; reranking the same already accessible documents is a realization intervention. This effect-based rule keeps the central distinction stable across changing agent mechanisms.

Finite digital systems provide the basic measurable structure. Token sequences over finite vocabularies form countable standard Borel spaces; finite action menus are measurable; deterministic decoders are measurable compositions of neural operations; stochastic decoders are Markov kernels. Standard Borel spaces preserve regular conditional probabilities, measurable factorization, and decision-theoretic comparison. The deterministic interface $\phi:X\to Z$ is the Dirac special case of a stochastic observation experiment $O_\phi\in\Ker(X,Z)$ with $O_\phi(\cdot\mid x)=\delta_{\phi(x)}$; noisy sensing, retrieval, and active observation fit the same information side through observation kernels when that generality is needed.

Benchmarks and deployed agents can enforce the comparison conditions directly. Fixed prompts, memory states, tool schemas, environment permissions, and intervention menus define controlled joint structures. Exact collisions hold the complete visible interface fixed while changing the correct label or action. Masking, gating, pruning, compression, and stopping instantiate contraction when every retained behavior remains simulable inside the inherited structure.

Markov sufficiency is obtained at the analyst level by augmenting the state with the conditional task law induced by the interaction history, together with active memory, tool state, environment state, architecture state, and remaining budget. Measurable costs and transitions then support the Bellman analysis. The boundary between realization and structure is operational: reorganizing already available information changes finite realization, while acquiring a task-visible observation, restoring an inaccessible task record, or enabling a new semantic action changes a structural coordinate. A signal used only for meta-action choice refines $\mu$ and the dynamic policy class without entering the task interface $\phi$.

\begin{table}[t]
\centering
\caption{Formal objects and their reasoning-architecture interpretation.}
\label{tab:assumptions}
\small\setlength{\tabcolsep}{4pt}
\begin{tabularx}{\textwidth}{@{}p{2.2cm}Xp{3.5cm}@{}}
\toprule
Object or condition & Architectural interpretation & Operational realization \\
\midrule
$s$ & terminal realization ceiling: authorized latency, tokens/FLOPs, memory, tool-call allowance & budget family $\F_s(\mathcal J,M)$ \\
$M$ & persistent inference/search/optimization/runtime mechanism & resource-to-capability map $c_{\mathcal J,M}$ and asymptotic family $\F_\infty(\mathcal J,M)$ \\
$\phi$ & selected context, retrieved memory, observations, access routing $r^{\rm access}$ & controlled interface / accessible distinctions \\
$\Pi_{\phi,H}$ & parser, tool schema, runtime, permissions, semantic actions & executable interface--action class \\
$q$ & traces, candidates, search routing $r^{\rm search}$, branch state & path-dependent reachability inside fixed $(\mathcal J,M,s)$ \\
$\F_{\rm stop}(S)$ & kernels selectable if the controller stops now & state-reachable decision family \\
$\mu(S)$ & error-source, confidence, cost, validity, stopping signals & diagnostic state for meta-control \\
$\pi(\cdot\mid\mu(S))$ & selector/controller policy acting on the observed meta-state & policy-induced terminal kernel $d_S^\pi$ or meta-action distribution \\
$\F_s$ nesting & preservation of lower-budget policies & emulate and stop at the old limit \\
Markov state & conditional task law, history, memory, environment, tools, budget & augmented analyst state \\
exact collision & identical visible state, different target & matched twins \\
approximate collision & nearby visible-state laws & total-variation comparison \\
\bottomrule
\end{tabularx}
\end{table}

\section{Static emergence invariance}
\label{app:static}

\begin{proposition}[Compensation decomposition and monotonicity]
For fixed $(\mathcal J,M)$,
\begin{equation}
\R_{s,\mathcal J,M}^*=\R_{\mathcal J}^*+C_s(\mathcal J,M),\qquad C_s(\mathcal J,M)\ge0.
\label{eq:decomp}
\end{equation}
If $s\preceq s'$, then $C_{s'}(\mathcal J,M)\le C_s(\mathcal J,M)$.
\end{proposition}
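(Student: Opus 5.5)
The decomposition in Eq.~\eqref{eq:decomp} is immediate from the definition of the compensation gap: $C_s(\mathcal J,M)$ was defined as $\R_{s,\mathcal J,M}^*-\R_{\mathcal J}^*$, so adding $\R_{\mathcal J}^*$ to both sides gives the identity. This holds whenever both floors are finite, which is the case under the standing convention $\F_s(\mathcal J,M)\ne\varnothing$ together with a bounded or nonnegative loss. I would note explicitly that nonemptiness keeps $\R_{s,\mathcal J,M}^*<\infty$. I would also note that $\R_{\mathcal J}^*\ge 0$ for nonnegative loss, so the difference is well defined and not of the form $\infty-\infty$.

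Nonnegativity comes from monotonicity of the infimum under set inclusion. By Definition~\ref{def:main-two-level-capability}, $\F_s(\mathcal J,M)=\{d\in\mathcal D_{\mathcal J}:c_{\mathcal J,M}(d)\preceq s\}\subseteq\mathcal D_{\mathcal J}$. An infimum of $\mathcal L_{Q,\ell}$ over a subset is at least the infimum over the superset, so $\R_{s,\mathcal J,M}^*\ge\R_{\mathcal J}^*$, i.e.\ $C_s\ge0$.

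For the budget monotonicity, I would first show nestedness of the finite families. If $s\preceq s'$ and $d\in\F_s(\mathcal J,M)$, then $c_{\mathcal J,M}(d)\preceq s\preceq s'$. Transitivity of the order $\preceq$ on the resource space, extended to the adjoined top element $+\infty$, then gives $d\in\F_{s'}(\mathcal J,M)$. Hence $\F_s\subseteq\F_{s'}$, so $\R_{s',\mathcal J,M}^*\le\R_{s,\mathcal J,M}^*$ by the same infimum argument. Since the structural term $\R_{\mathcal J}^*$ does not depend on the budget, subtracting it preserves the inequality and gives $C_{s'}\le C_s$.

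The only point needing care, which I regard as the main (mild) obstacle, is that the resource relation must be at least a preorder so that transitivity is available. The paper calls $(\mathcal R,\preceq)$ an ordered resource space, so I would invoke transitivity directly and state the finiteness conventions explicitly, so that the subtraction in the definition of $C_s$ is legitimate.
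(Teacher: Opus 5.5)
Your proof is correct and follows the paper's argument: nonnegativity comes from $\F_s(\mathcal J,M)\subseteq\mathcal D_{\mathcal J}$ and monotonicity of the infimum, and budget monotonicity comes from the nesting $\F_s(\mathcal J,M)\subseteq\F_{s'}(\mathcal J,M)$ given by the at-most-resource definition, with the budget-independent floor subtracted (your explicit appeal to transitivity of $\preceq$ is what the paper leaves implicit). One small correction to your well-definedness remark: nonemptiness of $\F_s(\mathcal J,M)$ guarantees $\R^*_{s,\mathcal J,M}<\infty$ for bounded loss but not for an arbitrary nonnegative loss on general spaces, and what actually excludes $\infty-\infty$ is $\R^*_{\mathcal J}\le\R^*_{s,\mathcal J,M}<\infty$, not $\R^*_{\mathcal J}\ge 0$.
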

\begin{proof}
Every $d\in\F_s(\mathcal J,M)$ belongs to $\mathcal D_{\mathcal J}$, so its kernel risk is a candidate in the infimum defining $\R_{\mathcal J}^*$ and $\R_{s,\mathcal J,M}^*\ge\R_{\mathcal J}^*$. Because the at-most-resource construction gives $\F_s(\mathcal J,M)\subseteq\F_{s'}(\mathcal J,M)$ under $s\preceq s'$, the infimum over the larger family is bounded above by the infimum over the smaller family, while the joint structural floor remains fixed.
\end{proof}

The decomposition is the core systems statement for a fixed deployed pair $(\mathcal J,M)$. Under nested at-most-resource families, expanding the terminal realization ceiling weakly lowers the optimal compensation envelope. A change in training, optimization, or implementation can alter $M$ and therefore belongs to a different comparison; a longer computation under fixed $(\mathcal J,M,s)$ changes pathwise reachability at a fixed ceiling. The joint floor stays fixed whenever the effective interface and executable support preserve the same structural possibilities.
The resource-infimal compensation gap is
\begin{equation}
C_{\inf}(\mathcal J,M):=\inf_{s\in\mathcal R}C_s(\mathcal J,M).
\end{equation}
When $\mathcal R$ admits a countable cofinal sequence, an increasing sequence $s_1\preceq s_2\preceq\cdots$ is \emph{cofinal} when every $s\in\mathcal R$ satisfies $s\preceq s_n$ for some $n$. Because $C_s(\mathcal J,M)$ is monotone along the resource order, every cofinal path satisfies
\begin{equation}
\lim_{n\to\infty}C_{s_n}(\mathcal J,M)=\inf_n C_{s_n}(\mathcal J,M)=C_{\inf}(\mathcal J,M).
\end{equation}
Total compensation for a deployed pair $(\mathcal J,M)$ occurs when $\R_{\mathcal J}^*=0$ and $C_{\inf}(\mathcal J,M)=0$.

The resource family also has an asymptotic realization boundary. Define
\begin{align}
\F_\infty(\mathcal J,M)&:=\bigcup_{s\in\mathcal R}\F_s(\mathcal J,M)\subseteq\mathcal D_{\mathcal J},\label{eq:Finfinity}\\
\widehat{\F}_\infty(\mathcal J,M)&:=\clco\!\left(\F_\infty(\mathcal J,M)\right)\subseteq\widehat{\mathcal D}_{\mathcal J},\label{eq:Fhatinfinity}\\
\R_{\infty,\mathcal J,M}^*(Q)&:=\inf_{d\in\F_\infty(\mathcal J,M)}\mathcal L_Q(d)=\inf_s\R_{s,\mathcal J,M}^*(Q).
\label{eq:Rinfinity}
\end{align}
Thus the finite realization boundary expands monotonically inside an inherited structural boundary and approaches the portion of structural capability reachable by the chosen realization family. The inclusions
\begin{equation}
\F_s(\mathcal J,M)\subseteq\F_\infty(\mathcal J,M)\subseteq\mathcal D_{\mathcal J},
\qquad
\widehat{\F}_s(\mathcal J,M)\subseteq\widehat{\F}_\infty(\mathcal J,M)\subseteq\widehat{\mathcal D}_{\mathcal J}
\label{eq:asymnested}
\end{equation}
separate finite-budget reachability, asymptotic reachability under the chosen realization mechanism, and inherited structural capability.

\begin{definition}[Three levels of asymptotic completeness]
A deployed pair $(\mathcal J,M)$ has \emph{literal realization completeness} when
\begin{equation}
\F_\infty(\mathcal J,M)=\mathcal D_{\mathcal J}.
\end{equation}
For finite $\mathcal X$ and $\A$, it has \emph{risk-envelope realization completeness} when
\begin{equation}
\widehat{\F}_\infty(\mathcal J,M)=\widehat{\mathcal D}_{\mathcal J}.
\end{equation}
For a specified task law $Q$ and bounded loss $\ell$, it has \emph{task-effective asymptotic completeness} when
\begin{equation}
C^{\rm asym}(\mathcal J,M;Q,\ell)
:=\R_{\infty,\mathcal J,M}^*(Q,\ell)-\R_{\mathcal J}^*(Q,\ell)=0.
\label{eq:taskcomplete}
\end{equation}
\end{definition}

\begin{proposition}[Asymptotic realization completeness hierarchy]
For finite $\mathcal X$ and $\A$,
\begin{equation}
\F_\infty(\mathcal J,M)=\mathcal D_{\mathcal J}
\Longrightarrow
\widehat{\F}_\infty(\mathcal J,M)=\widehat{\mathcal D}_{\mathcal J}.
\end{equation}
Moreover,
\begin{equation}
\widehat{\F}_\infty(\mathcal J,M)=\widehat{\mathcal D}_{\mathcal J}
\quad\Longleftrightarrow\quad
C^{\rm asym}(\mathcal J,M;Q,\ell)=0
\end{equation}
for every finite target space, every finite task law $Q$ whose $X$-marginal has full support, and every bounded loss $\ell$. Risk-envelope completeness can therefore be identified by universal decision performance. Literal realization completeness remains stronger: different literal classes can share the same closed convex hull. For a single fixed decision problem, a zero asymptotic gap can also occur on a proper subset of the structural risk envelope.
\end{proposition}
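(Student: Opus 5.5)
The plan has three parts: the literal-to-envelope implication, the equivalence (by applying the argument of Theorem~\ref{thm:main-budget-reversal} in both directions), and the final remarks.

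\emph{First implication.} If $\F_\infty(\mathcal J,M)=\mathcal D_{\mathcal J}$, then taking closed convex hulls of equal sets gives $\widehat{\F}_\infty(\mathcal J,M)=\widehat{\mathcal D}_{\mathcal J}$ immediately, exactly as in the order-hierarchy proposition.

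\emph{Equivalence.} I would reuse the proof of Theorem~\ref{thm:main-budget-reversal}, with the roles of the two envelopes played by $A:=\widehat{\F}_\infty(\mathcal J,M)$ and $B:=\widehat{\mathcal D}_{\mathcal J}$, and with Eq.~\eqref{eq:asymnested} giving $A\subseteq B$.

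The first step is to record two facts. Bounded expected loss $\mathcal L_{Q,\ell}$ is a continuous linear functional of the finite kernel. Hence the infimum over $\F_\infty$ equals the infimum over $A$, and likewise the infimum over $\mathcal D_{\mathcal J}$ equals the infimum over $B$. In particular $\R^*_{\infty,\mathcal J,M}(Q,\ell)=\inf_A\mathcal L_{Q,\ell}$ and $\R^*_{\mathcal J}(Q,\ell)=\inf_B\mathcal L_{Q,\ell}$.

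Forward direction: if $A=B$, the two infima coincide for every $(\mathcal Y,Q,\ell)$, so $C^{\rm asym}=0$.

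Converse: suppose $A\subsetneq B$ and pick $d\in B\setminus A$. Since $A$ is closed and convex in a finite-dimensional space, strict separation gives a linear functional $u$ with $\langle u,d\rangle<\inf_A\langle u,\cdot\rangle$. It follows that $\inf_B\langle u,\cdot\rangle<\inf_A\langle u,\cdot\rangle$.

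Next, realize $u$ as a risk functional. Fix any full-support marginal $q_X$ and set $\mathcal Y=\mathcal X$ with $Y=X$. Define $\ell(a,x)=(u_{x,a}+c_x)/q_X(x)$, using statewise shifts $c_x$ large enough to make $\ell$ nonnegative. Because each kernel row sums to one, these shifts add the same constant to every kernel's risk, so they do not disturb the strict gap. The result is $\R^*_{\mathcal J}<\R^*_{\infty,\mathcal J,M}$, i.e.\ $C^{\rm asym}>0$. This contradicts universal task-effective completeness.

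\emph{Closing remarks.} Two examples show the stated gaps are genuine.
\begin{itemize}
\item Literal completeness is strictly stronger. Take $\mathcal D_{\mathcal J}$ to contain the two deterministic kernels on a binary action plus their midpoint, and let $\F_\infty$ consist of the two deterministic kernels only. The hulls agree, but the literal classes differ.
\item For a single fixed problem, a zero gap is compatible with a proper subset. Take $\ell\equiv0$, or any problem whose minimizer over $B$ already lies in $A$ while $A\neq B$.
\end{itemize}

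The main obstacle is the conversion of an arbitrary separating direction into a bounded nonnegative loss with full-support marginal. The row-sum invariance of the statewise shifts handles this, mirroring the earlier theorem. Everything else is bookkeeping.
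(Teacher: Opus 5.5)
Your proposal is correct and follows essentially the same route as the paper. The paper likewise uses closed convexification for the literal-to-envelope implication, linearity and continuity of bounded expected loss for the forward direction, and strict point--set separation converted into a bounded nonnegative loss for the converse, via a full-support marginal, deterministic target $Y=X$, and statewise shifts; your explicit examples for the final two remarks, and your observation that no rescaling is needed on finite spaces, are simply more concrete than the paper's one-line justifications.
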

\begin{proof}
Literal equality gives risk-envelope equality after closed convexification. Risk-envelope equality gives zero asymptotic gap for every bounded decision problem because bounded expected risk is a continuous linear functional of the finite decision kernel. Conversely, suppose the asymptotic envelope is a strict subset of the structural envelope. Compact convex separation gives a linear functional whose infimum is strictly smaller on the structural envelope. Fix any full-support $X$-marginal, take a finite target with deterministic $Y=X$, and apply statewise additive shifts and a common positive rescaling exactly as in Theorem~\ref{thm:main-budget-reversal}; the separating functional becomes a bounded nonnegative loss with a strictly positive asymptotic gap, contradicting universal zero gap. Finally, closed convexification can identify distinct literal classes, while one particular task can attain its optimum on a proper subset of an envelope.
\end{proof}

\begin{proposition}[Finite-resource and asymptotic realization gaps]
For every task law $Q$, bounded loss $\ell$, and budget $s$,
\begin{equation}
C_s(\mathcal J,M;Q,\ell)
=
\underbrace{\R_{s,\mathcal J,M}^*(Q,\ell)-\R_{\infty,\mathcal J,M}^*(Q,\ell)}_{C_s^{\rm fin}(\mathcal J,M;Q,\ell)}
+
\underbrace{\R_{\infty,\mathcal J,M}^*(Q,\ell)-\R_{\mathcal J}^*(Q,\ell)}_{C^{\rm asym}(\mathcal J,M;Q,\ell)},
\label{eq:twogapdecomp}
\end{equation}
where
\begin{align}
C_s^{\rm fin}(\mathcal J,M;Q,\ell)
&:=\R_{s,\mathcal J,M}^*(Q,\ell)-\R_{\infty,\mathcal J,M}^*(Q,\ell),\label{eq:finitegapexplicit}\\
C^{\rm asym}(\mathcal J,M;Q,\ell)
&:=\R_{\infty,\mathcal J,M}^*(Q,\ell)-\R_{\mathcal J}^*(Q,\ell).\label{eq:asymgapexplicit}
\end{align}
Both terms are nonnegative. Moreover,
\begin{equation}
C^{\rm asym}(\mathcal J,M;Q,\ell)=\inf_s C_s(\mathcal J,M;Q,\ell).
\label{eq:asymgapcinf}
\end{equation}
The first term is a finite-resource gap relative to the asymptotic capability of the current realization mechanism. The second is a task-effective asymptotic gap: the current task assigns value to structural capability that the asymptotic realization envelope does not attain.
\end{proposition}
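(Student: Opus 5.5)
The decomposition \eqref{eq:twogapdecomp} is a telescoping identity. We add and subtract $\R_{\infty,\mathcal J,M}^*(Q,\ell)$ inside the definition $C_s=\R_{s,\mathcal J,M}^*-\R_{\mathcal J}^*$. For bounded $\ell$ all three risk floors are finite (they lie in $[0,L]$), so no $\infty-\infty$ issue arises.

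Nonnegativity comes from the nested inclusions \eqref{eq:asymnested}, namely $\F_s(\mathcal J,M)\subseteq\F_\infty(\mathcal J,M)\subseteq\mathcal D_{\mathcal J}$. An infimum over a larger set is no larger, so
\[
\R_{s,\mathcal J,M}^*\ge\R_{\infty,\mathcal J,M}^*\ge\R_{\mathcal J}^*.
\]
This gives $C_s^{\rm fin}\ge0$ and $C^{\rm asym}\ge0$, exactly as in the compensation-decomposition proposition.

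For \eqref{eq:asymgapcinf}, the key step is the identity
\[
\R_{\infty,\mathcal J,M}^*=\inf_s\R_{s,\mathcal J,M}^*,
\]
recorded in \eqref{eq:Rinfinity}. It holds because $\F_\infty$ is the union of the $\F_s$: the infimum of $\mathcal L_{Q,\ell}$ over a union equals the infimum over $s$ of the infima over the pieces. Subtracting the $s$-independent constant $\R_{\mathcal J}^*$ then gives
\[
\inf_s C_s=\inf_s\R_{s,\mathcal J,M}^*-\R_{\mathcal J}^*=C^{\rm asym}.
\]
Equivalently, this says $\inf_s C_s^{\rm fin}=0$.

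The only point needing care is the case where some $\F_s$ are empty. There the convention $\inf\varnothing=+\infty$ makes those terms harmless in $\inf_s$, and the identity still holds provided at least one budget has nonempty $\F_s$, which the paper's finite-risk convention guarantees. No monotonicity or cofinality of the resource order is needed for the infimum identity. Cofinality matters only if one wants to write the infimum as a limit along a sequence, which the statement does not require. I expect no real obstacle beyond stating this convention.
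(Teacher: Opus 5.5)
Your proposal is correct and follows the paper's proof: you telescope by adding and subtracting $\R^*_{\infty,\mathcal J,M}(Q,\ell)$, get nonnegativity from $\F_s(\mathcal J,M)\subseteq\F_\infty(\mathcal J,M)\subseteq\mathcal D_{\mathcal J}$, and get Eq.~\eqref{eq:asymgapcinf} by subtracting the constant floor from $\R^*_{\infty,\mathcal J,M}=\inf_s\R^*_{s,\mathcal J,M}$. Your additional remarks are sound refinements that the paper leaves implicit: the union structure already gives the infimum identity without cofinality, and empty budget families are harmless under the nonempty-budget convention.
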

\begin{proof}
Nested at-most-resource families give
\begin{equation}
\R_{s,\mathcal J,M}^*(Q,\ell)
\ge \R_{\infty,\mathcal J,M}^*(Q,\ell)
\ge \R_{\mathcal J}^*(Q,\ell),
\end{equation}
where $\R_{\infty,\mathcal J,M}^*=\inf_r\R_{r,\mathcal J,M}^*$. Adding and subtracting $\R_{\infty,\mathcal J,M}^*(Q,\ell)$ yields Eq.~\eqref{eq:twogapdecomp}. Taking the infimum of $C_s=\R_{s,\mathcal J,M}^*-\R_{\mathcal J}^*$ over $s$ gives Eq.~\eqref{eq:asymgapcinf}.
\end{proof}

Set-level risk-envelope realization incompleteness means $\widehat{\F}_\infty(\mathcal J,M)\subsetneq\widehat{\mathcal D}_{\mathcal J}$; it is a property of the structural architecture together with its realization profile. Task-effective asymptotic incompleteness means $C^{\rm asym}(\mathcal J,M;Q,\ell)>0$; it is the consequence of that geometry for a particular task projection. Set-level incompleteness can coexist with zero task-effective asymptotic gap when the omitted capability has no additional value under the specified $Q$ and $\ell$.

Equation~\eqref{eq:decomp} is the scalar risk projection of the capability square. The \emph{structural boundary} is the pair $(\mathcal D_{\mathcal J},\widehat{\mathcal D}_{\mathcal J})$: literal executable capability and its expected-risk envelope. The \emph{realization boundary} at budget $s$ is the pair $(\F_s(\mathcal J,M),\widehat{\F}_s(\mathcal J,M))$: the portion of literal and risk-equivalent capability reachable under that budget. A task law and loss project these set-valued boundaries to the scalars $\R_{\mathcal J}^*(Q)$ and $\R_{s,\mathcal J,M}^*(Q)$. The compensation gap is therefore a task-specific scalar separation between two capability boundaries. Its asymptotic component is a task projection and should be distinguished from set-level risk-envelope incompleteness. The executable/risk orders, directed deficiencies, and resource transformations describe the corresponding set-level comparative geometry. Increasing $s$ expands the realization boundary inside the inherited structural boundary; refinement or executable transformation moves the structural boundary and can simultaneously alter the difficulty of realizing it.

the structural risk-envelope extension in \hyperref[app:proofs]{Appendix Module I} proves that universal bounded-risk dominance is equivalent to inclusion of the closed convex capability envelope. This gives the joint information--execution coordinate two complementary geometries: literal executability is ordered by $\preceq_{\rm exec}$, while universal expected-risk capability is ordered by $\preceq_{\rm risk}$. Deterministic semantic rules appear as Dirac kernels, stochastic decoders appear directly as kernels, and Blackwell-style information comparison is recovered when executable kernels are unrestricted.

\begin{definition}[Task-relative approximate structural deficiency]
For finite $\mathcal X,\A$ and a fixed marginal $Q_X$, define the directed deficiency
\begin{equation}
\delta_{Q_X}(\mathcal J_1\!\to\!\mathcal J_2)
:=\sup_{d_1\in\widehat{\mathcal D}_{\mathcal J_1}}
\inf_{d_2\in\widehat{\mathcal D}_{\mathcal J_2}}
\E_{Q_X}\,\TV\!\left(d_1(\cdot\mid X),d_2(\cdot\mid X)\right).
\label{eq:jointdeficiency}
\end{equation}
This task-marginal deficiency measures how closely one joint information--execution structure can simulate another at the level of terminal action distributions. Taking the supremum over a deployment class of marginals yields a workload-class deficiency, and taking the supremum over all marginals on finite $\mathcal X$ yields a distribution-free version.
\end{definition}

the approximate structural-inheritance extension in \hyperref[app:proofs]{Appendix Module I} converts this deficiency into a bounded-loss regret guarantee and places approximate structural simulation on the same total-variation scale as approximate interface collision.

\begin{proposition}[Composition of directed structural deficiency]
For finite $\mathcal X,\A$ and a common marginal $Q_X$,
\begin{equation}
\delta_{Q_X}(\mathcal J_1\!\to\!\mathcal J_3)
\le
\delta_{Q_X}(\mathcal J_1\!\to\!\mathcal J_2)
+
\delta_{Q_X}(\mathcal J_2\!\to\!\mathcal J_3).
\label{eq:deficiencytriangle}
\end{equation}
Consequently,
\begin{equation}
d_{Q_X}(\mathcal J_1,\mathcal J_2)
:=\max\{\delta_{Q_X}(\mathcal J_1\!\to\!\mathcal J_2),
\delta_{Q_X}(\mathcal J_2\!\to\!\mathcal J_1)\}
\end{equation}
defines a pseudometric on $Q_X$-almost-sure risk-envelope equivalence classes. The same triangle inequality holds for $\delta_{Q_X,s}$ at any common budget $s$ with nonempty finite envelopes.
\end{proposition}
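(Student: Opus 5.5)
The triangle inequality follows from the triangle inequality for total variation, applied pointwise in $x$ and integrated against $Q_X$, followed by an approximate-minimizer argument through the intermediate envelope. Write $D(d,e):=\E_{Q_X}\TV(d(\cdot\mid X),e(\cdot\mid X))$ for kernels on finite $\mathcal X,\A$. For each $x$, total variation is a metric on $\Delta(\A)$, so $\TV(d_1(\cdot\mid x),d_3(\cdot\mid x))\le\TV(d_1(\cdot\mid x),d_2(\cdot\mid x))+\TV(d_2(\cdot\mid x),d_3(\cdot\mid x))$. Integrating against $Q_X$ gives $D(d_1,d_3)\le D(d_1,d_2)+D(d_2,d_3)$. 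Thus $D$ is a pseudometric on $\Ker(\mathcal X,\A)$, vanishing exactly when two kernels agree $Q_X$-almost surely.

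Next comes the sup–inf chaining. Fix $d_1\in\widehat{\mathcal D}_{\mathcal J_1}$ and $\zeta>0$. Choose $d_2\in\widehat{\mathcal D}_{\mathcal J_2}$ with $D(d_1,d_2)\le\delta_{Q_X}(\mathcal J_1\!\to\!\mathcal J_2)+\zeta$. Then choose $d_3\in\widehat{\mathcal D}_{\mathcal J_3}$ with $D(d_2,d_3)\le\inf_{e\in\widehat{\mathcal D}_{\mathcal J_3}}D(d_2,e)+\zeta$. The right-hand side is at most $\delta_{Q_X}(\mathcal J_2\!\to\!\mathcal J_3)+\zeta$, because $d_2$ is an admissible point of the supremum defining that deficiency. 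Hence $\inf_{e\in\widehat{\mathcal D}_{\mathcal J_3}}D(d_1,e)\le\delta_{Q_X}(\mathcal J_1\!\to\!\mathcal J_2)+\delta_{Q_X}(\mathcal J_2\!\to\!\mathcal J_3)+2\zeta$. Taking the supremum over $d_1$ and letting $\zeta\downarrow0$ gives Eq.~\eqref{eq:deficiencytriangle}. Compactness of the closed envelopes would let us take exact minimizers, but the $\zeta$ argument avoids needing this. The only care required is the quantifier order: the intermediate point $d_2$ must be chosen from $\widehat{\mathcal D}_{\mathcal J_2}$ so that the second deficiency bounds the second leg. I expect this to be the only delicate point.

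For the pseudometric claim, first note that $d_{Q_X}$ is nonnegative and symmetric by definition. It vanishes on the diagonal because $\delta_{Q_X}(\mathcal J\!\to\!\mathcal J)=0$: take $d_2=d_1$. The triangle inequality for $d_{Q_X}$ follows by applying Eq.~\eqref{eq:deficiencytriangle} in both directions and taking maxima. The metric identification is read as follows: $d_{Q_X}(\mathcal J_1,\mathcal J_2)=0$ exactly when each envelope lies in the $D$-closure of the other. Since the envelopes are closed in the finite-dimensional kernel space, this means they coincide after identifying kernels equal $Q_X$-a.s., that is, on the rows $x$ with $Q_X(x)>0$. So $d_{Q_X}$ descends to a pseudometric on those equivalence classes (indeed a metric on the quotient).

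The budgeted statement uses the identical argument with $\widehat{\mathcal D}_{\mathcal J_i}$ replaced by $\widehat\F_s(\mathcal J_i,M_i)$. Nonemptiness of the finite envelopes is needed so that each infimum is over a nonempty set and is therefore finite; it is bounded by $1$.
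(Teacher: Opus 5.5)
Your proof is correct and follows the paper's argument essentially verbatim: the pointwise total-variation triangle inequality integrated against $Q_X$, an approximate-minimizer chaining through an intermediate point of $\widehat{\mathcal D}_{\mathcal J_2}$, symmetrization for the pseudometric, and the same argument applied to $\widehat\F_s$. Your compactness remark, which identifies the zero set of $d_{Q_X}$ with equality of the envelopes on the rows where $Q_X$ is positive, spells out more carefully what the paper compresses into ``symmetrization gives the pseudometric.''
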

\begin{proof}
Let $\rho_{Q_X}(d,d'):=\E_{Q_X}\TV(d(\cdot\mid X),d'(\cdot\mid X))$. This is a pseudometric on decision kernels modulo $Q_X$-almost-sure equality. For any $d_1$ and $\varepsilon>0$, choose $d_2$ within $\delta_{12}+\varepsilon$ of $d_1$ and $d_3$ within $\delta_{23}+\varepsilon$ of $d_2$. The triangle inequality for total variation gives $\rho_{Q_X}(d_1,d_3)\le\delta_{12}+\delta_{23}+2\varepsilon$. Taking the infimum over $d_3$, supremum over $d_1$, and $\varepsilon\downarrow0$ yields Eq.~\eqref{eq:deficiencytriangle}. Symmetrization gives the pseudometric, and the finite-budget statement applies the same argument to $\widehat{\F}_s$.
\end{proof}

The directed deficiency is therefore an architecture-approximation geometry: the order records exact capability inclusion, $\delta$ measures behavioral approximation when exact inclusion fails, and the resource map below measures the budget required to reproduce a finite capability envelope.

\begin{definition}[Finite-resource deficiency]
For a common budget $s$ with nonempty finite envelopes and marginal $Q_X$, define
\begin{equation}
\delta_{Q_X,s}((\mathcal J_1,M_1)\!\to\!(\mathcal J_2,M_2))
:=
\sup_{d_1\in\widehat{\F}_s(\mathcal J_1,M_1)}
\inf_{d_2\in\widehat{\F}_s(\mathcal J_2,M_2)}
\E_{Q_X}\TV\!\left(d_1(\cdot\mid X),d_2(\cdot\mid X)\right).
\label{eq:finitedeficiency}
\end{equation}
It measures the same directed approximation error after the realization budget is imposed.
\end{definition}

\begin{corollary}[Finite-resource approximate risk bound]
If $0\le\ell\le L<\infty$, then
\begin{equation}
\R_{s,\mathcal J_2,M_2}^*(Q)
\le
\R_{s,\mathcal J_1,M_1}^*(Q)
+
L\,\delta_{Q_X,s}((\mathcal J_1,M_1)\!\to\!(\mathcal J_2,M_2)).
\label{eq:finitedeficiencyrisk}
\end{equation}
For full-support $Q_X$, closedness of the target envelope gives $\delta_{Q_X,s}=0$ exactly when $\widehat{\F}_s(\mathcal J_1,M_1)\subseteq\widehat{\F}_s(\mathcal J_2,M_2)$; for general $Q_X$, the equivalence is modulo $Q_X$-almost-sure equality. Theorem~\ref{thm:main-budget-reversal} then recovers exact same-budget universal risk dominance in the full-support case.
\end{corollary}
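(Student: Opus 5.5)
The plan has two parts: a bound by coupling, then a zero-deficiency characterization.

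\textbf{Risk bound.} First note that $\R^*_{s,\mathcal J_i,M_i}(Q)$ is unchanged when $\F_s$ is replaced by $\widehat\F_s$. On finite $\mathcal X,\A$, $\mathcal L_{Q,\ell}$ is a continuous linear functional of the kernel, so its infimum over a set equals its infimum over the closed convex hull. Now fix $\varepsilon>0$ and pick $d_1\in\widehat\F_s(\mathcal J_1,M_1)$ with $\mathcal L_Q(d_1)\le\R^*_{s,1}(Q)+\varepsilon$. By the definition of the directed deficiency, there is $d_2\in\widehat\F_s(\mathcal J_2,M_2)$ with $\E_{Q_X}\TV(d_1(\cdot\mid X),d_2(\cdot\mid X))\le\delta_{Q_X,s}+\varepsilon$.

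Condition on $X=x$. The conditional expected loss is $\sum_y Q(y\mid x)\int\ell(a,y)\,d_i(da\mid x)$, and the integrand $a\mapsto\sum_yQ(y\mid x)\ell(a,y)$ takes values in $[0,L]$. The standard bound $|\int f\,d\mu-\int f\,d\nu|\le L\,\TV(\mu,\nu)$ holds for $0\le f\le L$; it is sharp with this constant precisely because the range has width $L$. Averaging over $Q_X$ gives $\mathcal L_Q(d_2)\le\mathcal L_Q(d_1)+L(\delta_{Q_X,s}+\varepsilon)$. Hence $\R^*_{s,2}\le\R^*_{s,1}+L\delta_{Q_X,s}+(1+L)\varepsilon$, and letting $\varepsilon\downarrow0$ proves Eq.~\eqref{eq:finitedeficiencyrisk}.

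\textbf{Zero-deficiency characterization.} If $\widehat\F_{s,1}\subseteq\widehat\F_{s,2}$, take $d_2=d_1$ to get $\delta=0$.

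Conversely, suppose $\delta_{Q_X,s}=0$. For each $d_1$ the infimum over $\widehat\F_{s,2}$ is then zero. The map $d_2\mapsto\E_{Q_X}\TV(d_1,d_2)$ is continuous on the compact set $\widehat\F_{s,2}$: it is closed and bounded inside the finite-dimensional kernel polytope. So the infimum is attained at some $d_2$ with $\sum_x Q_X(x)\TV(d_1(\cdot\mid x),d_2(\cdot\mid x))=0$. With full support every row agrees, so $d_1=d_2\in\widehat\F_{s,2}$. Without full support the same argument gives agreement only on $\operatorname{supp}Q_X$, i.e.\ inclusion modulo $Q_X$-a.s.\ equality.

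\textbf{Recovering exact dominance.} Combining the zero-deficiency case with the risk bound recovers the forward direction of Theorem~\ref{thm:main-budget-reversal}. The full equivalence, including the converse, is then just Theorem~\ref{thm:main-budget-reversal} invoked directly.

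\textbf{Main obstacle.} The only delicate step is attainment of the infimum in the converse. This rests on closedness of $\widehat\F_{s,2}$, which is built into $\clco$, and on compactness of the bounded finite-dimensional polytope. Everything else is a routine TV estimate.
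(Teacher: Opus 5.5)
Your proposal is correct and follows the paper's route. The risk bound is the paper's bounded-loss total-variation argument: conditional loss $h_x\in[0,L]$, a near-optimal $d_1$, a near-best simulator $d_2$, then let the slacks vanish, applied to the finite-budget envelopes. Your remark that the infimum over $\F_s$ equals the infimum over $\widehat\F_s$ correctly connects the literal-class risk to the hull-based deficiency. Your zero-deficiency characterization, which uses compactness of the closed target envelope inside the kernel polytope, is the closedness argument the statement alludes to and the paper leaves implicit.
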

\begin{proof}
Apply the bounded-loss total-variation argument of the approximate structural-inheritance extension in \hyperref[app:proofs]{Appendix Module I} to the finite-budget envelopes $\widehat{\F}_s(\mathcal J_1,M_1)$ and $\widehat{\F}_s(\mathcal J_2,M_2)$.
\end{proof}

\begin{corollary}[Exact paired collision]
Under zero--one loss in the classification setting $\A=\mathcal Y$, consider an equiprobable pair with deterministic labels, $\phi(x_0)=\phi(x_1)$ and $y(x_0)\ne y(x_1)$. Every deterministic or randomized rule through $\phi$ has average error at least $1/2$ on the pair.
\end{corollary}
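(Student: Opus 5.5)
The argument reduces the pair to a single shared action law via Theorem~\ref{thm:main-fiber-invariance}, and then bounds the average error pointwise.

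\emph{Step 1: the two states share one action law.} Let $d$ be any decision rule routed through $\phi$. Deterministic rules $g\circ\phi$ are included as the Dirac kernels $\delta_{g(\phi(x))}$. Write $d=\kappa\circ\phi$ for some Markov kernel $\kappa\in\Ker(\mathcal Z,\A)$. We do not even need $\kappa\in\Pi_{\phi,H}$: the argument uses only the factorization through $\phi$. Exactly as in the proof of Theorem~\ref{thm:main-fiber-invariance}, $\phi(x_0)=\phi(x_1)$ gives
\[
d(\cdot\mid x_0)=\kappa(\cdot\mid\phi(x_0))=\kappa(\cdot\mid\phi(x_1))=d(\cdot\mid x_1)=:p,
\]
a single probability law $p$ on $\A=\mathcal Y$.

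\emph{Step 2: compute the pair error.} Set $y_i:=y(x_i)$. Under zero--one loss the expected error at $x_i$ is $1-p(\{y_i\})$. With equal weights, the average error on the pair is
\[
1-\tfrac12\bigl[p(\{y_0\})+p(\{y_1\})\bigr].
\]
Since $y_0\ne y_1$, the singletons $\{y_0\}$ and $\{y_1\}$ are disjoint, so $p(\{y_0\})+p(\{y_1\})=p(\{y_0,y_1\})\le1$. Hence the average error is at least $1-\tfrac12=\tfrac12$.

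\emph{Measurability and the main obstacle.} The only point needing care is that singletons are measurable in $\A$. This holds because $\A$ is standard Borel (or finite), so the expression $p(\{y_i\})$ is well defined. Every other step is a direct consequence of Theorem~\ref{thm:main-fiber-invariance} and additivity of probability.
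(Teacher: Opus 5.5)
Your proposal is correct and follows essentially the paper's argument: a rule through $\phi$ assigns one common action law to both twins, and because the two correct labels are distinct, their probabilities under that law sum to at most one, which forces average error at least $1/2$. The only difference is cosmetic: you absorb the deterministic case into the randomized one via Dirac kernels, whereas the paper states the deterministic case separately.
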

\begin{proof}
A deterministic rule assigns the same action to both points and is wrong on at least one. For a randomized rule, let $p_0,p_1$ be the probabilities of outputting the two correct labels; $p_0+p_1\le1$, so average correctness is at most $(p_0+p_1)/2\le1/2$.
\end{proof}

The collision result is the decision-theoretic form of symbolic quotienting. The interface partitions task states into fibers, while the task assigns actions to those states. A fiber becomes limiting precisely when it contains states requiring different actions. Additional computation can improve the rule assigned to the fiber, but cannot split its members without a newly available distinction.

The approximate collision formula and its binary-testing proof are included in the approximate structural-inheritance extension in \hyperref[app:proofs]{Appendix Module I}: the residual floor is exactly $(1-\TV(P_0^\phi,P_1^\phi))/2$.

\begin{definition}[Task-relevant refinement and executable expansion]
$\psi$ refines $\phi$ if there is a measurable $h$ with $\phi=h\circ\psi$ almost surely. An executable expansion is a change $H_t\to H_{t+1}$ with $\Pi_{\phi,H_t}\subsetneq\Pi_{\phi,H_{t+1}}$ when the interface is fixed. A refinement or expansion is task-relevant when it strictly lowers $\R_{\phi,H}^*$ for the specified law and loss.
\end{definition}

\begin{corollary}[Factorization certificate for executable dominance]
Suppose $\phi_t=h\circ\phi_{t+1}$ almost surely and
\begin{equation}
\{\kappa\circ h:\kappa\in\Pi_{\phi_t,H_t}\}\subseteq\Pi_{\phi_{t+1},H_{t+1}}.
\end{equation}
Then every old decision kernel is simulated under $P$ by the new structure and therefore
\begin{equation}
\R_{\phi_{t+1},H_{t+1}}^*\le\R_{\phi_t,H_t}^*.
\end{equation}
\end{corollary}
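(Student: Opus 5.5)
The plan is to show directly that every old decision kernel already lies in the new decision class, and then pass to risk floors by taking infima over nested sets.

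\emph{Step 1 (simulation).} Fix $d\in\mathcal D_{\phi_t,H_t}$. By the definition of the joint structural decision class, $d=\kappa\circ\phi_t$ for some $\kappa\in\Pi_{\phi_t,H_t}$. Substituting the factorization $\phi_t=h\circ\phi_{t+1}$, which holds $P$-almost surely, gives
\[
d(B\mid x)=\kappa(B\mid h(\phi_{t+1}(x)))=\bigl((\kappa\circ h)\circ\phi_{t+1}\bigr)(B\mid x)
\]
for $P_X$-almost every $x$. Here $\kappa\circ h$ is read as the kernel $z'\mapsto\kappa(\cdot\mid h(z'))$, which is a measurable Markov kernel because $h$ is measurable. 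The inclusion hypothesis places $\kappa\circ h$ in $\Pi_{\phi_{t+1},H_{t+1}}$. Hence $d':=(\kappa\circ h)\circ\phi_{t+1}$ belongs to $\mathcal D_{\phi_{t+1},H_{t+1}}$ and agrees with $d$ on a set of full $P_X$-measure. This is the meaning of ``simulated under $P$.''

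\emph{Step 2 (risk).} The kernel risk $\mathcal L_{P,\ell}$ defined in Eq.~\eqref{eq:kernelrisk} integrates only over $X\sim P_X$. Kernels that agree $P_X$-almost surely therefore have equal risk, so $\mathcal L_P(d)=\mathcal L_P(d')\ge\R^*_{\phi_{t+1},H_{t+1}}$. Taking the infimum over $d\in\mathcal D_{\phi_t,H_t}$ yields $\R^*_{\phi_t,H_t}\ge\R^*_{\phi_{t+1},H_{t+1}}$, which is the claim. This is the same candidate-in-the-infimum argument used in the compensation decomposition proposition.

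\emph{Main obstacle.} The only delicate point is the ``almost surely'' in the factorization. The inclusion $\mathcal D_{\mathcal J_t}\subseteq\mathcal D_{\mathcal J_{t+1}}$ therefore holds only modulo $P_X$-null sets, not literally. I handle this by working with $P_X$-equivalence classes of kernels and arguing only at the level of risk, as in Step 2. If an almost-everywhere $P$-simulation is not enough for a given application, one can redefine $d'$ on the null set. Alternatively, if $\phi_t=h\circ\phi_{t+1}$ holds everywhere, the inclusion of decision classes is literal and the exec-order $\preceq_{\rm exec}$ follows; the risk-envelope order then follows by the order hierarchy proposition.
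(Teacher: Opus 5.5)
Your proof is correct and follows the paper's argument: the old kernel $\kappa\circ\phi_t$ agrees $P_X$-almost surely with $(\kappa\circ h)\circ\phi_{t+1}$, the lifted kernel lies in $\Pi_{\phi_{t+1},H_{t+1}}$ by hypothesis, and the floor inequality follows by taking infima. Your handling of the almost-sure factorization at the level of risk is more careful than the paper's claim of literal simulation and executable-order dominance. Drop your aside about redefining $d'$ on the null set, however: the modified kernel need not remain of the form $\kappa'\circ\phi_{t+1}$ with $\kappa'\in\Pi_{\phi_{t+1},H_{t+1}}$.
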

\begin{proof}
Every old decision kernel $\kappa\circ\phi_t$ equals $(\kappa\circ h)\circ\phi_{t+1}$ almost surely, and the lifted kernel $\kappa\circ h$ belongs to the new executable class. Hence the new structure literally simulates every old decision kernel and therefore dominates in both the executable and risk-envelope orders.
\end{proof}

\begin{proposition}[No-policy-gain expansion is vacuous]
Fix $\phi$. If $H\subseteq H'$ but $\Pi_{\phi,H^{\prime}}=\Pi_{\phi,H}$, then $\R_{\phi,H'}^*=\R_{\phi,H}^*$.
\end{proposition}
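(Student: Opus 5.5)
The plan is to show that the two structural classes coincide and then read off equality of the risk floors from the definitions. By Eq.~\eqref{eq:jointdecisionclass}, the structural class is $\mathcal D_{\phi,H}=\{\kappa\circ\phi:\kappa\in\Pi_{\phi,H}\}$, and it depends on $H$ only through the executable kernel class $\Pi_{\phi,H}$. Since $\phi$ is held fixed and $\Pi_{\phi,H'}=\Pi_{\phi,H}$ by hypothesis, precomposition with $\phi$ yields the same set, so $\mathcal D_{\phi,H'}=\mathcal D_{\phi,H}$.

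Next I would use the joint structural floor in Eq.~\eqref{eq:jointfloor}. It is an infimum of $\mathcal L_Q(d)$ over $d\in\mathcal D_{\mathcal J}$, so equal index sets give equal infima. Hence $\R_{\phi,H'}^*=\R_{\phi,H}^*$ for the specified law and loss. The argument in fact gives this for every task law and every loss, and it also gives equality of the convex envelopes $\widehat{\mathcal D}$. I would record both as a remark: the support enlargement $H\subseteq H'$ is invisible to every decision-theoretic comparison once the kernel class is unchanged. In the language of the Definition of task-relevant refinement and executable expansion, such a change is not even an executable expansion, because that definition requires strict inclusion of $\Pi$.

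I do not expect a real obstacle. The only point needing care is that the inclusion $H\subseteq H'$ plays no role beyond motivation: the proof uses only that $\Pi$ is unchanged. One should avoid suggesting that finite-budget floors $\R^*_{s}$ are also preserved. Those depend on the realization profile's cost map, which could change even when $\Pi$ does not, so I would state the conclusion only for the structural floor.
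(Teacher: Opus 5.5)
Your proof is correct and matches the paper's one-line argument: both floors are infima of the same risk functional over the same executable kernel class $\Pi_{\phi,H}=\Pi_{\phi,H'}$, equivalently over the same decision class $\mathcal D_{\phi,H}$. Your side remarks are also accurate: the inclusion $H\subseteq H'$ is never used, and finite-budget floors need not be preserved because the cost map may change.
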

\begin{proof}
Both envelopes are infima over the same executable kernel class.
\end{proof}

We use ``semantic expansion'' for executable-policy expansion: generating new strings while leaving $\Pi_{\phi,H}$ unchanged leaves capability unchanged.

\begin{proposition}[Factorization of non-comparable structural replacement]
Let $\phi:\mathcal X\to\mathcal Z_\phi$ and $\psi:\mathcal X\to\mathcal Z_\psi$ be measurable interfaces on the same task space, and define the joint interface $\chi=(\phi,\psi)$. Let $p_\phi$ and $p_\psi$ denote the coordinate projections. Suppose a common executable envelope $\Pi_{\chi,H^\cup}$ satisfies
\begin{equation}
\{\kappa\circ p_\phi:\kappa\in\Pi_{\phi,H}\}\cup\{\kappa\circ p_\psi:\kappa\in\Pi_{\psi,H'}\}\subseteq\Pi_{\chi,H^\cup}.
\end{equation}
Then the non-comparable replacement $(\phi,H)\to(\psi,H')$ admits an order-theoretic representation as structural outward enlargement $(\phi,H)\to(\chi,H^\cup)$ followed by structural inward contraction $(\chi,H^\cup)\to(\psi,H')$.
\end{proposition}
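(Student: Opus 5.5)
The plan is to verify two literal inclusions between decision classes: $\mathcal D_{\phi,H}\subseteq\mathcal D_{\chi,H^\cup}$ and $\mathcal D_{\psi,H'}\subseteq\mathcal D_{\chi,H^\cup}$. Given these, the transition $(\phi,H)\to(\chi,H^\cup)$ is outward and $(\chi,H^\cup)\to(\psi,H')$ is inward, both in the executable order $\preceq_{\rm exec}$. By the Order hierarchy proposition they then hold in $\preceq_{\rm risk}$ as well, and under the canonical capability pairs they hold in the strong order $\preceq_{\rm cap}$ of Eq.~\eqref{eq:strongcaporder}.

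The key identity is that each original interface factors through the joint one: $\phi=p_\phi\circ\chi$ and $\psi=p_\psi\circ\chi$ hold pointwise, not merely almost surely. So $\chi$ refines both $\phi$ and $\psi$ in the sense of the refinement definition.

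For the outward step, take an arbitrary $d=\kappa\circ\phi$ with $\kappa\in\Pi_{\phi,H}$. Then
\[
d(B\mid x)=\kappa(B\mid p_\phi(\chi(x)))=((\kappa\circ p_\phi)\circ\chi)(B\mid x).
\]
The hypothesis places $\kappa\circ p_\phi$ in $\Pi_{\chi,H^\cup}$, so $d\in\mathcal D_{\chi,H^\cup}$. This is exactly the Factorization certificate corollary with $h=p_\phi$, applied in the pointwise version. The same computation with $p_\psi$ and $\Pi_{\psi,H'}$ gives $\mathcal D_{\psi,H'}\subseteq\mathcal D_{\chi,H^\cup}$. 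Read in the direction of travel, this second inclusion says the passage from $(\chi,H^\cup)$ to $(\psi,H')$ is a contraction: every retained behavior of the endpoint is simulable inside the intermediate structure. Taking closed convex hulls preserves both inclusions, which gives the risk-envelope statements. The same inclusions, applied to the corresponding risk floors, give $\R^*_{\chi,H^\cup}\le\min\{\R^*_{\phi,H},\R^*_{\psi,H'}\}$.

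The main obstacle is formal rather than computational: deciding what ``order-theoretic representation'' must assert. I would take it to mean exactly the two inclusions above, so that the composite path goes up and then down through a common upper bound. The argument must not assume that $(\phi,H)$ and $(\psi,H')$ are comparable to each other. I would also check measurability: $p_\phi$ and $p_\psi$ are measurable coordinate projections on the product space $\mathcal Z_\phi\times\mathcal Z_\psi$, so $\kappa\circ p_\phi$ and $\kappa\circ p_\psi$ are genuine Markov kernels from $\mathcal Z_\chi$ to $\A$, and $\chi$ is measurable because its components are.
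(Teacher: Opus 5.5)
Your proposal is correct and matches the paper's proof. Both arguments rest on the pointwise identities $\phi=p_\phi\circ\chi$ and $\psi=p_\psi\circ\chi$, and both read the two lifted-class inclusions as $\mathcal D_{\phi,H}\subseteq\mathcal D_{\chi,H^\cup}$ (the enrichment step) and $\mathcal D_{\psi,H'}\subseteq\mathcal D_{\chi,H^\cup}$ (the contraction step); your explicit kernel computation and measurability check only add detail to the paper's brief argument.
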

\begin{proof}
Because $\phi=p_\phi\circ\chi$ and $\psi=p_\psi\circ\chi$, the joint interface refines both interfaces. The first lifted class inclusion lets the common envelope simulate every old decision kernel, so the first step is an enrichment. The second lifted class inclusion says that every final decision kernel is available in the common envelope; restricting the interface to $\psi$ and the support to $\Pi_{\psi,H'}$ is therefore a contraction. The factorization is structural. Its deployment value is determined separately by the realization cost and burden of the intermediate envelope.
\end{proof}

Surface expressivity and executable capability therefore obey different orders. A larger string language can leave the decision envelope unchanged, while a small executable extension can lower it sharply. A non-comparable redesign is a reconfiguration in the capability order. Relative to a declared common envelope, it admits an ordered enrichment--contraction representation: structural capability first moves outward to the envelope and then inward to the replacement.

\begin{theorem}[Total compensation under a deployed realization pair]
For every resource-indexed family generated by a fixed deployed pair $(\mathcal J,M)$,
\begin{equation}
\inf_s\R_{s,\mathcal J,M}^*=0
\quad\Longleftrightarrow\quad
\R_{\mathcal J}^*=0\ \text{ and }\ C_{\inf}(\mathcal J,M)=0.
\end{equation}
Under zero--one loss with a finite label space and deterministic target $Y=y(X)$, $\R_{\phi,H}^*=0$ if and only if, for every $\varepsilon>0$, there exists $\kappa_\varepsilon\in\Pi_{\phi,H}$ such that
\begin{equation}
\E\!\left[1-\kappa_\varepsilon(\{Y\}\mid\phi(X))\right]<\varepsilon.
\end{equation}
When the infimum defining $\R_{\phi,H}^*$ is attained, this condition is equivalent to an executable kernel $\kappa^*$ satisfying $\kappa^*(\{Y\}\mid\phi(X))=1$ almost surely.
\end{theorem}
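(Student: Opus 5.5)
The first equivalence follows from the compensation decomposition \eqref{eq:decomp} together with monotonicity. Since $\R_{s,\mathcal J,M}^*=\R_{\mathcal J}^*+C_s(\mathcal J,M)$ for every $s$, with $\R_{\mathcal J}^*\ge0$ (the loss is nonnegative) and $C_s\ge0$, I would take the infimum over $s$. Because $\R_{\mathcal J}^*$ does not depend on $s$, this gives $\inf_s\R_{s,\mathcal J,M}^*=\R_{\mathcal J}^*+C_{\inf}(\mathcal J,M)$. A sum of two nonnegative numbers vanishes exactly when both summands vanish, which proves the first equivalence. Commuting the infimum past the constant term is legitimate without any cofinality assumption, because $\R_{\mathcal J}^*$ is a fixed finite number.

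For the zero--one characterization, I would compute the risk of a kernel directly. With $\A=\mathcal Y$ finite and $\ell(a,y)=\mathbf 1\{a\ne y\}$, definition \eqref{eq:kernelrisk} gives
\[
\mathcal L(\kappa\circ\phi)=\E\bigl[1-\kappa(\{Y\}\mid\phi(X))\bigr].
\]
Hence $\R_{\phi,H}^*=\inf_{\kappa\in\Pi_{\phi,H}}\E[1-\kappa(\{Y\}\mid\phi(X))]$. The statement that this infimum of nonnegative numbers equals $0$ is, by the definition of infimum, equivalent to the existence, for each $\varepsilon>0$, of $\kappa_\varepsilon$ with risk below $\varepsilon$. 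The deterministic target assumption $Y=y(X)$ only guarantees that $\{Y\}$ is a measurable function of $X$, so the integrand is measurable.

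For the attained case, suppose some $\kappa^*$ achieves risk $0$. The integrand $1-\kappa^*(\{Y\}\mid\phi(X))$ is nonnegative with zero expectation, so it vanishes almost surely, that is, $\kappa^*(\{Y\}\mid\phi(X))=1$ a.s. Conversely, such a kernel has zero risk, so $\R_{\phi,H}^*=0$ and the infimum is attained.

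None of these steps is a real obstacle. The only points needing care are measurability of the risk integrand and the nonnegativity used in the infimum-splitting step. Both come directly from the standing assumptions (a measurable nonnegative loss, standard Borel spaces) and from the earlier compensation proposition.
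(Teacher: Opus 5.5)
Your proposal is correct and follows the paper's proof essentially step for step. Both use the decomposition $\R^*_{s,\mathcal J,M}=\R^*_{\mathcal J}+C_s(\mathcal J,M)$ with nonnegative terms, pull the $s$-independent floor out of the infimum, and read the zero--one risk as $\E[1-\kappa(\{Y\}\mid\phi(X))]$; the approximation and attainment conditions then say exactly that the infimum of this nonnegative quantity vanishes. Your extra remarks (finiteness of the floor when commuting the infimum, and almost-sure vanishing of a nonnegative integrand with zero mean) only make explicit what the paper leaves implicit.
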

\begin{proof}
The joint-floor decomposition gives $\R_{s,\mathcal J,M}^*=\R_{\mathcal J}^*+C_s(\mathcal J,M)$ with both terms nonnegative. Since $\R_{\mathcal J}^*$ is constant in $s$, $\inf_s\R_{s,\mathcal J,M}^*=\R_{\mathcal J}^*+C_{\inf}(\mathcal J,M)$; the left-hand side is zero exactly when both terms on the right vanish. Under zero--one loss with deterministic $Y$, the risk of $\kappa$ is $\E[1-\kappa(\{Y\}\mid\phi(X))]$. The stated approximation and attainment conditions are therefore exactly the conditions for the infimum of this nonnegative quantity to vanish.
\end{proof}

\begin{corollary}[Universal interface refinement]
Let $\phi:\mathcal X\to\mathcal Z_\phi$ and $\psi:\mathcal X\to\mathcal Z_\psi$ be measurable interfaces on the same probability space, with standard Borel ranges. Under unrestricted measurable decision rules, bounded losses, and finite action spaces, the following are equivalent up to null sets:
\begin{enumerate}
  \item $\overline{\sigma(\phi(X))}\subseteq\overline{\sigma(\psi(X))}$;
  \item there is a measurable $h$ such that $\phi=h\circ\psi$ almost surely;
  \item for every finite-valued target on the underlying probability space, every finite action space, and every bounded loss, the Bayes risk through $\psi$ is no larger than the Bayes risk through $\phi$.
\end{enumerate}
Moreover, under zero--one loss with a finite label space and deterministic target, $\R_\phi^*=0$ exactly when $Y$ is measurable with respect to $\overline{\sigma(\phi(X))}$.
\end{corollary}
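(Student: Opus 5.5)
The plan is to prove the cycle (1)$\Rightarrow$(2)$\Rightarrow$(3)$\Rightarrow$(1), and then read off the zero--one statement as a special case.

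\emph{Step (1)$\Rightarrow$(2).} This is the Doob--Dynkin factorization lemma, applied modulo null sets. Since $\phi(X)$ takes values in a standard Borel space, inclusion of the completed $\sigma$-algebras means $\phi(X)$ agrees almost surely with a $\sigma(\psi(X))$-measurable random element. Fix a Borel isomorphism of $\mathcal Z_\phi$ with a Borel subset of $[0,1]$. The real-valued version is then $g(\psi(X))$ almost surely for some measurable $g$. Composing with the inverse isomorphism, and redefining $g$ off its preimage on a null set, gives $h$ with $\phi=h\circ\psi$ almost surely.

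\emph{Step (2)$\Rightarrow$(3).} This is the factorization certificate for executable dominance, applied with unrestricted kernel classes. Any rule $\kappa\circ\phi$ equals $(\kappa\circ h)\circ\psi$ almost surely. Hence every risk achievable through $\phi$ is achievable through $\psi$, and the Bayes risk through $\psi$ is no larger.

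\emph{Step (3)$\Rightarrow$(1).} This is the main obstacle, since we must pass from finite-valued tests to a full $\sigma$-algebra inclusion. Fix $B\in\sigma(\phi(X))$ and take the target $Y=\mathbf 1_B$, the actions $\{0,1\}$, and zero--one loss.
\begin{itemize}
\item Through $\phi$ the rule $a=\mathbf 1_B$ is measurable, so the Bayes risk through $\phi$ is $0$.
\item By (3), the Bayes risk through $\psi$ is also $0$. In the binary case that Bayes risk equals $\E[\min(p,1-p)]$ with $p=P(B\mid\psi(X))$, which exists by standard Borel regular conditioning.
\item Risk zero forces $p\in\{0,1\}$ almost surely. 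Then $\mathbf 1_B=\mathbf 1\{p=1\}$ almost surely, because $\E[\mathbf 1_B(1-p)]=\E[p(1-p)]=0$ and similarly on the complement.
\end{itemize}
So $B$ agrees up to a null set with a set in $\sigma(\psi(X))$, giving $\sigma(\phi(X))\subseteq\overline{\sigma(\psi(X))}$. Taking completions gives (1). The care needed is mostly bookkeeping: the decision rule minimizing the risk need not be attained, so I will work with the conditional-probability formula for the Bayes risk rather than with an optimal rule.

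\emph{Zero--one statement.} Let $Y$ take finitely many values and use zero--one loss. If $Y$ is $\overline{\sigma(\phi(X))}$-measurable, step (1)$\Rightarrow$(2) applied to $Y$ gives $Y=g(\phi(X))$ almost surely, and this rule has risk $0$. Conversely, the Bayes risk is $\E[1-\max_y P(Y=y\mid\phi(X))]$. If it is zero, then almost surely some label has conditional probability $1$. The same indicator argument as above, applied to each event $\{Y=y\}$, shows each of these events lies in the completed $\sigma$-algebra, so $Y$ is measurable. This matches the total-compensation theorem with unrestricted $\Pi$.
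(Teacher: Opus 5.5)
Your proof is correct, and its architecture matches the paper's: Doob--Dynkin for (1)$\Rightarrow$(2), simulation through $\kappa\circ h$ for (2)$\Rightarrow$(3), and the binary zero--one problem with target $\mathbf 1_B$ for (3)$\Rightarrow$(1). The difference is in how you close the last step.

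The paper takes rules $g_n(\psi(X))$ with error below $2^{-n}$ and applies Borel--Cantelli. This gives $g_n(\psi(X))=\mathbf 1_B$ eventually almost surely, so $\mathbf 1_B$ is an almost-sure limit of $\sigma(\psi(X))$-measurable functions. You instead compute the Bayes risk exactly as $\E[\min(p,1-p)]$ with $p=P(B\mid\psi(X))$, and read off $\mathbf 1_B=\mathbf 1\{p=1\}$ almost surely.

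Your route needs a conditional probability of a single event. That exists in general by Radon--Nikodym, so you do not need regular conditioning or the standard Borel hypothesis at this step. In exchange, it avoids any limiting argument and handles randomized kernels automatically. It also shows your worry about attainment is moot: $\mathbf 1\{p>1/2\}$ attains the infimum, and when the risk is zero it already equals $\mathbf 1_B$ almost surely.

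The paper's route uses only the value of the decision problem as an infimum. The cost is Borel--Cantelli plus an implicit reduction from randomized kernels to deterministic rules. For example, thresholding $\kappa(1\mid\cdot)$ at $1/2$ at most doubles the error.

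Your treatment of the label statement parallels the paper's "same argument on each label event." There, $\max_y P(Y=y\mid\phi(X))=1$ forces every conditional label probability into $\{0,1\}$ because they sum to one, and that is what lets your indicator argument apply label by label.
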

\begin{proof}
Information inclusion and measurable factorization are equivalent by the Doob--Dynkin factorization theorem for standard Borel ranges. Factorization lets every $\phi$-rule be simulated through $\psi$, proving universal risk dominance. Conversely, take any event $B\in\sigma(\phi(X))$ and the binary zero--one decision problem with target $\ind_B$. The interface $\phi$ attains zero risk, so universal dominance gives rules $g_n(\psi(X))$ with error below $2^{-n}$. Borel--Cantelli implies eventual almost-sure agreement with $\ind_B$, placing $B$ in $\overline{\sigma(\psi(X))}$. Thus completed information inclusion follows. The label statement applies the same argument to the finitely many label events.
\end{proof}

Total compensation for a deployed pair $(\mathcal J,M)$ occurs when $\R_{\mathcal J}^*=0$ and $C_{\inf}(\mathcal J,M)=0$. A zero joint floor means that the interface preserves sufficient task information and executable support contains rules whose risks approach zero. When a cofinal path exists, a zero resource-infimal compensation gap means that every such path approaches the optimum available in that joint structure. These conditions separate information availability, executable availability, and finite realization.

\begin{takeaway}
\centering\bfseries Emergence acts on distinctions preserved by the architecture. A task-relevant distinction that remains collapsed at the effective interface contributes to the joint floor at every realization level subordinate to that structural boundary.
\end{takeaway}

The static theory is organized by the capability square in Eq.~\eqref{eq:capabilitysquare}. Horizontal comparison separates literal executability from risk-equivalent capability; vertical comparison separates structural capability from finite realization. The asymptotic row limit $\widehat{\F}_\infty$ further distinguishes temporary resource shortage from set-level realization-family incompleteness, while $C^{\rm asym}(\mathcal J,M;Q,\ell)$ records the task-effective consequence of that separation. Structural improvement changes $\phi$, $\Pi_{\phi,H}$, or both. Realization-profile improvement changes $M$ and can enlarge $\F_\infty$; ceiling expansion changes $s$ and enlarges $\F_s$ inside the current profile; pathwise computation changes $\F_{\rm stop}$ inside the current budget family. Their mismatch is exactly where an architecture can be structurally richer yet less realizable under a particular profile, ceiling, or trajectory. The resource transformation below quantifies the budget needed to reproduce another deployed pair's finite capability, turning a generic scaling question into a coordinate-specific control question: which realization or structural level should move, and in which direction?

\end{textAtEnd}

\begin{textAtEnd}[category=interpretation]
\section{Applications to established LLM results}
\label{app:reinterpretation}
This module uses the capability system as an application layer. Each case follows the same order: an established result supplies the phenomenon, the framework locates the active capability coordinate, and that location generates an additional diagnostic, retention, restoration, or control consequence. The mathematical provenance of the core results remains in Appendix~\ref{app:core-comparison}.

\subsection{Component-level application map}
\begin{table}[H]
\centering
\caption{Established phenomena as capability-coordinate applications.}
\label{tab:components}
\scriptsize\setlength{\tabcolsep}{3pt}
\begin{tabularx}{\textwidth}{@{}>{\raggedright\arraybackslash}p{3.0cm}>{\raggedright\arraybackslash}p{2.5cm}X>{\raggedright\arraybackslash}p{3.7cm}@{}}
\toprule
Established phenomenon & Coordinate & Framework diagnosis & Generated consequence \\
\midrule
Observational or memory twins & $\phi$ & unresolved interface collision & observation, retrieval, or memory restoration can release the floor \\
Executable tool/grammar support & $H$ & semantic action enters executable support & reasoning value can change after support expansion \\
Algorithmic reasoning / test-time compute & $M,s,q$ & finite realization and reachability & ceiling sweeps and retention separate realization burden from selection \\
Routing and retrieval & $\phi$, $M$, or $q$ & access and search create different endpoints & endpoint class determines control direction \\
Self-correction and overthinking & reachability / selection & candidate quality and selector quality can move separately & retained-candidate probes attribute degradation \\
Multi-agent protocols & finite envelope / selection & protocol topology changes available policies and aggregation & retained fallback tests universal monotonicity \\
\bottomrule
\end{tabularx}
\end{table}

\subsection{Capability elicitation and adaptive evaluation}
Capability-elicitation studies show that different prompting, steering, or fine-tuning procedures can expose different fractions of a model's latent behavior \cite{hofstatter2025}; adaptive task elicitation constructs new evaluations that target systematic failure modes across domains \cite{brown2025task}. In the capability system, elicitation enlarges the observed inner hull while adaptive evaluation selects new support directions. Corollary~\ref{cor:main-behavioral-identification} supplies the generated consequence: evaluation coverage can be reported as an inner--outer geometric gap, probe budgets admit an explicit recovery bound, and held-out objectives receive a risk interval before they are queried. This turns adaptive evaluation into capability-envelope acquisition, with each benchmark contributing a support direction.

\subsection{Bayesian and observational analyses}
Formal Bayesian accounts of in-context learning update latent hypotheses from available evidence \cite{xie2022}. Equal likelihood under two latent worlds preserves their posterior odds, while evidence that changes the likelihood ratio separates them. In the capability coordinates this is an application of interface inheritance: the available experiment determines $\phi$, and observation or retrieval that changes the experiment moves the structural boundary. The additional framework consequence is intervention choice: a collision directs value toward evidence acquisition, while a finite-realization gap directs value toward computation inside the inherited class.

Histories with the same terminally available suffix occupy one effective record class; restoring task-relevant memory refines that record. Attention, positional encoding, selective context control, and long-context adaptation \cite{vaswani2017,su2021,zhang2024ssa,bansal2026} change how relational distinctions become accessible or realizable. The framework turns these mechanisms into a common measurement question: whether the intervention moves the structural envelope, its finite realization, or both.

\subsection{Reasoning, stopping, and executable support}
Overthinking and missing-premise studies show that additional reasoning can become costly or unproductive \cite{chen2025overthink,fan2025,zhang2026overthink}; adaptive-computation work prices additional reasoning directly \cite{graves2016,snell2025}. The capability reading separates available policy quality from selection and stopping. Retained-candidate measurements then generate an attribution test: oracle-prefix improvement together with deployed degradation locates the loss in the selection component.

Executable-support results supply the complementary case. Tool schemas, interpreters, parsers, and constrained decoders determine which semantic mappings can be executed \cite{schick2023,yao2023,scholak2021picard,geng2023grammar}. Once an executable mapping enters $H$, additional reasoning can acquire value that was absent under the previous support. This yields a matched intervention design crossing evidence access, executable support, and realization effort.

\end{textAtEnd}

\begin{textAtEnd}[category=foundation]
\section{From the static boundary to Ockham--Chatton necessity}

The static theory turns an undifferentiated scaling question into a coordinate-specific boundary-selection question. Capability geometry is neutral: every realized non-stop transition is described by $\Gamma(S,u,S')$, while the transition kernel induces a distribution over strong outward, inward, reconfiguring, and stationary outcomes. Computation and search can move current reachability outward within a fixed terminal ceiling; a budget-ceiling transformation can move the budget boundary; pruning can move reachability, budget, or structural capability inward; evidence refinement and executable expansion move structural capability outward. Ockhamian control concerns how inherited structural possibilities are exploited, organized, restrained, or terminated, while Chattonian-containing actions have positive endpoint structural-novelty probability, typically through new distinctions or executable mappings.

For a fixed current task law and loss, the nested risk boundaries sharpen the structural trigger into a level-specific control criterion. Strict target location identifies the minimum capability level that must change; equality at a boundary additionally depends on attainment of the corresponding risk infimum. Once the target is attainable by the current stop family, stopping becomes a feasible terminal choice and trajectory value decides whether further control is worthwhile. The dynamic problem therefore combines task-risk projection, realized coordinate effects, capability geometry, and complete trajectory value.

\begin{proposition}[Structural contraction cannot lower the floor]
Suppose $\phi'=c\circ\phi$ almost surely for a measurable map $c$, and
\begin{equation}
\{\kappa\circ c:\kappa\in\Pi_{\phi',H'}\}\subseteq\Pi_{\phi,H}.
\end{equation}
Then
\begin{equation}
\R_{\phi',H'}^*\ge\R_{\phi,H}^*.
\end{equation}
\end{proposition}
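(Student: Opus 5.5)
The plan is to show that the contracted structure $(\phi',H')$ is literally simulated by the original structure $(\phi,H)$. The floor inequality then follows because it is an infimum over a smaller set. This mirrors the factorization certificate for executable dominance proved earlier, with the roles of old and new structure exchanged.

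First, fix an arbitrary decision kernel in the contracted class, $d'=\kappa\circ\phi'\in\mathcal D_{(\phi',H')}$ with $\kappa\in\Pi_{\phi',H'}$. Since $\phi'=c\circ\phi$ almost surely, for almost every $x$ and every measurable $B\subseteq\A$ we have
\begin{equation}
d'(B\mid x)=\kappa(B\mid c(\phi(x)))=((\kappa\circ c)\circ\phi)(B\mid x).
\end{equation}
Here $\kappa\circ c$ denotes the kernel $z\mapsto\kappa(\cdot\mid c(z))$ on $\mathcal Z$. This kernel is measurable because $c$ is measurable and $\kappa$ is a Markov kernel.

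By the hypothesis $\{\kappa\circ c:\kappa\in\Pi_{\phi',H'}\}\subseteq\Pi_{\phi,H}$, the kernel $\kappa\circ c$ lies in $\Pi_{\phi,H}$. Hence $e:=(\kappa\circ c)\circ\phi\in\mathcal D_{(\phi,H)}$, and $e$ agrees with $d'$ for $P_X$-almost every $x$.

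Next, compare risks. The expected loss $\mathcal L_{P,\ell}$ integrates the kernel against the $X$-marginal, so two kernels that agree $P_X$-almost surely have identical risk: $\mathcal L_{P,\ell}(d')=\mathcal L_{P,\ell}(e)\ge\R^*_{\phi,H}$. This uses only nonnegativity and measurability of $\ell$, so Tonelli applies and no boundedness is needed. Taking the infimum over $d'$ gives $\R^*_{\phi',H'}\ge\R^*_{\phi,H}$.

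The only delicate point is the almost-sure qualifier. The contracted class is not literally a subset of $\mathcal D_{(\phi,H)}$; it is a subset only modulo $P_X$-null sets, because $\phi'=c\circ\phi$ holds only almost surely. I would handle this by stating explicitly that risk depends on a kernel only through its $P_X$-equivalence class. Apart from this step, the argument is a direct transcription of the factorization-certificate proof.
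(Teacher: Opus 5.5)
Your proposal is correct and matches the paper's argument: each contracted kernel $\kappa$ lifts to $\kappa\circ c\in\Pi_{\phi,H}$ with the same conditional action law and risk. Hence every risk attainable after contraction is attainable before it, and the infimum cannot decrease. Your explicit handling of the $P_X$-null set where $\phi'\neq c\circ\phi$ makes precise a point that the paper's one-line proof leaves implicit.
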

\begin{proof}
Every kernel $\kappa\in\Pi_{\phi',H'}$ induces $\kappa\circ c\in\Pi_{\phi,H}$ with the same conditional action law and risk. Every risk attainable after contraction is therefore attainable before contraction, so the infimum over the contracted structure is no smaller.
\end{proof}

\begin{theorem}[Task-floor invariance and structural necessity]
Let $\iota_t$ denote the task evidence retained and accessible to the deployed system, let $\mathcal G_t=\sigma(\iota_t)$, and let $P_t=\mathcal L((X,Y)\mid\mathcal G_t)$ be a regular conditional task law. Analyst-only latent information is excluded from $\mathcal G_t$, and every terminal task interface satisfies $\sigma(\phi_t(X))\subseteq\mathcal G_t$ up to completion. Consider a finite trajectory with no exogenous task-law drift. Suppose every pre-stopping action preserves the accessible evidence state, is pure computation, structural contraction of the task interface or executable support, or controller-state refinement, and leaves the terminal semantic rule's task structure unchanged except for that contraction. Then $\mathcal G_t=\mathcal G_0$, $P_t=P_0$ almost surely, and
\begin{equation}
\R_{s_\tau,\phi_\tau,H_\tau}^*(P_0)\ge\R_{\phi_0,H_0}^*(P_0).
\label{eq:taskfloornecessity}
\end{equation}
Consequently, every finite trajectory reaching
\begin{equation}
\R_{s_\tau,\phi_\tau,H_\tau}^*(P_\tau)\le\rho<\R_{\phi_0,H_0}^*(P_0)
\end{equation}
contains either a change in system-accessible task evidence or an endpoint satisfying
\begin{equation}
\widehat{\F}_{s_\tau}(\mathcal J_\tau,M_\tau)
\not\subseteq
\widehat{\mathcal D}_{\mathcal J_0}.
\end{equation}
The latter certifies task-level capability outside the inherited risk envelope. Monotone task-interface refinement and executable expansion are canonical ways to create such an endpoint; a non-comparable replacement can arise through the preceding enrichment--contraction factorization.
\end{theorem}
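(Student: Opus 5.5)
The plan is an induction along the trajectory, followed by a contrapositive for the consequence.

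\textbf{Invariance of the evidence state.} Proceed step by step over the finite trajectory $t=0,\dots,\tau$. Every pre-stopping action is one of four kinds: evidence-preserving computation, a structural contraction, controller-state refinement, or an action that leaves task structure unchanged. None of these alters the retained accessible task evidence, so $\sigma(\iota_{t+1})=\sigma(\iota_t)$ up to completion at each step. Controller-state refinement only refines $\mu$, which is excluded from $\iota$ by the meta-interface discussion; analyst-only latents are excluded by hypothesis. Induction then gives $\mathcal G_\tau=\mathcal G_0$. With no exogenous drift, the regular conditional law $P_t=\mathcal L((X,Y)\mid\mathcal G_t)$ depends only on $\mathcal G_t$. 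Regular conditional laws are a.s.\ unique on standard Borel spaces, so $P_\tau=P_0$ a.s.

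\textbf{Floor inequality \eqref{eq:taskfloornecessity}.} Next, track the structure $(\phi_t,H_t)$. Pure computation and controller refinement leave $\mathcal J_t$ fixed and only move $M$, $q$, or $s$. A contraction step satisfies the hypotheses of the preceding proposition on structural contraction: $\phi'=c\circ\phi$, and the lifted kernel class is contained in the old one. That proposition gives $\R^*_{\phi_{t+1},H_{t+1}}\ge\R^*_{\phi_t,H_t}$. Composing the factorization maps along the trajectory gives a literal simulation, $\mathcal D_{\mathcal J_\tau}\subseteq\mathcal D_{\mathcal J_0}$ modulo $P_0$-null sets. Hence $\R^*_{\phi_\tau,H_\tau}(P_0)\ge\R^*_{\phi_0,H_0}(P_0)$. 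The compensation decomposition \eqref{eq:decomp} then gives $\R^*_{s_\tau,\phi_\tau,H_\tau}\ge\R^*_{\phi_\tau,H_\tau}$, which chains into \eqref{eq:taskfloornecessity}.

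\textbf{The consequence, by contrapositive.} Suppose a trajectory reaches risk $\le\rho<\R^*_{\phi_0,H_0}(P_0)$ and the accessible evidence never changes. Then $P_\tau=P_0$, as in the first step, without using the action-type restriction. If also $\widehat{\F}_{s_\tau}(\mathcal J_\tau,M_\tau)\subseteq\widehat{\mathcal D}_{\mathcal J_0}$, then, because bounded expected risk is linear and continuous on finite kernels,
\[
\R^*_{s_\tau}(P_0)=\inf_{\widehat{\F}_{s_\tau}}\mathcal L_{P_0}\ge\inf_{\widehat{\mathcal D}_{\mathcal J_0}}\mathcal L_{P_0}=\R^*_{\mathcal J_0}(P_0)>\rho.
\]
This contradicts the assumption. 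So one of the two alternatives must hold. The non-inclusion then certifies capability outside the inherited risk envelope by definition. Monotone refinement and executable expansion produce it through the factorization certificate, and non-comparable replacement produces it through the enrichment--contraction factorization proposition.

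\textbf{Main obstacle.} I expect the first step to be the hardest. It must formalize why the listed action types cannot change $\mathcal G_t$. It must also ensure that $\sigma(\phi_t(X))\subseteq\mathcal G_t$ keeps the terminal interface within the conditioned evidence, so that risk under $P_\tau$ is computed relative to the same law. I would handle this by placing $\iota_t$ in the augmented Markov state, as in the Markov-sufficiency paragraph. Each admissible action would then be defined to act as the identity on the $\iota$ coordinate, and a single invariance lemma proved per step. Some care is also needed with completions and null sets when composing the factorizations $c_t$, but that part is routine.
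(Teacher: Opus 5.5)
Your proposal is correct and follows essentially the paper's own argument. Evidence preservation gives $\mathcal G_t=\mathcal G_0$ and $P_t=P_0$; this is directly a hypothesis, so the step you flag as the main obstacle is in fact immediate. The structural-contraction proposition plus the nonnegative compensation gap give the floor inequality, and the consequence holds because an infimum of $\mathcal L_{P_0}$ over a subset of $\widehat{\mathcal D}_{\mathcal J_0}$ cannot fall below $\R^*_{\phi_0,H_0}(P_0)$.

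One small correction in the interpretive tail: the factorization certificate only shows that the enriched structure contains the old class. What yields $\widehat{\F}_{s_\tau}(\mathcal J_\tau,M_\tau)\not\subseteq\widehat{\mathcal D}_{\mathcal J_0}$ is strict, task-relevant enrichment that is realized at budget $s_\tau$, not the certificate itself.
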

\begin{proof}
Because every action preserves the accessible evidence state and no exogenous drift occurs, $\mathcal G_t=\mathcal G_0$ and the regular conditional law remains $P_0$ along the trajectory. Pure computation and controller-state refinement preserve the task-level structural pair. Structural contraction leaves the $P_0$-risk floor unchanged or raises it by the preceding proposition. Hence $\R_{\phi_\tau,H_\tau}^*(P_0)\ge\R_{\phi_0,H_0}^*(P_0)$. Adding the nonnegative finite-family gap under $P_0$ yields Eq.~\eqref{eq:taskfloornecessity}. For the consequence, if accessible evidence is unchanged then $P_\tau=P_0$. A terminal risk strictly below the initial structural floor implies that some decision in the terminal finite envelope has lower risk than every decision in $\widehat{\mathcal D}_{\mathcal J_0}$, so the terminal finite envelope cannot be contained in $\widehat{\mathcal D}_{\mathcal J_0}$. Evidence contraction falls outside the pathwise invariance premise: as an information coarsening, it is evaluated through its expected decision-risk effect. Evidence retained in a memory, summary, or terminally usable controller state remains part of $\iota_t$.
\end{proof}

\begin{corollary}[Bounded-drift structural necessity]
Assume the terminal loss satisfies $0\le\ell\le L$ and allow exogenous task-law drift satisfying
\begin{equation}
\TV(P_\tau,P_0)\le\varepsilon.
\end{equation}
If the trajectory otherwise contains only computation, controller refinement, or structural contraction, with no task-relevant evidence enrichment or executable expansion, then
\begin{equation}
\R_{s_\tau,\phi_\tau,H_\tau}^*(P_\tau)
\ge \R_{\phi_0,H_0}^*(P_0)-L\varepsilon.
\label{eq:boundeddriftnecessity}
\end{equation}
Hence any improvement below the inherited floor by more than $L\varepsilon$ certifies task-level boundary movement after the drift allowance.
\end{corollary}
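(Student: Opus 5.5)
The plan is to reduce the drift case to the zero-drift necessity theorem just proved, and then pay for the drift with a total-variation perturbation bound. Throughout, the trajectory contains only computation, controller refinement or structural contraction, with no task-relevant evidence enrichment or executable expansion.

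\textbf{Step 1: membership in the inherited class.} I first show that every terminal finite decision lies inside the inherited structural class, up to $P_0$-equivalence.
\begin{itemize}
\item Computation and controller-state refinement leave $(\phi,H)$ unchanged.
\item Each contraction step satisfies the lifting inclusion of the proposition that structural contraction cannot lower the floor. Composing these steps along the finite trajectory, every $d\in\F_{s_\tau}(\mathcal J_\tau,M_\tau)$ equals, almost surely, some kernel $\kappa\circ\phi_0$ with $\kappa\in\Pi_{\phi_0,H_0}$.
\item Hence $d\in\mathcal D_{\mathcal J_0}$, and therefore $\mathcal L_{P_0,\ell}(d)\ge\R^*_{\phi_0,H_0}(P_0)$.
\end{itemize}
This is the same membership argument used in the task-floor invariance theorem, with the compensation gap simply dropped.

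\textbf{Step 2: the drift bound.} For any kernel $d$ and bounded loss $0\le\ell\le L$, the map $(x,y)\mapsto\int\ell(a,y)\,d(da\mid x)$ takes values in $[0,L]$. Its expectations under $P_\tau$ and $P_0$ therefore differ by at most $L\,\TV(P_\tau,P_0)\le L\varepsilon$. Consequently, for every $d\in\F_{s_\tau}(\mathcal J_\tau,M_\tau)$,
\[
\mathcal L_{P_\tau,\ell}(d)\ge\mathcal L_{P_0,\ell}(d)-L\varepsilon\ge\R^*_{\phi_0,H_0}(P_0)-L\varepsilon .
\]
Taking the infimum over the terminal finite family gives Eq.~\eqref{eq:boundeddriftnecessity}. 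The certification clause is the contrapositive: an improvement exceeding $L\varepsilon$ below the floor is incompatible with the hypotheses.

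\textbf{Main obstacle.} The delicate step is justifying Step 1 when drift is present. In the earlier theorem, membership in the inherited class was stated "almost surely" with respect to the fixed law $P_0$. Under drift, an a.s.\ identity relative to $P_0$ could fail on a set that is $P_0$-null but charged by $P_\tau$.

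To handle this, I will use that the interfaces and contraction maps are deterministic and that the lifting identities $\phi'=c\circ\phi$ hold pointwise, or at least a.s.\ under a measure dominating both $P_0$ and $P_\tau$. This makes $d=\kappa\circ\phi_0$ hold identically, so the equality is independent of which law is used.

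If only $P_0$-a.s.\ identities are available, I would instead apply the TV bound to the actual terminal kernel $d$ and its $P_0$-version $\kappa\circ\phi_0$ separately. Their losses agree under $P_0$, and I would then state the needed null-set compatibility as a regularity assumption, matching the "structure preservation" premise of Theorem~\ref{thm:main-bounded-drift}.
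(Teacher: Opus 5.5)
Your proof is correct and follows the paper's argument: contraction monotonicity under the fixed law $P_0$, so every terminal finite decision has $P_0$-risk at least $\R^*_{\phi_0,H_0}(P_0)$, combined with the $L\,\TV(P_\tau,P_0)$ Lipschitz shift of bounded expected loss; you apply this shift decision by decision before taking the infimum, while the paper takes infima first. The null-set ``obstacle'' you raise does not arise, because your chain applies the TV bound to the literal kernel $d$ and uses the $P_0$-a.s.\ identity $d=\kappa\circ\phi_0$ only to evaluate $\mathcal L_{P_0,\ell}(d)$, so you need neither pointwise identities nor an extra regularity assumption.
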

\begin{proof}
For every fixed decision, bounded loss gives an $L\TV(P_\tau,P_0)$ change in expected risk; taking infima preserves the same Lipschitz bound. Apply this to the terminal class and combine it with structural-contraction monotonicity under $P_0$.
\end{proof}

The theorem converts the joint floor into a boundary criterion. Evidence-acquiring refinement supplies missing distinctions, executable expansion supplies missing mappings, and replacement reorganizes both through an enrichment--contraction path. Dynamic capability control has two axes: capability level and movement direction. The level ranges over current reachability, budget realizability, asymptotic realizability, and structural capability; the monotone direction is outward or inward, while realized endpoints may also be stationary or incomparable. Computation and search typically act on current reachability, ceiling transformations act on budget realizability, realization-profile transformations act on asymptotic realizability, and structural transformations act on the inherited capability set.

\section{Dynamic Ockham--Chatton control}
\label{app:dynamic}

\subsection{State, actions, and trajectory value}

At time $t$, let
\begin{equation}
S_t=(\iota_t,P_t,\phi_t,H_t,M_t,s_t,b_t,q_t),
\end{equation}
where $\iota_t$ represents all task evidence retained and accessible to the deployed system, $\mathcal G_t:=\sigma(\iota_t)$, and $P_t:=\mathcal L((X,Y)\mid\mathcal G_t)$ is a regular conditional task law. Write $\mathcal J_t=(\phi_t,H_t)$. The interface $\phi_t$ is the effective task interface available to the terminal semantic rule, $H_t$ the current executable support description, $M_t$ the current realization profile, $s_t$ the terminal realization ceiling that determines the at-most-resource terminal family, $b_t$ the remaining control-trajectory budget governing future intervention feasibility, and $q_t$ the controller's diagnostic and active algorithmic state, including traces, candidate sets, and search-routing state. The coordinates $s_t$ and $b_t$ need not share units or satisfy an order relation: $s_t$ parameterizes terminal realization capability, whereas $b_t$ prices and constrains future control actions. Access routing that changes terminally available task distinctions is represented in $\iota_t$ and $\phi_t$; search routing over already available distinctions is represented in $q_t$. We impose the deployment-consistency condition
\begin{equation}
\sigma(\phi_t(X))\subseteq\mathcal G_t
\quad\text{up to completion},
\label{eq:interfaceconsistency}
\end{equation}
so the terminal interface cannot contain analyst-only latent information absent from the system-accessible evidence. Controller-generated representations may reorganize $\iota_t$ but cannot violate this information inclusion. Analyst-only latent variables belong to the transition model, not to $\mathcal G_t$; conditioning on information unavailable to the system would create an artificial risk reduction. If a memory, summary, or controller state is usable by the terminal decision and contains task evidence, that evidential content belongs to $\iota_t$. If it is erased from every accessible component, the next conditional law is formed from the evidence that remains accessible.

For a state $S=(\iota,P_S,\phi,H,M,s,b,q)$, let $\F_{\rm stop}(S)$ be the measurable decision kernels that the system can select and execute by stopping in that state. The dynamic realization hierarchy is
\begin{equation}
\varnothing\ne
\F_{\rm stop}(S)
\subseteq
\F_s(\mathcal J(S),M(S))
\subseteq
\F_\infty(\mathcal J(S),M(S))
\subseteq
\mathcal D_{\mathcal J(S)}.
\label{eq:fourlevelrealization}
\end{equation}
The four levels have distinct causal ownership. $\mathcal D_{\mathcal J}$ is determined by the inherited joint structure $\mathcal J=(\phi,H)$. The asymptotic family $\F_\infty(\mathcal J,M)$ is determined by the deployed pair $(\mathcal J,M)$. The budget family $\F_s(\mathcal J,M)$ additionally depends on the terminal realization ceiling $s$. Finally, $\F_{\rm stop}(S)$ is current pathwise reachability after the realized trace, routing, candidate construction, and pruning history. Thus structural transformation, realization-mechanism transformation, ceiling transformation, and pathwise search act at distinct levels of the same inclusion chain. When the terminal task and ambient action spaces are finite, define
\begin{equation}
\widehat{\F}_{\rm stop}(S):=\clco\F_{\rm stop}(S),
\end{equation}
so the risk-envelope chain is
\begin{equation}
\widehat{\F}_{\rm stop}(S)
\subseteq
\widehat{\F}_s(\mathcal J(S),M(S))
\subseteq
\widehat{\F}_\infty(\mathcal J(S),M(S))
\subseteq
\widehat{\mathcal D}_{\mathcal J(S)}.
\label{eq:fourlevelriskenvelope}
\end{equation}

These inclusions induce four capability boundaries:
\begin{align}
\mathfrak B_{\rm reach}(S)
&:=\bigl(\F_{\rm stop}(S),\widehat{\F}_{\rm stop}(S)\bigr),\\
\mathfrak B_{\rm budget}(S)
&:=\mathfrak B_s(\mathcal J(S),M(S)),\\
\mathfrak B_{\rm asym}(S)
&:=\bigl(\F_\infty(\mathcal J(S),M(S)),\widehat{\F}_\infty(\mathcal J(S),M(S))\bigr),\\
\mathfrak B_{\rm structural}(S)
&:=\mathfrak B_{\rm str}(\mathcal J(S)).
\label{eq:dynamicboundaries}
\end{align}
They satisfy
\begin{equation}
\mathfrak B_{\rm reach}(S)
\preceq_{\rm cap}
\mathfrak B_{\rm budget}(S)
\preceq_{\rm cap}
\mathfrak B_{\rm asym}(S)
\preceq_{\rm cap}
\mathfrak B_{\rm structural}(S).
\label{eq:boundarychain}
\end{equation}
Thus resource availability and current pathwise reachability are separate coordinates. Search, verification, or reasoning can move $\mathfrak B_{\rm reach}$ outward at fixed $s$ without changing $\mathfrak B_{\rm budget}$; increasing the terminal realization ceiling moves $\mathfrak B_{\rm budget}$; improving the realization mechanism can move $\mathfrak B_{\rm asym}$; information or executable transformation can move $\mathfrak B_{\rm structural}$ and thereby reshape all lower boundaries.

The corresponding state-conditioned risks are
\begin{align}
\R_{\rm floor}(S)
&:=\R_{\mathcal J(S)}^*(P_S),\\
\R_{\rm asym}(S)
&:=\R_{\infty,\mathcal J(S),M(S)}^*(P_S),\\
\R_{\rm budget}(S)
&:=\R_{s(S),\mathcal J(S),M(S)}^*(P_S),\\
\R_{\rm reach}(S)
&:=\inf_{d\in\F_{\rm stop}(S)}\mathcal L_{P_S}(d)
=: \R_{\rm fin}(S).
\label{eq:dynamicrisks}
\end{align}
Define the three realization gaps
\begin{align}
C_{\rm reach}(S)&:=\R_{\rm reach}(S)-\R_{\rm budget}(S)\ge0,\\
C_{\rm fin}(S)&:=\R_{\rm budget}(S)-\R_{\rm asym}(S)\ge0,\\
C_{\rm asym}(S)&:=\R_{\rm asym}(S)-\R_{\rm floor}(S)\ge0.
\label{eq:threegaps}
\end{align}
Then the state-conditioned compensation gap has the exact decomposition
\begin{equation}
C_{\rm gap}(S)
:=\R_{\rm fin}(S)-\R_{\rm floor}(S)
=C_{\rm reach}(S)+C_{\rm fin}(S)+C_{\rm asym}(S).
\label{eq:dynamicthreegap}
\end{equation}
$C_{\rm reach}$ measures pathwise search and pruning shortfall relative to what the current budget permits; $C_{\rm fin}$ measures finite-resource shortfall relative to the asymptotic realization family; $C_{\rm asym}$ measures the task-effective limit of that realization family relative to inherited structural capability. When $\F_{\rm stop}(S)=\F_s(\mathcal J(S),M(S))$, the reachability gap vanishes and the dynamic decomposition reduces to the static two-gap decomposition under the conditional law $P_S$.

\begin{proposition}[Strict four-level task-risk intervention criterion]
Fix a state $S$, its current conditional task law $P_S$, and task loss $\ell$. Let $\varrho\ge0$ be a target task-risk level, and suppose $\varrho$ is strictly separated from the four nested risk boundaries
\begin{equation}
\R_{\rm floor}(S)\le \R_{\rm asym}(S)\le \R_{\rm budget}(S)\le \R_{\rm reach}(S).
\label{eq:targetriskchain}
\end{equation}
Then the location of $\varrho$ identifies the lowest capability level that must change for the target $R\le\varrho$ to become attainable under the fixed pair $(P_S,\ell)$:
\begin{align}
\R_{\rm reach}(S)<\varrho
&\Longrightarrow \text{current stop family sufficient},\label{eq:targetcase1}\\
\R_{\rm budget}(S)<\varrho<\R_{\rm reach}(S)
&\Longrightarrow \text{reachability-or-higher change},\label{eq:targetcase2}\\
\R_{\rm asym}(S)<\varrho<\R_{\rm budget}(S)
&\Longrightarrow \text{ceiling-or-higher change},\label{eq:targetcase3}\\
\R_{\rm floor}(S)<\varrho<\R_{\rm asym}(S)
&\Longrightarrow \text{profile-or-structural change},\label{eq:targetcase4}\\
\varrho<\R_{\rm floor}(S)
&\Longrightarrow \text{structural change}.\label{eq:targetcase5}
\end{align}
Here ``higher'' follows the inclusion chain $\F_{\rm stop}\subseteq\F_s\subseteq\F_\infty\subseteq\mathcal D_{\mathcal J}$.
\end{proposition}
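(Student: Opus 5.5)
The plan is to prove the chain \eqref{eq:targetriskchain} first and then read off each case from one monotonicity principle. The chain follows from the inclusion hierarchy \eqref{eq:fourlevelrealization}, $\F_{\rm stop}(S)\subseteq\F_s\subseteq\F_\infty\subseteq\mathcal D_{\mathcal J(S)}$. Each risk in \eqref{eq:dynamicrisks} is an infimum of the same functional $\mathcal L_{P_S,\ell}$ over one of these sets, so larger sets give smaller infima. For $\R_{\rm asym}$ I will use the identity $\R_{\infty}^*=\inf_s\R_s^*$ from \eqref{eq:Rinfinity}. Nonnegativity of the three gaps in \eqref{eq:threegaps} is this same statement.

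Next I will formalize what ``level $k$ must change'' means. Attaining the target means producing a stop family whose risk is at most $\varrho$ under the fixed pair $(P_S,\ell)$. If every set at levels $\le k$ is held fixed, that stop family is still contained in the level-$k$ set of the chain. Its risk is therefore at least the level-$k$ risk. For instance, holding $(\mathcal J,M,s)$ fixed while changing only the trace $q$ keeps every new stop family inside $\F_s(\mathcal J,M)$, so its risk stays at least $\R_{\rm budget}(S)$. Likewise, fixing $(\mathcal J,M)$ keeps everything inside $\F_\infty$, and fixing $\mathcal J$ keeps everything inside $\mathcal D_{\mathcal J}$. So if $\varrho$ lies strictly below the level-$k$ risk, some coordinate at level $k$ or higher must move. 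This gives the ``or-higher'' clauses in \eqref{eq:targetcase2}--\eqref{eq:targetcase5}. Case \eqref{eq:targetcase5} is the sharpest: structural change is forced. This is exactly the invariance argument of Theorem~\ref{thm:main-bounded-drift} at zero drift, or equivalently the task-floor necessity theorem.

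I also want to show that each named level is the lowest that must change, not merely a sufficient one. When $\varrho$ sits strictly above the next-higher risk, the infimum there is strictly below $\varrho$. So some decision in the next-higher set already has risk below $\varrho$, which shows that moving that level alone is enough in principle. Case \eqref{eq:targetcase1} is the degenerate version of this: $\R_{\rm reach}(S)<\varrho$ gives an element $d\in\F_{\rm stop}(S)$ with $\mathcal L_{P_S}(d)<\varrho$, so stopping now suffices. Strict separation matters here. It lets me avoid assuming the infima are attained, which the paper flags as the equality-case issue.

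The main obstacle is being precise about the claim ``a change at level $j$ cannot move lower sets outside level $j$''. The fix is to state it as an invariant of the inclusion chain under the causal-ownership convention: a pure reachability action preserves $(\mathcal J,M,s)$, a ceiling change preserves $(\mathcal J,M)$, and a profile change preserves $\mathcal J$. I will also hold $P_S$ fixed, as the statement does. With that convention made explicit, every remaining step is an infimum-over-subset comparison.
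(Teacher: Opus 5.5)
Your proposal is correct and follows the paper's proof. Both arguments apply two infimum facts successively along the nested chain $\F_{\rm stop}\subseteq\F_s\subseteq\F_\infty\subseteq\mathcal D_{\mathcal J}$ under the fixed pair $(P_S,\ell)$: lying strictly above an infimum yields a decision in that class with risk below $\varrho$, and lying strictly below it rules out every decision in the class. Strict separation is what removes the need for attainment. Your causal-ownership convention (pure reachability preserves $(\mathcal J,M,s)$, a ceiling change preserves $(\mathcal J,M)$, a profile change preserves $\mathcal J$) adds no new idea; it only spells out how the paper reads ``must change'' off the inclusion chain.
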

\begin{proof}
All four risks are infima of the same expected-loss functional $\mathcal L_{P_S,\ell}$ over nested decision classes. If $\varrho$ is strictly larger than the infimum over a class, the definition of infimum gives a decision in that class with risk strictly below $\varrho$. If $\varrho$ is strictly smaller than the infimum, no decision in that class can attain risk at most $\varrho$. Applying these two facts successively to $\F_{\rm stop}$, $\F_s$, $\F_\infty$, and $\mathcal D_{\mathcal J}$ yields the five cases without an attainment assumption.
\end{proof}

\begin{proposition}[Boundary exact attainment]
Fix the same $(S,P_S,\ell)$. For any one of the four capability classes
\[
\mathcal F\in\{\F_{\rm stop}(S),\F_{s(S)}(\mathcal J(S),M(S)),\F_\infty(\mathcal J(S),M(S)),\mathcal D_{\mathcal J(S)}\},
\]
write $R_{\mathcal F}:=\inf_{d\in\mathcal F}\mathcal L_{P_S,\ell}(d)$. At the equality target $\varrho=R_{\mathcal F}$, exact target feasibility within $\mathcal F$ holds if and only if the infimum $R_{\mathcal F}$ is attained in $\mathcal F$. For every $\varepsilon>0$, an $\varepsilon$-optimal decision with risk below $R_{\mathcal F}+\varepsilon$ exists by the definition of infimum.
\end{proposition}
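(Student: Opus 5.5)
The statement reduces to the definition of an infimum, applied separately to each of the four nested classes, so the plan is to argue for one generic class $\mathcal F$ with $R_{\mathcal F}:=\inf_{d\in\mathcal F}\mathcal L_{P_S,\ell}(d)$. First I would fix what ``exact target feasibility within $\mathcal F$'' means at $\varrho=R_{\mathcal F}$: there exists $d\in\mathcal F$ with $\mathcal L_{P_S,\ell}(d)\le\varrho$. Each of the four classes is nonempty by Eq.~\eqref{eq:fourlevelrealization}, since $\F_{\rm stop}(S)\ne\varnothing$ and the chain only enlarges. Because $\ell\ge0$, $R_{\mathcal F}$ is then a finite real number in $[0,\infty)$, or possibly $+\infty$ if $\ell$ is merely nonnegative. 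I would handle the infinite case by noting that feasibility and attainment both hold trivially, so assume $R_{\mathcal F}<\infty$.

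For the equivalence, the two directions are immediate. If the infimum is attained by some $d^\star\in\mathcal F$, then $\mathcal L_{P_S,\ell}(d^\star)=R_{\mathcal F}=\varrho$, so the target $R\le\varrho$ is met within $\mathcal F$. Conversely, suppose some $d\in\mathcal F$ satisfies $\mathcal L_{P_S,\ell}(d)\le R_{\mathcal F}$. Since $R_{\mathcal F}$ is a lower bound on risks over $\mathcal F$, also $\mathcal L_{P_S,\ell}(d)\ge R_{\mathcal F}$. Hence equality holds and $d$ attains the infimum. The $\varepsilon$-optimality clause is the approximation property of the infimum: for $\varepsilon>0$, the number $R_{\mathcal F}+\varepsilon$ is not a lower bound, so some $d_\varepsilon\in\mathcal F$ has risk strictly below it.

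There is no real obstacle. The only point needing care is to state that the argument is class-by-class and uses no structure of the nesting, so that it complements the strict-separation criterion of the preceding proposition exactly at its excluded equality cases. I would also remark, without proving it, that attainment can fail for the literal classes but holds for closed convex envelopes on finite $\mathcal X,\A$. That holds by compactness of the kernel polytope and continuity of linear risk, but it is not needed for the statement.
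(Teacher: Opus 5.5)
Your proposal is correct and follows the paper's proof. Attainment makes the optimizer feasible at $\varrho=R_{\mathcal F}$, any feasible $d\in\mathcal F$ is forced to equality by the lower-bound property of the infimum, and the $\varepsilon$-clause is the approximation property of the infimum. Your extra care about nonemptiness of $\F_{\rm stop}(S)$ and finiteness of $R_{\mathcal F}$ is a small refinement the paper leaves implicit: when $R_{\mathcal F}=+\infty$ the strict inequality ``below $R_{\mathcal F}+\varepsilon$'' cannot hold, so that clause does require $R_{\mathcal F}<\infty$.
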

\begin{proof}
If the infimum is attained, its optimizer is feasible at the equality target. Conversely, any $d\in\mathcal F$ with $\mathcal L_{P_S,\ell}(d)\le R_{\mathcal F}$ must attain the infimum. The final statement is the defining approximation property of an infimum.
\end{proof}

The two propositions separate capability geometry from optimization regularity. For a fixed current $(P_S,\ell)$, strict target location determines the minimum capability level whose movement is necessary for task-risk attainability; equality at a risk boundary additionally depends on exact attainment at that level. Once the current stop family contains a target-feasible decision, stopping is feasible with respect to the task-risk target, while the Bellman value $Q^*$ still determines whether stopping is trajectory-optimal.

Boundary direction is fundamentally a property of a realized non-stop stochastic transition. Stop is trajectory termination and is treated separately from capability-order movement. Let
\begin{equation}
\mathcal B_{\rm cap}:=\{\mathrm{reachability},\mathrm{budget},\mathrm{asymptotic},\mathrm{structural}\},
\qquad
\mathcal D_{\rm move}:=\{\mathrm{outward},\mathrm{inward}\},
\end{equation}
and identify $\mathfrak B_b(S)$ with the corresponding boundary in Eq.~\eqref{eq:dynamicboundaries}. For a realized non-stop transition $(S,u,S')$, define
\begin{align}
(b,\mathrm{outward})\in\Gamma(S,u,S')
&\Longleftrightarrow
\mathfrak B_b(S)\prec_{\rm cap}\mathfrak B_b(S'),\label{eq:gammaout}\\
(b,\mathrm{inward})\in\Gamma(S,u,S')
&\Longleftrightarrow
\mathfrak B_b(S')\prec_{\rm cap}\mathfrak B_b(S).\label{eq:gammain}
\end{align}
If the two endpoints are equal, the realized transition is \emph{stationary} on boundary $b$ and receives neither monotone label. If they are incomparable in the strong capability order, it is a \emph{reconfiguration} on boundary $b$. A risk-envelope outward movement may still occur when the right-column envelope expands while literal capability reconfigures; $\preceq_{\rm risk}$ records this weaker comparison and $\preceq_{\rm cap}$ records literal capability inclusion.

The action-level geometry is induced by the transition kernel $P(dS'\mid S,u)$. Define
\begin{align}
p_{b,+}(S,u)
&:=P\!\left(\mathfrak B_b(S)\prec_{\rm cap}\mathfrak B_b(S')\mid S,u\right),\\
p_{b,-}(S,u)
&:=P\!\left(\mathfrak B_b(S')\prec_{\rm cap}\mathfrak B_b(S)\mid S,u\right),\\
p_{b,\parallel}(S,u)
&:=P\!\left(\mathfrak B_b(S')\parallel_{\rm cap}\mathfrak B_b(S)\mid S,u\right),\\
p_{b,0}(S,u)
&:=P\!\left(\mathfrak B_b(S')=\mathfrak B_b(S)\mid S,u\right).
\label{eq:directionprobabilities}
\end{align}
These four events partition the realized boundary outcomes, so
\begin{equation}
p_{b,+}(S,u)+p_{b,-}(S,u)+p_{b,\parallel}(S,u)+p_{b,0}(S,u)=1.
\label{eq:fourwayboundaryprobability}
\end{equation}
The corresponding outcome labels are outward, inward, reconfiguration, and stationary. An action is almost-sure outward on boundary $b$ when $\mathfrak B_b(S)\preceq_{\rm cap}\mathfrak B_b(S')$ almost surely and strict enlargement has positive probability; almost-sure inward is defined symmetrically. Pure compute/search is typically stationary on structural, asymptotic, and budget boundaries while changing current reachability. This makes an action a probability distribution over outward, inward, reconfiguration, and stationary boundary outcomes, exactly the object evaluated by the Bellman expectation.

When a structural reconfiguration admits a common intermediate envelope $S^{\cup}$, its order-theoretic representation is
\begin{equation}
\Gamma^{\rm repr}(S,S';S^{\cup})
=\left\langle
(\mathrm{structural},\mathrm{outward}),
(\mathrm{structural},\mathrm{inward})
\right\rangle,
\label{eq:factorizedgamma}
\end{equation}
recording an enrichment--contraction representation of the endpoint replacement. This representation need not be the realized intervention path. If an engineering intervention actually traverses internal deployed states
\[
S=S^{(0)}\to S^{(1)}\to\cdots\to S^{(m)}=S',
\]
those internal states define the realized path and are treated separately.

\begin{definition}[Endpoint novelty, path enrichment, and Ockham--Chatton control]
For a realized non-stop transition define the \emph{endpoint Chattonian indicator}
\begin{equation}
\chi_C^{\rm end}(S,u,S')
:=
\mathbf 1\!\left\{
\mathcal D_{\mathcal J(S')}
\not\subseteq
\mathcal D_{\mathcal J(S)}
\right\}.
\label{eq:chattonindicator}
\end{equation}
Thus $\chi_C^{\rm end}=1$ exactly when the successor structural decision class contains literal executable capability that the inherited structural class cannot simulate. Monotone structural enlargement and incomparable structural replacement with endpoint novelty are both Chattonian at the endpoint; equality and pure contraction have $\chi_C^{\rm end}=0$.

When the realized intervention contains an explicit internal structural path $S^{(0:m)}$, define the \emph{path-enrichment indicator}
\begin{equation}
\chi_C^{\rm path}(S,u,S')
:=
\mathbf 1\!\left\{
\exists k<m:
\mathcal D_{\mathcal J(S^{(k)})}
\subsetneq
\mathcal D_{\mathcal J(S^{(k+1)})}
\right\}.
\label{eq:chattonpathindicator}
\end{equation}
Endpoint novelty asks whether the final structure introduces inherited structural possibility. Path enrichment records actual strong structural-outward steps along the deployed intervention path, using the same literal-preserving order as $\Gamma$. A common-envelope representation such as Eq.~\eqref{eq:factorizedgamma} contributes to $\chi_C^{\rm path}$ only when its intermediate state is actually traversed.

For a non-stop action define the endpoint structural-novelty probability
\begin{equation}
p_C(S,u)
:=p_C^{\rm end}(S,u)
:=\E\!\left[\chi_C^{\rm end}(S,u,S')\mid S,u\right],
\label{eq:chattonprobability}
\end{equation}
and, when an internal path is part of the action model, the path-enrichment probability
\begin{equation}
p_C^{\rm path}(S,u)
:=\E\!\left[\chi_C^{\rm path}(S,u,S')\mid S,u\right].
\label{eq:chattonpathprobability}
\end{equation}
A \emph{non-stop Ockhamian action} at state $S$ satisfies $p_C(S,u)=0$: every successor structural decision class remains literally contained in the inherited structural class. Ockhamian control also includes the stop action separately as trajectory termination. Within inherited structural possibility, non-stop Ockhamian control may expand current reachability through computation and search, expand budget realizability through a larger terminal ceiling, improve asymptotic realizability through $M\to M'$ at fixed $\mathcal J$, or contract retained capability; stopping terminates further control. A \emph{Chattonian-containing action} has $p_C(S,u)>0$ and therefore a positive probability of endpoint structural novelty. Pure Chattonian actions have $p_C=1$, while $0<p_C<1$ describes mixed stochastic structural novelty.
\end{definition}

Let
\begin{equation}
\mathcal C_{\rm coord}:=\{\mathcal J,M,s,q\}
\end{equation}
denote the four capability-related formal coordinates. For a realized non-stop transition $(S,u,S')$, define its formal coordinate signature
\begin{equation}
\operatorname{coord}(S,u,S')\subseteq\mathcal C_{\rm coord}
\label{eq:coordsignature}
\end{equation}
by including $\mathcal J$, $M$, $s$, or $q$ exactly when the corresponding primitive state coordinate changes; in particular, $q\in\operatorname{coord}(S,u,S')$ if and only if $q(S')\ne q(S)$. Changes in the derived stop family $\F_{\rm stop}$ are recorded by the reachability component of $\Gamma$, not used to infer a primitive $q$-change. At the stochastic action level, define
\begin{equation}
p_c(S,u):=P\!\left(c\in\operatorname{coord}(S,u,S')\mid S,u\right),
\qquad c\in\mathcal C_{\rm coord}.
\label{eq:coordprobability}
\end{equation}

Engineering mechanism labels are recorded separately. Let
\begin{equation}
\mathcal K_{\rm impl}:=\{\mathrm{compute},\mathrm{search},\mathrm{retrieve},\mathrm{route},\mathrm{prune},\mathrm{compile},\mathrm{expand},\mathrm{refine},\mathrm{stop}\},
\end{equation}
with $\operatorname{impl}(u)\subseteq\mathcal K_{\rm impl}$. The available menu is
\begin{equation}
\U(S_t)=\{u^{\rm stop}\}\cup\U^{\rm non-stop}(S_t).
\label{eq:actions}
\end{equation}
The mechanism signature states what engineering operation is executed; $\operatorname{coord}(S,u,S')$ states which formal coordinates actually change; $\Gamma(S,u,S')$ states the resulting capability-order movement; and $\chi_C^{\rm end}$ records representation-independent endpoint structural novelty. Thus dynamic action semantics follow
\begin{equation}
\boxed{
\operatorname{impl}(u)
\longrightarrow
\operatorname{coord}(S,u,S')
\longrightarrow
\Gamma(S,u,S')
\longrightarrow
\chi_C^{\rm end}(S,u,S')
\longrightarrow
Q^*(S,u).}
\label{eq:dynamicactionsemantics}
\end{equation}
Typical implementation effects are:
\begin{itemize}
  \item \textbf{Compute/search}: at fixed $(\mathcal J,M,s)$, change within-ceiling allocation or the active algorithmic state $q$, thereby potentially moving the derived reachability boundary $\mathfrak B_{\rm reach}$ and its stop family $\F_{\rm stop}$.
  \item \textbf{Retrieve/refine}: expose new task-relevant evidence when the acquired observation enters $\iota$ and $\phi$; retrieval over already accessible material instead changes pathwise state.
  \item \textbf{Route}: access routing can change $\mathcal J$; routing among already available models, tools, documents, or branches can change $M$, $s$, or $q$ depending on its persistent and budget effects.
  \item \textbf{Compile/redesign}: change $M$ when the persistent realization engine or resource-to-capability map changes while $\mathcal J$ is fixed.
  \item \textbf{Prune}: contract any coordinate actually removed---pathwise candidates, terminal ceiling, realization machinery, accessible evidence, or executable support.
  \item \textbf{Expand}: enlarge executable support or other structural capability when new semantic mappings enter $\mathcal D_{\mathcal J}$.
  \item \textbf{Stop}: terminate the trajectory and accept terminal loss. Pure stopping has no capability-order direction; if a stopping protocol also unloads memory, disables tools, or decommissions machinery, those additional state changes receive their own coordinate and boundary labels.
\end{itemize}

Composite actions are classified first by their realized endpoint relation and, when the deployed execution contains internal stages, by their actual path. A tool call can both exercise executable support and acquire evidence; retrieve--then--compress can refine and prune. If a replacement gains some distinctions or mappings while discarding others, its structural endpoints can be incomparable under $\preceq_{\rm cap}$ and the net transition is a structural reconfiguration. A common joint envelope may represent that replacement as an ordered enrichment--contraction pair through $\Gamma^{\rm repr}$, while $\chi_C^{\rm end}$ remains determined only by the endpoint decision classes and $\chi_C^{\rm path}$ only by actually traversed intermediate states. Evidence acquisition carries a structural effect only when the observation becomes accessible to the deployed system, enlarging $\mathcal G_t$ before any subsequent compression. Routing is classified by the same effect-based rule: access routing that changes terminally available distinctions changes $\iota_t$ or $\phi_t$, while search routing that only ranks or traverses already accessible material changes $q_t$, $\F_{\rm stop}(S_t)$, cost, and future option value. Context or memory pruning changes $\iota_t$ and possibly $\phi_t$; tool or policy gating changes $H_t$; branch or candidate pruning changes $q_t$ and the immediately reachable stop family. Algorithmic pruning leaves the joint floor, asymptotic boundary, and budget boundary unchanged unless it also changes $M_t$, accessible evidence, $\phi_t$, $H_t$, or $s_t$; it can nevertheless raise the current stopping risk by shrinking $\F_{\rm stop}$. Early exit is the stop action. The geometric language is neutral: outward and inward are monotone capability-order movements, incomparable endpoints are reconfigurations, and stop is treated as trajectory termination. Ockhamian control manages inherited structural possibility without endpoint novelty; Chattonian-containing control carries positive endpoint structural-novelty probability. Mechanism names do not determine coordinates by themselves; transition semantics determine the capability relation.

Four capability-related transformations are distinguished by their formal coordinate effects. A \emph{structural transformation} changes $\mathcal J=(\phi,H)$ and hence $\mathcal D_{\mathcal J}$. A \emph{realization-mechanism transformation} holds $\mathcal J$ fixed and changes $M$, moving $\F_\infty(\mathcal J,M)$ and potentially every finite family. A \emph{budget-ceiling transformation} holds $(\mathcal J,M)$ fixed and changes $s$, moving $\F_s(\mathcal J,M)$. A \emph{pathwise reachability transformation} holds $(\mathcal J,M,s)$ fixed and changes $q$ or $\F_{\rm stop}$. The same engineering action may change several coordinates. Separately, an \emph{evaluative transformation} changes the task or meta-level loss used to project capabilities into value. An \emph{environment transformation} changes the task law or transition kernel. Objective revision changes how a fixed capability set is valued; structural-boundary movement is determined by the capability coordinates, while environment drift changes the world against which capability is exercised. Dynamic Ockham--Chatton control operates primarily on capability transformations while its stochastic boundary effects are induced by the environment transition kernel and its action values are determined jointly by capability, objective, and environment.

An admissible controller policy $\pi$ is measurable through the meta-interface: its state-conditioned action kernel satisfies $\pi(\cdot\mid S)=\pi(\cdot\mid\mu(S))$ and is supported on the feasible menu $\U(S)$. Let $\tau^\pi=\inf\{t\ge0:u_t=u^{\rm stop}\}$, with $\inf\varnothing=\infty$. The terminal realization ceiling $s_t$ and remaining trajectory-control budget $b_t$ are separate resource coordinates. Within-ceiling compute, search, verification, resampling, or reranking can consume $b_t$ and change $q_t$ or $\F_{\rm stop}$ while leaving $s_t$ fixed; a ceiling transformation changes $s_t$ itself. Their units may differ, and no relation such as $s_t\le b_t$ is imposed. More generally their stochastic update may be written $(s_{t+1},b_{t+1})=\Psi(S_t,u_t,\xi_{t+1})$, with feasibility encoded by $u_t\in\U(S_t)$. Let $k(S,u)$ be dynamic objective cost, $c(S,u)$ trajectory-budget consumption, and $L_{\rm term}(S)$ terminal loss; $k$ and $c$ represent distinct quantities. For $0<\beta\le1$, define
\begin{equation}
J^\pi(S)=
\begin{cases}
\E_\pi^S\!\left[\sum_{t=0}^{\tau-1}\beta^t k(S_t,u_t)+\beta^\tau L_{\rm term}(S_\tau)\right],&\tau<\infty\text{ a.s. and the expectation is finite},\\
+\infty,&\text{otherwise}.
\end{cases}
\label{eq:dynamiccost}
\end{equation}
The admissible policies are $\mathfrak P_{\rm adm}(S)=\{\pi:J^\pi(S)<\infty\}$. Canonical Ockham--Chatton control uses $\beta=1$: a finite-budget condition such as the appendix theorem, or another explicit stopping mechanism, supplies termination without discounting away a positive terminal loss; admissibility specifies the policy class being compared, while the stopping mechanism supplies termination. A discounted model $0<\beta<1$ represents genuine temporal preference. For every state-preserving delay action $u^{\rm delay}$, the delay-consistency condition
\begin{equation}
k(S,u^{\rm delay})\ge(1-\beta)L_{\rm term}(S)
\label{eq:delayconsistency}
\end{equation}
prevents terminal-loss discounting by itself from making idle waiting valuable.

The running cost $k$ carries acquisition, switching, tool-call, communication, and in-trajectory holding costs. The terminal deployment burden prices the persistent structural and realization machinery that remains active or retained after the controller stops.

\begin{definition}[Admissible deployment burden]
A deployment burden functional $K_{\rm dep}$ prices the deployment state that must remain available for future use after stopping: retained memory, routing machinery, tool and interpreter state, verification infrastructure, and their coordination. It orders the cost of deployed structure: retained memory, routing, tools, verification infrastructure, and their coordination. It is admissible on a declared deployed-pair space when it satisfies four conditions.
\begin{enumerate}
  \item \textbf{Isomorphism invariance.} Operationally equivalent renamings of symbols, actions, or internal identifiers preserve burden.
  \item \textbf{Operational grounding.} Burden is a measurable function of declared deployment quantities. Writing
  \begin{align}
  z(J,M)&=(m_{\rm mem},r_{\rm route},\chi_{\rm tool},h_{\rm verify},\ldots)(J,M),\\
  K_{\rm dep}(J,M)&=\mathcal K(z(J,M)).
  \end{align}
  typical coordinates include memory footprint, routing latency, tool-selection load, interpreter state, realization-engine state, and verification cost.
  \item \textbf{Lower semicontinuity.} Under the declared topology for deployed-pair convergence $(J_n,M_n)\to(J,M)$,
  \begin{equation}
  K_{\rm dep}(J,M)\le\liminf_{n\to\infty}K_{\rm dep}(J_n,M_n).
  \end{equation}
  \item \textbf{Controlled composition.} For an admissible composition of deployed pairs,
  \begin{equation}
  K_{\rm dep}((J_1,M_1)\oplus(J_2,M_2))
  \le K_{\rm dep}(J_1,M_1)+K_{\rm dep}(J_2,M_2)+K_{\rm join}((J_1,M_1),(J_2,M_2)),
  \end{equation}
  where $K_{\rm join}$ records the additional routing and coordination overhead created by composition.
\end{enumerate}
A useful decomposition is
\begin{equation}
K_{\rm dep}(J,M)=K_{\rm struct}(J)+K_{\rm real}(M)+K_{\rm coord}(J,M),
\label{eq:deploymentburdendecomp}
\end{equation}
where $K_{\rm struct}$ prices retained interface and executable-support infrastructure, $K_{\rm real}$ prices persistent realization machinery, and $K_{\rm coord}$ prices their deployment coupling. Capability order and burden order are orthogonal: a richer risk envelope may raise maintenance cost, lower routing cost, or trade one for the other. Burden changes are therefore intervention-relative, and the action-specific quantity $\Delta K_{\rm dep}(u)$ carries this deployment effect into dynamic control.
\end{definition}

A canonical separable instantiation sets the coordination term to zero and uses
\begin{align}
K_{\rm dep}(\mathcal J,M)&:=\lambda_\phi K_\phi(\phi)+\lambda_H K_H(H)+\lambda_M K_M(M),\\
L_{\rm OC}^{\star}(S)&:=\R_{\rm fin}(S)+K_{\rm dep}(\mathcal J,M),
\label{eq:octerminal}
\end{align}
with $\lambda_\phi,\lambda_H,\lambda_M\ge0$. Here $K_\phi$ and $K_H$ instantiate structural burden through retained representation and executable-support maintenance, while $K_M$ instantiates realization burden through runtime, scheduling, compilation, and realization-specific routing infrastructure. The benchmark uses the risk infimum over the current stop family. If $d_S^\pi\in\F_{\rm stop}(S)$ is the terminal decision induced by $\pi(\cdot\mid\mu(S))$, define its realized-control gap and operational loss by
\begin{align}
G_{\rm sel}^{\pi}(S)&:=\mathcal L_{P_S}(d_S^\pi)-\R_{\rm fin}(S)\ge0,\\
L_{\rm OC}^{\rm op}(S)&:=L_{\rm OC}^{\star}(S)+G_{\rm sel}^{\pi}(S).
\label{eq:operationalterminal}
\end{align}

\begin{proposition}[Failure anatomy decomposition]
Fix a reference risk $\R_{\rm ref}(S)\le \R_{\rm floor}(S)$ under the same task law, loss, and ambient action semantics. Define
\begin{align}
C_{\rm struct}(S)&:=\R_{\rm floor}(S)-\R_{\rm ref}(S),\\
C_{\rm asym}(S)&:=\R_{\rm asym}(S)-\R_{\rm floor}(S),\\
C_{\rm fin}(S)&:=\R_{\rm budget}(S)-\R_{\rm asym}(S),\\
C_{\rm reach}(S)&:=\R_{\rm reach}(S)-\R_{\rm budget}(S).
\end{align}
For the deployed decision $d_S^\pi$, the excess risk has the exact identity
\begin{equation}
\boxed{
\mathcal L_{P_S}(d_S^\pi)-\R_{\rm ref}(S)
=C_{\rm struct}(S)+C_{\rm asym}(S)+C_{\rm fin}(S)+C_{\rm reach}(S)+G_{\rm sel}^{\pi}(S).
}
\label{eq:failureanatomy}
\end{equation}
Every term is nonnegative.
\end{proposition}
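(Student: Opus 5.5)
The identity is a telescoping sum over the nested risk chain. My plan is to write the excess risk as
\[
\mathcal L_{P_S}(d_S^\pi)-\R_{\rm ref}(S)
=\bigl[\mathcal L_{P_S}(d_S^\pi)-\R_{\rm reach}(S)\bigr]
+\bigl[\R_{\rm reach}(S)-\R_{\rm budget}(S)\bigr]
+\bigl[\R_{\rm budget}(S)-\R_{\rm asym}(S)\bigr]
+\bigl[\R_{\rm asym}(S)-\R_{\rm floor}(S)\bigr]
+\bigl[\R_{\rm floor}(S)-\R_{\rm ref}(S)\bigr],
\]
that is, add and subtract the four intermediate risks $\R_{\rm reach},\R_{\rm budget},\R_{\rm asym},\R_{\rm floor}$. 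Each bracket is, by definition, one of $G_{\rm sel}^\pi$ (recalling $\R_{\rm fin}=\R_{\rm reach}$), $C_{\rm reach}$, $C_{\rm fin}$, $C_{\rm asym}$, $C_{\rm struct}$. Reordering the brackets gives Eq.~\eqref{eq:failureanatomy}.

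Before telescoping, I will check that every intermediate risk is finite, so the additions and subtractions are legitimate. All four are infima of the same functional $\mathcal L_{P_S,\ell}$ over nonempty classes, since $\F_{\rm stop}(S)\ne\varnothing$ by Eq.~\eqref{eq:fourlevelrealization}. With a nonnegative loss and $d_S^\pi$ of finite risk, each infimum lies in $[0,\mathcal L_{P_S}(d_S^\pi)]$.

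Nonnegativity is term by term. $G_{\rm sel}^\pi\ge0$ holds because $d_S^\pi\in\F_{\rm stop}(S)$, so its risk is at least the infimum over that class. $C_{\rm reach},C_{\rm fin},C_{\rm asym}\ge0$ follow from the inclusion chain $\F_{\rm stop}(S)\subseteq\F_s\subseteq\F_\infty\subseteq\mathcal D_{\mathcal J(S)}$ in Eq.~\eqref{eq:fourlevelrealization}: an infimum over a larger set is smaller, which is exactly the chain in Eq.~\eqref{eq:targetriskchain}. $C_{\rm struct}\ge0$ is the hypothesis $\R_{\rm ref}\le\R_{\rm floor}$.

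The only point needing care is that all risks are evaluated under the same law $P_S$, loss, and ambient action space. This is what makes the telescoping an identity rather than an inequality, and it is guaranteed by the hypothesis on $\R_{\rm ref}$. I expect no substantive obstacle beyond this finiteness and common-functional bookkeeping.
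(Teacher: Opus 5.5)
Your proposal is correct and is the paper's proof: insert $\R_{\rm reach},\R_{\rm budget},\R_{\rm asym},\R_{\rm floor}$ between $\mathcal L_{P_S}(d_S^\pi)$ and $\R_{\rm ref}(S)$, telescope, and obtain nonnegativity from the nested decision classes, the hypothesis $\R_{\rm ref}\le\R_{\rm floor}$, and $d_S^\pi\in\F_{\rm stop}(S)$. One small correction to your last paragraph: once all quantities are finite, the telescoping identity is pure algebra; evaluating every risk under the common pair $(P_S,\ell)$ is what the \emph{nonnegativity} of the gaps requires, not the identity itself.
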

\begin{proof}
Insert the four nested risks between $\mathcal L_{P_S}(d_S^\pi)$ and $\R_{\rm ref}(S)$ and telescope. Nonnegativity follows from the nested decision classes, the reference assumption, and the definition of $G_{\rm sel}^{\pi}$.
\end{proof}

The identity turns surface error into a coordinate diagnosis. Structural failure is relative to a declared reference system; asymptotic, finite-resource, and reachability failures describe progressively narrower realization limits; selection failure is excess risk of the deployed choice over the best retained stop option. The same visible error can therefore require different interventions, while errors in different domains can share one failure coordinate.

the retained-candidate attribution identity in \hyperref[app:proofs]{Appendix Module I} gives the trajectory-level candidate-retention identity. It separates two mechanisms often merged under \emph{overthinking}: a selection/stopping gap despite preserved options, and option contraction through pruning or memory overwrite. It complements empirical findings that intrinsic self-correction can fail or turn correct answers into incorrect ones \cite{huang2023selfcorrect,tyen2024,zhang2025dark}.

The state-conditioned compensation gap $C_{\rm gap}(S)$ is the sum of pathwise reachability, finite-resource, and asymptotic realization-family gaps in Eq.~\eqref{eq:dynamicthreegap}. The gap $G_{\rm sel}^{\pi}$ prices the deployed policy's excess risk over that infimum, while $K_{\rm dep}$ prices the burden of carrying the final deployed pair $(\mathcal J,M)$. An Ockhamian contraction or realization-profile simplification can lower $K_{\rm dep}$ while increasing one or more realization gaps; a Chattonian structural enrichment can lower the joint floor while raising deployment burden, finite-realization difficulty, or selection difficulty. In the canonical deployed instantiation, $L_{\rm term}=L_{\rm OC}^{\rm op}$; oracle analysis may set $L_{\rm term}=L_{\rm OC}^{\star}$. Other applications may supply a different measurable terminal objective. Acquisition is charged in $k$, while $K_{\rm dep}$ charges only post-stopping structural and realization retention, so the canonical accounting avoids double counting by construction.

Under the dynamic-programming assumptions,
\begin{align}
V^*(S)&=\inf_\pi J^\pi(S)=\inf_{u\in\U(S)}Q^*(S,u),\\
Q^*(S,u)&=k(S,u)+\beta\E[V^*(S')\mid S,u],\qquad u\ne u^{\rm stop},\\
Q^*(S,u^{\rm stop})&=L_{\rm term}(S).
\end{align}
A one-step risk reduction is insufficient in general: expanding an interpreter may have no immediate benefit but can enable a valuable executable action one step later.

\begin{proposition}[Optimal stopping under the dynamic-programming assumptions]
Stopping is optimal at $S$ exactly when
\begin{equation}
L_{\rm term}(S)\le
\inf_{u\in\U(S)\setminus\{u^{\rm stop}\}}
\left[k(S,u)+\beta\E[V^*(S')\mid S,u]\right],
\end{equation}
with the extended-real convention $\inf\varnothing=+\infty$.
\end{proposition}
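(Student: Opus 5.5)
The plan is to derive the stopping criterion from the Bellman equations stated just above, $V^*(S)=\inf_{u\in\U(S)}Q^*(S,u)$ with $Q^*(S,u^{\rm stop})=L_{\rm term}(S)$ and $Q^*(S,u)=k(S,u)+\beta\E[V^*(S')\mid S,u]$ for non-stop $u$. Interpret ``stopping is optimal at $S$'' as $Q^*(S,u^{\rm stop})=V^*(S)$, i.e. the stop action attains the infimum over the feasible menu; under the dynamic-programming assumptions this is equivalent to some optimal (or $\varepsilon$-optimal, in the limit) policy stopping at $S$.

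First split the menu from Eq.~\eqref{eq:actions} as $\U(S)=\{u^{\rm stop}\}\cup\U^{\rm non-stop}(S)$. The infimum over a union is the minimum of the two infima, so
\[
V^*(S)=\min\Bigl\{L_{\rm term}(S),\ \inf_{u\in\U(S)\setminus\{u^{\rm stop}\}}\bigl[k(S,u)+\beta\E[V^*(S')\mid S,u]\bigr]\Bigr\},
\]
with $\inf\varnothing=+\infty$ covering the case of no feasible non-stop action. This identity is the only substantive step; everything else is elementary order reasoning in the extended reals.

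Then argue both directions. If $L_{\rm term}(S)$ is at most the non-stop infimum $I(S)$, the minimum equals $L_{\rm term}(S)=Q^*(S,u^{\rm stop})$, so stop attains $V^*(S)$. Conversely, if $L_{\rm term}(S)>I(S)$, then $V^*(S)=I(S)<Q^*(S,u^{\rm stop})$, so stop does not attain the value and there is a non-stop action whose $Q^*$ is strictly below $L_{\rm term}(S)$. This follows from the definition of infimum, even when $I(S)$ is not attained.

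The main obstacle is bookkeeping rather than mathematics. We need $Q^*$ to be well defined in $[0,+\infty]$, so that conditional expectations of $V^*$ make sense (measurability and nonnegativity under the dynamic-programming assumptions). We also need to check that the case $I(S)=+\infty$, which arises from an empty non-stop menu or from only inadmissible continuations, is handled by the extended-real convention. I would state these as the standing assumptions inherited from the dynamic-programming setup rather than reprove them.
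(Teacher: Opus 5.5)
Your proposal is correct and follows the paper's own proof. Both split the menu into the stop action and the non-stop actions, use the Bellman identity $V^*(S)=\inf_{u}Q^*(S,u)$ with $Q^*(S,u^{\rm stop})=L_{\rm term}(S)$, and conclude that stop attains the infimum exactly when its value does not exceed the non-stop infimum; your treatment of the empty menu via $\inf\varnothing=+\infty$ and of a non-attained non-stop infimum spells out what the paper's two-line argument leaves implicit.
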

\begin{proof}
By the Bellman equation, stopping attains the infimum exactly when its value is no larger than the infimum of all non-stop action values. Since $Q^*(S,u^{\rm stop})=L_{\rm term}(S)$, substituting the non-stop definition of $Q^*$ gives the condition.
\end{proof}

\subsection{Architecture transforms the dilemma}

Architecture design can refine $\phi$, enlarge $\Pi_{\phi,H}$, replace the realization profile $M$, transform the terminal ceiling $s$, lower action costs, and improve the meta-interface. Each intervention changes a declared subset of formal coordinates and thereby moves one or more capability boundaries, costs, or future options.

Static decision equivalence $\mathcal J_1\sim_D\mathcal J_2$ identifies equality of terminal risk envelopes. Dynamic architectural equivalence is finer: it also compares reachable structural states, feasible meta-actions, transition kernels, resource and switching costs, diagnostic interfaces, and future finite families. The static terminal orders form the static component of the finer dynamic option topology.

Theorem~\ref{thm:main-budget-reversal} gives the finite analogue of universal envelope dominance: same-budget bounded-risk monotonicity holds exactly when the later closed convex finite envelope retains the earlier one. Its separating-task direction shows that nonretention generates a bounded-loss reversal witness under a full-support $X$-marginal even when terminal risk-envelope dominance holds.

\begin{example}[Strict floor improvement with finite-budget reversal]
Let $X=(U,V)$ be uniform on $\{0,1\}^2$, $Y=U\oplus V$, $\phi(X)=U$, and $\psi(X)=(U,V)$. Let both executable supports contain all deterministic binary rules on their interfaces, represented as Dirac kernels, so $\R_{\phi,H}^*=1/2$ and $\R_{\psi,H'}^*=0$. Let $\mathcal J_0=(\phi,H)$ and $\mathcal J_1=(\psi,H')$. At a deliberately limited budget, choose realization profiles $M_0,M_1$ such that $\F_s(\mathcal J_0,M_0)$ contains only the constant-zero predictor, with risk $1/2$, while $\F_s(\mathcal J_1,M_1)$ contains only the complement of XOR, with risk $1$. The refined structure has the strictly better floor but the worse finite-budget optimum because it cannot reproduce the old predictor at budget $s$.
\end{example}

The characterization locates finite-budget reversal precisely. Information or executable enrichment can improve the terminal risk order while failing to improve the finite order. A risk increase under a common finite budget certifies failure of finite risk-envelope simulation; once simulation is restored, universal finite-budget monotonicity returns. The realization-cost maps $c_{\mathcal J,M}$ anchor every such comparison to an explicitly chosen external resource space and an explicit realization mechanism.

\begin{definition}[Literal and risk-equivalent resource transformations]
For scalar common budgets $\mathcal R=\mathbb R_+$ and deployed realization pairs $(\mathcal J_i,M_i)$, define
\begin{align}
T^{\rm exec}_{(\mathcal J_1,M_1)\to(\mathcal J_2,M_2)}(s)
&:=
\inf\left\{t\ge0:
\F_s(\mathcal J_1,M_1)
\subseteq
\F_t(\mathcal J_2,M_2)
\right\},\label{eq:simulationfunctionexec}\\
T^{\rm risk}_{(\mathcal J_1,M_1)\to(\mathcal J_2,M_2)}(s)
&:=
\inf\left\{t\ge0:
\widehat{\F}_s(\mathcal J_1,M_1)
\subseteq
\widehat{\F}_t(\mathcal J_2,M_2)
\right\},
\label{eq:simulationfunction}
\end{align}
with $\inf\varnothing=+\infty$. The first map is the least target budget for literal finite-policy simulation; the second is the least budget for reproducing the source finite expected-risk envelope. Since literal inclusion implies risk-envelope inclusion,
\begin{equation}
T^{\rm risk}_{(\mathcal J_1,M_1)\to(\mathcal J_2,M_2)}(s)
\le
T^{\rm exec}_{(\mathcal J_1,M_1)\to(\mathcal J_2,M_2)}(s).
\end{equation}
The two maps align with the two columns of the capability square: $T^{\rm exec}$ is literal resource geometry and $T^{\rm risk}$ is risk-equivalent resource geometry. The remainder of the resource-geometry results use $T:=T^{\rm risk}$ unless a superscript is shown. Fixed startup costs, constant-factor overheads, and nonlinear realization burdens appear directly in the shape of these architecture--realization-to-architecture--realization resource transformations.
\end{definition}

\begin{proposition}[Resource geometry of architectural simulation]
For $\star\in\{\mathrm{exec},\mathrm{risk}\}$, the transformation $T^\star_{(\mathcal J_1,M_1)\to(\mathcal J_2,M_2)}$ is nondecreasing. Whenever $T^\star(s)<\infty$, every $t>T^\star(s)$ reproduces the corresponding source capability at budget $s$: literal inclusion for $\star=\mathrm{exec}$ and risk-envelope inclusion for $\star=\mathrm{risk}$. In the risk case this gives, for every bounded finite decision problem,
\begin{equation}
\R_{t,\mathcal J_2,M_2}^*(Q)
\le
\R_{s,\mathcal J_1,M_1}^*(Q).
\label{eq:T-risk}
\end{equation}
If the relevant infimum is attained, the same statements hold at $t=T^\star(s)$. For the risk transformation, the asymptotic multiplicative summary
\begin{equation}
\alpha_\infty((\mathcal J_1,M_1)\!\to\!(\mathcal J_2,M_2))
:=\limsup_{s\to\infty}\frac{T^{\rm risk}_{(\mathcal J_1,M_1)\to(\mathcal J_2,M_2)}(s)}{s}
\end{equation}
recovers constant-factor overhead, while an affine bound $T^{\rm risk}(s)\le\alpha s+\beta$ separates multiplicative cost from fixed startup cost.
\end{proposition}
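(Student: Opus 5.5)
The plan is to treat each claim of the proposition in turn, starting from one structural fact about the budget families: for $t\le t'$ we have $\F_t(\mathcal J_2,M_2)\subseteq\F_{t'}(\mathcal J_2,M_2)$. This holds because $\F_t$ is an at-most-resource sublevel set of the cost map $c_{\mathcal J_2,M_2}$. Closed convexification preserves inclusion, so $\widehat\F_t\subseteq\widehat\F_{t'}$ as well. Consequently the feasibility sets
\[
A^\star(s)=\{t\ge0:\text{source capability at }s\text{ is contained in target capability at }t\}
\]
are upward closed in $t$. Each is either empty or an up-ray of the form $[T^\star(s),\infty)$ or $(T^\star(s),\infty)$.

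\textbf{Monotonicity.} The source families are also nested. If $s\le s'$, then $\F_s(\mathcal J_1,M_1)\subseteq\F_{s'}(\mathcal J_1,M_1)$, and likewise for the hulls. Hence any $t$ that works for $s'$ also works for $s$, which gives $A^\star(s')\subseteq A^\star(s)$. Taking infima yields $T^\star(s)\le T^\star(s')$, with the convention $\inf\varnothing=+\infty$ keeping the inequality valid in the extended reals.

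\textbf{Reproduction above the threshold.} Suppose $t>T^\star(s)$. By the definition of the infimum there is some $t_0\in A^\star(s)$ with $t_0<t$. Upward closure then puts $t$ in $A^\star(s)$, giving literal inclusion in the exec case and hull inclusion in the risk case.

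\textbf{The risk inequality.} For Eq.~\eqref{eq:T-risk}, apply the forward (easy) direction of Theorem~\ref{thm:main-budget-reversal} with the two budgets $s$ and $t$ in place of a common budget; that direction only uses hull inclusion. In detail, bounded expected risk is a continuous linear functional of the finite kernel. So its infimum over $\F_t(\mathcal J_2,M_2)$ equals its infimum over $\widehat\F_t(\mathcal J_2,M_2)$, and the same holds for the source at budget $s$. Inclusion of the hulls then gives the displayed inequality. No full-support assumption is needed for this direction.

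\textbf{Attainment at the threshold.} I read ``the relevant infimum is attained'' as $T^\star(s)\in A^\star(s)$, i.e.\ the up-ray is closed at its endpoint. The statements at $t=T^\star(s)$ then follow by exactly the same arguments. The main subtlety I expect is this step: stated sufficient conditions would have to come from right-continuity of $t\mapsto\widehat\F_t$, i.e.\ $\bigcap_{t'>t}\widehat\F_{t'}=\widehat\F_t$. That is not automatic, because an intersection of hulls can be strictly larger than the hull at the endpoint. I would therefore state attainment as the hypothesis rather than derive it.

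\textbf{Asymptotic summaries.} The multiplicative summary $\alpha_\infty$ is simply a definition. For the affine remark, $T^{\rm risk}(s)\le\alpha s+\beta$ gives $\alpha_\infty\le\alpha$, since $\beta/s\to0$. This is what separates the constant-factor overhead from the additive startup term.
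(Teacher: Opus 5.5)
Your proposal is correct and follows the paper's own proof. Like the paper, you use nestedness of the source and target families in the budget, then upward closure of the admissible target budgets, then the easy direction of Theorem~\ref{thm:main-budget-reversal} to obtain Eq.~\eqref{eq:T-risk}; your explicit handling of the attainment clause and of $\alpha_\infty\le\alpha$ is slightly more detailed than the paper's text, and your right-continuity caveat is apt for the hulls (for $\star=\mathrm{exec}$ the sublevel-set definition already gives $\bigcap_{t'>t}\F_{t'}=\F_t$, so attainment is automatic there).
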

\begin{proof}
For either column of the capability square, source families are nested in $s$ and target families are nested in $t$. If $s_1\le s_2$, every target budget that simulates the source family at $s_2$ also simulates it at $s_1$, proving monotonicity. For fixed $s$, admissible target budgets form an upward-closed set; if $t>T^\star(s)$, some admissible $t'<t$ exists and nesting yields simulation at $t$. In the risk column, Eq.~\eqref{eq:T-risk} then follows from finite-budget risk-envelope dominance.
\end{proof}

\begin{proposition}[Composition of architecture resource transformations]
For either $T^{\rm exec}$ or $T^{\rm risk}$ and three deployed architecture--realization pairs on a common scalar external resource axis, write
\begin{equation}
T^\star_{23}(r+):=\inf_{\varepsilon>0}T^\star_{(\mathcal J_2,M_2)\to(\mathcal J_3,M_3)}(r+\varepsilon),
\qquad \star\in\{\mathrm{exec},\mathrm{risk}\}.
\end{equation}
Whenever $T^\star_{(\mathcal J_1,M_1)\to(\mathcal J_2,M_2)}(s)<\infty$,
\begin{equation}
T^\star_{(\mathcal J_1,M_1)\to(\mathcal J_3,M_3)}(s)
\le
T^\star_{23}\!\left(T^\star_{(\mathcal J_1,M_1)\to(\mathcal J_2,M_2)}(s)+\right).
\label{eq:Tcomposition}
\end{equation}
If the intermediate simulation infimum is attained, or if $T^\star_{(\mathcal J_2,M_2)\to(\mathcal J_3,M_3)}$ is right-continuous at that budget, the right-limit can be replaced by ordinary composition. In particular, for either fixed $\star$, affine simulation bounds with $\alpha_{12},\alpha_{23}\ge0$,
\begin{equation}
T^\star_{12}(s)\le\alpha_{12}s+\beta_{12},
\qquad
T^\star_{23}(t)\le\alpha_{23}t+\beta_{23}
\end{equation}
compose to
\begin{equation}
T^\star_{13}(s)
\le
\alpha_{23}\alpha_{12}s
+\alpha_{23}\beta_{12}+\beta_{23}.
\end{equation}
\end{proposition}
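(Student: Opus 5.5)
The plan is to unfold the definition of $T^\star$ as the infimum of an upward-closed set of admissible target budgets and then chain two simulations. The only ingredients are the nesting of the at-most-resource families in $t$ and the transitivity of inclusion. For $\star=\mathrm{exec}$ we use literal inclusion $\F_s\subseteq\F_t$; for $\star=\mathrm{risk}$ we use inclusion of the closed convex envelopes $\widehat\F_s\subseteq\widehat\F_t$. In each case the preceding proposition on resource geometry applies: admissible budgets are upward closed and $T^\star$ is nondecreasing.

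\textbf{Chaining step.} Set $t_{12}:=T^\star_{12}(s)<\infty$ and fix $\varepsilon>0$. By the preceding proposition, the budget $t_{12}+\varepsilon$ is admissible, so the stage-1 family at $s$ is contained in the stage-2 family at $t_{12}+\varepsilon$. If $T^\star_{23}(t_{12}+\varepsilon)=\infty$ there is nothing to prove. Otherwise, for any $u>T^\star_{23}(t_{12}+\varepsilon)$, the stage-2 family at $t_{12}+\varepsilon$ is contained in the stage-3 family at $u$. Transitivity of $\subseteq$ then places the stage-1 family at $s$ inside the stage-3 family at $u$, so $T^\star_{13}(s)\le u$.

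\textbf{Passing to the limits.} Letting $u\downarrow T^\star_{23}(t_{12}+\varepsilon)$ gives $T^\star_{13}(s)\le T^\star_{23}(t_{12}+\varepsilon)$. Taking the infimum over $\varepsilon>0$ then yields Eq.~\eqref{eq:Tcomposition}; since $T^\star_{23}$ is nondecreasing, this infimum is the right limit. If the intermediate infimum is attained, the budget $t_{12}$ itself is admissible, and the same chain run with $\varepsilon=0$ gives ordinary composition. If instead $T^\star_{23}$ is right-continuous at $t_{12}$, the right limit simply equals $T^\star_{23}(t_{12})$.

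\textbf{Affine corollary.} From the affine bounds, for each $\varepsilon>0$ we get $T^\star_{23}(t_{12}+\varepsilon)\le \alpha_{23}(\alpha_{12}s+\beta_{12}+\varepsilon)+\beta_{23}$, which uses $t_{12}\le\alpha_{12}s+\beta_{12}$ and $\alpha_{23}\ge0$. Letting $\varepsilon\downarrow0$ gives the stated bound, and the right limit causes no difficulty because the affine bound is continuous. If $T^\star_{12}(s)=\infty$ the hypothesis excludes the case.

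\textbf{Main obstacle.} The only delicate point is the non-attainment issue: the infimum $T^\star_{12}(s)$ itself need not be an admissible budget. That is exactly why the argument uses $t_{12}+\varepsilon$ and the right limit. It also requires a little care with the convention $\inf\varnothing=+\infty$ and with the requirement that the stage-1 families be nonempty.
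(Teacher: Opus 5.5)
Your proof is correct and follows the paper's argument. Like the paper, you chain the two upward-closed simulation inclusions at an intermediate budget $r>T^\star_{12}(s)$ (your $t_{12}+\varepsilon$), deduce $T^\star_{13}(s)\le T^\star_{23}(r)$, take the infimum over $r$, and get the affine case by composing the bounds and sending the intermediate slack to zero. Your explicit treatment of the attained and right-continuous cases spells out what the paper leaves implicit. The nonemptiness caveat is unnecessary, since an empty source family is trivially contained in every target family.
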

\begin{proof}
For every $r>T^\star_{12}(s)$, the source budget-$s$ capability of the selected column is contained in the intermediate budget-$r$ capability. For every $t>T^\star_{23}(r)$, the latter is contained in the target budget-$t$ capability. Hence $T^\star_{13}(s)\le T^\star_{23}(r)$. Taking the infimum over $r>T^\star_{12}(s)$ gives Eq.~\eqref{eq:Tcomposition}. The affine statement follows by composing the two upper bounds and sending any slack in the intermediate budget to zero.
\end{proof}

Thus $T^{\rm risk}$ is a directed architecture-to-architecture resource transformation with monotonicity and composition. The same arguments apply to $T^{\rm exec}$ after replacing risk envelopes by literal finite families, so both columns of the capability square carry corresponding resource geometry. Order, deficiency, and resource transformation answer complementary questions: which architecture is capability-dominant, how closely one can approximate another, and how much resource is required to reproduce its finite capability.

\subsection{Risk decomposition and action value}

For a feasible action $u$, define the expected one-step state-conditioned risk decrease
\begin{equation}
\Delta_t(u)=\R_{\rm reach}(S_t)-
\E[\R_{\rm reach}(S_{t+1})\mid S_t,u].
\end{equation}
For each level of the capability chain, let
\begin{align}
\Delta_t^{\rm floor}(u)&=\R_{\rm floor}(S_t)-
\E[\R_{\rm floor}(S_{t+1})\mid S_t,u],\\
\Delta_t^{\rm reach}(u)&=C_{\rm reach}(S_t)-
\E[C_{\rm reach}(S_{t+1})\mid S_t,u],\\
\Delta_t^{\rm fin}(u)&=C_{\rm fin}(S_t)-
\E[C_{\rm fin}(S_{t+1})\mid S_t,u],\\
\Delta_t^{\rm asym}(u)&=C_{\rm asym}(S_t)-
\E[C_{\rm asym}(S_{t+1})\mid S_t,u].
\end{align}
Also write
\begin{equation}
\Delta_t^{\rm comp}(u)
:=\Delta_t^{\rm reach}(u)+\Delta_t^{\rm fin}(u)+\Delta_t^{\rm asym}(u).
\end{equation}

\begin{proposition}[Four-level risk-change decomposition]
Whenever the displayed quantities are measurable and integrable,
\begin{equation}
\Delta_t(u)
=\Delta_t^{\rm floor}(u)
+\Delta_t^{\rm reach}(u)
+\Delta_t^{\rm fin}(u)
+\Delta_t^{\rm asym}(u).
\label{eq:fourlevelriskchange}
\end{equation}
A pure search or compute action evaluated under a fixed task loss $\ell$ and preserving $P_t,\phi_t,H_t,M_t$, and the terminal realization ceiling $s_t$ has
\begin{equation}
\Delta_t^{\rm floor}(u)=\Delta_t^{\rm fin}(u)=\Delta_t^{\rm asym}(u)=0,
\end{equation}
so its task-risk effect is entirely a movement of current pathwise reachability. A terminal-ceiling transformation can change the finite-resource term while preserving floor and asymptotic terms. A structural or realization-mechanism transformation can change several terms simultaneously.
\end{proposition}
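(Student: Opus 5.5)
The plan is to prove Eq.~\eqref{eq:fourlevelriskchange} as a telescoping identity at the level of states, and only then take conditional expectations. For any state $S$, the definitions in Eq.~\eqref{eq:threegaps} give
\[
\R_{\rm reach}(S)=\R_{\rm floor}(S)+C_{\rm asym}(S)+C_{\rm fin}(S)+C_{\rm reach}(S),
\]
because the right-hand side collapses to $\R_{\rm reach}(S)$. I apply this identity at $S_t$ and at $S_{t+1}$, subtract, and take $\E[\cdot\mid S_t,u]$. Linearity of conditional expectation then splits $\Delta_t(u)$ into the four level-wise differences. The integrability hypothesis is exactly what licenses this step: each of the four terms must be integrable, so that no $\infty-\infty$ appears and the expectation of the sum equals the sum of the expectations. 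Nonnegativity of each gap, from the nested chain~\eqref{eq:targetriskchain}, ensures that the individual terms are well defined in $[0,\infty]$. Integrability then makes them finite.

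For the pure search/compute statement, I use the causal ownership of the four levels recorded after Eq.~\eqref{eq:fourlevelrealization}. The class $\mathcal D_{\mathcal J}$ depends only on $(\phi,H)$. The family $\F_\infty$ depends only on $(\mathcal J,M)$, and $\F_s$ additionally depends on $s$. If $u$ preserves $P_t,\phi_t,H_t,M_t,s_t$ and $\ell$ almost surely, then $\R_{\rm floor}$, $\R_{\rm asym}$ and $\R_{\rm budget}$ are infima of the same functional $\mathcal L_{P_t,\ell}$ over the same classes at $S_t$ and $S_{t+1}$, so all three are unchanged. It follows that $C_{\rm fin}$ and $C_{\rm asym}$, being differences of preserved quantities, are almost surely invariant, and their expected changes vanish. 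Only $\R_{\rm reach}$ and hence $C_{\rm reach}$ can move, through $\F_{\rm stop}$.

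The remaining claims are read off in the same way. A ceiling change preserves $(P,\mathcal J,M)$, so $\R_{\rm floor}$ and $\R_{\rm asym}$ are fixed, which kills $\Delta^{\rm floor}$ and $\Delta^{\rm asym}$. It alters $\R_{\rm budget}$, which moves both $C_{\rm fin}$ and $C_{\rm reach}$. For structural or profile changes, the statement is only a possibility claim, so the XOR example, or any change of $\mathcal D_{\mathcal J}$ or $\F_\infty$, suffices as illustration.

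The main obstacle is the measurability bookkeeping: the state-conditioned risks are infima over state-dependent classes, and I must ensure they are measurable functions of $S$ so the conditional expectations make sense. I will not prove this; I will take it from the proposition's hypothesis ``whenever the displayed quantities are measurable and integrable''. I will also make sure that ``preserving'' is interpreted almost surely under the transition kernel, so that the invariance arguments hold $P(\cdot\mid S_t,u)$-a.s.
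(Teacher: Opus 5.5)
Your proposal is correct and follows essentially the same route as the paper: write $\R_{\rm reach}(S)$ as the sum of the floor and the three gaps from \eqref{eq:dynamicthreegap} at $S_t$ and $S_{t+1}$, then use linearity of conditional expectation. For a pure compute/search action, the structural class, realization profile, budget family and conditional task law are unchanged, so only $\F_{\rm stop}$, and hence $C_{\rm reach}$, can move. Your remarks on the ceiling and structural cases spell out what the paper's proof leaves implicit.
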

\begin{proof}
Apply Eq.~\eqref{eq:dynamicthreegap} at times $t$ and $t+1$ and use linearity of conditional expectation. The zero terms for pure search or computation follow because the structural decision class, realization profile, asymptotic family, budget family, and conditional task law are preserved while only $q_t$ and $\F_{\rm stop}(S_t)$ change.
\end{proof}

For the oracle deployment benchmark in Eq.~\eqref{eq:octerminal}, define
\begin{align}
\Delta_t^{\rm OC\star}(u)&=L_{\rm OC}^{\star}(S_t)-\E[L_{\rm OC}^{\star}(S_{t+1})\mid S_t,u],\\
\Delta_t^{\rm dep}(u)&=K_{\rm dep}(\mathcal J_t,M_t)-\E[K_{\rm dep}(\mathcal J_{t+1},M_{t+1})\mid S_t,u].
\end{align}

\begin{proposition}[Deployment benchmark decomposition]
Whenever the displayed quantities are measurable and integrable,
\begin{equation}
\Delta_t^{\rm OC\star}(u)
=\Delta_t^{\rm floor}(u)
+\Delta_t^{\rm reach}(u)
+\Delta_t^{\rm fin}(u)
+\Delta_t^{\rm asym}(u)
+\Delta_t^{\rm dep}(u).
\label{eq:deploymentbenchmarkfourlevel}
\end{equation}
Thus the one-step benchmark resolves into structural-floor value, current reachability, finite-resource saturation, asymptotic realization-family value, and deployment burden.
\end{proposition}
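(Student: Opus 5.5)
The plan is to derive Eq.~\eqref{eq:deploymentbenchmarkfourlevel} by expanding the oracle benchmark $L_{\rm OC}^{\star}$ from Eq.~\eqref{eq:octerminal} and reusing the four-level risk-change decomposition, Eq.~\eqref{eq:fourlevelriskchange}. By definition,
\[
L_{\rm OC}^{\star}(S)=\R_{\rm fin}(S)+K_{\rm dep}(\mathcal J(S),M(S)),
\]
and Eq.~\eqref{eq:dynamicrisks} identifies $\R_{\rm fin}(S)=\R_{\rm reach}(S)$. The first step is therefore to write $L_{\rm OC}^{\star}(S_t)$ and $L_{\rm OC}^{\star}(S_{t+1})$ as sums of these two terms.

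Next, apply the integrability hypothesis so that conditional expectation can be taken termwise. Linearity of $\E[\cdot\mid S_t,u]$ then splits the benchmark change into two pieces:
\[
\Delta_t^{\rm OC\star}(u)=\Delta_t(u)+\Delta_t^{\rm dep}(u).
\]
Here $\Delta_t(u)$ is exactly the one-step reachability risk decrease defined before the four-level proposition, and $\Delta_t^{\rm dep}(u)$ is exactly the deployment-burden change just defined. Substituting the four-level proposition for $\Delta_t(u)$ gives the boxed identity. If one prefers to make that proposition's step explicit rather than cite it, rewrite
\[
\R_{\rm reach}(S)=\R_{\rm floor}(S)+C_{\rm reach}(S)+C_{\rm fin}(S)+C_{\rm asym}(S),
\]
which is the telescoping identity Eq.~\eqref{eq:dynamicthreegap}, evaluate it at both $S_t$ and $S_{t+1}$, and subtract.

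The only delicate point is the integrability bookkeeping. Termwise splitting of conditional expectations needs each summand to be integrable, not just the total. The hypothesis ``whenever the displayed quantities are measurable and integrable'' supplies this. Finiteness is also needed so that no $\infty-\infty$ arises: the risks are finite whenever the relevant families are nonempty, and the loss is bounded or has finite risk. If instead only the sum were assumed integrable, I would note that every gap term is nonnegative and bounded above by $\R_{\rm reach}$. Integrability of each gap term would then follow from integrability of $\R_{\rm reach}$ and $\R_{\rm floor}$, with $K_{\rm dep}$ handled by its own assumption. The interpretive sentence of the proposition then follows by reading off the five terms.
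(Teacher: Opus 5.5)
Your proposal is correct and follows the paper's own proof: expand $L_{\rm OC}^{\star}=\R_{\rm reach}+K_{\rm dep}$, substitute the telescoping identity Eq.~\eqref{eq:dynamicthreegap} (equivalently, cite the four-level risk-change decomposition), and split the conditional expectation by linearity. Your extra integrability remark is sound but not needed given the stated hypothesis.
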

\begin{proof}
Expand $L_{\rm OC}^{\star}=\R_{\rm reach}(S)+K_{\rm dep}(\mathcal J,M)$, substitute Eq.~\eqref{eq:dynamicthreegap}, and apply linearity of conditional expectation.
\end{proof}

\begin{definition}[Inherited-structure reachability saturation path]
Fix a task loss $\ell_0$. A state family $\rho\mapsto S_\rho$, $\rho\in I=[\rho_-,\rho_+]$, is an \emph{inherited-structure reachability saturation path} relative to $\ell_0$ when
\begin{equation}
\mathcal J(S_\rho)=\mathcal J_0,\qquad M(S_\rho)=M_0,\qquad s(S_\rho)=s_0,\qquad
P_{S_\rho}=P_0,\qquad \ell_\rho=\ell_0,
\label{eq:saturationfixedcoords}
\end{equation}
so the task law, task loss, structural boundary, asymptotic boundary, and budget boundary remain fixed, while current reachability is monotone:
\begin{equation}
\rho_1\le\rho_2
\Longrightarrow
\mathfrak B_{\rm reach}(S_{\rho_1})
\preceq_{\rm cap}
\mathfrak B_{\rm reach}(S_{\rho_2})
\preceq_{\rm cap}
\mathfrak B_{\rm budget}.
\label{eq:saturationpath}
\end{equation}
All risk projections along the path are evaluated under the common pair $(P_0,\ell_0)$. Hence inclusion of the stop families directly gives
\begin{equation}
\rho_1\le\rho_2
\Longrightarrow
\R_{\rm reach}(S_{\rho_2})\le\R_{\rm reach}(S_{\rho_1}),
\qquad
C_{\rm reach}(S_{\rho_2})\le C_{\rm reach}(S_{\rho_1}),
\label{eq:saturationriskmonotonicity}
\end{equation}
because $\R_{\rm budget}=\R^*_{s_0,\mathcal J_0,M_0}(P_0,\ell_0)$ is constant. The coordinate $\rho$ therefore measures pure reachability saturation: how fully the realized trajectory has exploited capability already available under a fixed $(P_0,\mathcal J_0,M_0,s_0,\ell_0)$.
\end{definition}

For an inherited-structure reachability saturation path $\rho\mapsto S_\rho$ relative to $\ell_0$, consider a pure compute/search action $u_C$ that preserves $P_0,\mathcal J_0,M_0,s_0$ and is evaluated under the same task loss $\ell_0$, together with a structural enrichment action $u_E$, each followed by stopping. Write
\begin{align}
\Delta_C^{\rm reach}(\rho)
&:=C_{\rm reach}(S_\rho)-\E[C_{\rm reach}(S_\rho^C)],\\
\Delta_E^{\rm floor}(\rho)
&:=\R_{\rm floor}(S_\rho)-\E[\R_{\rm floor}(S_\rho^E)],\\
\Delta_E^{\rm reach}(\rho)
&:=C_{\rm reach}(S_\rho)-\E[C_{\rm reach}(S_\rho^E)],\\
\Delta_E^{\rm fin}(\rho)
&:=C_{\rm fin}(S_\rho)-\E[C_{\rm fin}(S_\rho^E)],\\
\Delta_E^{\rm asym}(\rho)
&:=C_{\rm asym}(S_\rho)-\E[C_{\rm asym}(S_\rho^E)],\\
\Delta_E^{\rm dep}(\rho)
&:=K_{\rm dep}(\mathcal J_\rho,M_\rho)-\E[K_{\rm dep}(\mathcal J_\rho^E,M_\rho^E)],
\end{align}
and let $k_C(\rho)=k(S_\rho,u_C)$ and $k_E(\rho)=k(S_\rho,u_E)$. Their net gains relative to stopping now are
\begin{align}
G_C(\rho)&=\Delta_C^{\rm reach}(\rho)-k_C(\rho),\label{eq:compute-gain}\\
G_E(\rho)&=\Delta_E^{\rm floor}(\rho)+\Delta_E^{\rm reach}(\rho)+\Delta_E^{\rm fin}(\rho)
+\Delta_E^{\rm asym}(\rho)+\Delta_E^{\rm dep}(\rho)-k_E(\rho).
\label{eq:enrich-gain}
\end{align}

A sufficient primitive condition for the gain decomposition in Eqs.~\eqref{eq:compute-gain}--\eqref{eq:enrich-gain} is
\begin{align}
\frac{d}{d\rho}\Delta_C^{\rm reach}(\rho)-k_C'(\rho)&<0,\label{eq:primitive-compute}\\
\frac{d}{d\rho}\left[\Delta_E^{\rm floor}(\rho)+\Delta_E^{\rm reach}(\rho)+\Delta_E^{\rm fin}(\rho)+\Delta_E^{\rm asym}(\rho)+\Delta_E^{\rm dep}(\rho)\right]-k_E'(\rho)&\ge0.
\label{eq:primitive-enrich}
\end{align}
It makes the compute-minus-enrichment advantage
\begin{equation}
A(\rho):=G_C(\rho)-G_E(\rho)
\end{equation}
strictly decreasing on $I$. the primitive switching derivation in \hyperref[app:proofs]{Appendix Module I} gives the derivative and crossing argument.

The first condition captures diminishing returns from further exploitation of capability already available under the inherited structure: as the current trajectory approaches the budget-realization boundary, additional search or computation closes less of $C_{\rm reach}$. The second allows structural-enrichment value to arise from floor reduction, improved pathwise reachability, a better finite-resource profile, a richer asymptotic realization family, lower deployment burden through structural or realization-profile change, or any combination whose net value does not decline with saturation.

With opposite endpoint signs, saturation of current inherited-structure reachability and sustained value from structural change therefore generate the unique compute--enrich threshold proved in the primitive switching derivation in \hyperref[app:proofs]{Appendix Module I}. This is the reachability-versus-structural instance of Ockham--Chatton control. the hysteresis prediction in Appendix~\ref{app:predictions} further shows that asymmetric switching costs split this threshold into a persistence band whose selected family depends on the currently active mode.

The one-step result is a two-action local specialization of the general Ockham--Chatton problem: it compares the marginal value of expanding current state reachability inside inherited structural capability with the marginal value of changing the inherited capability set itself. The full dynamic problem compares state-relative action families. Define
\begin{align}
\U_O(S)
&:=\{u^{\rm stop}\}\cup\{u\in\U^{\rm non-stop}(S):p_C(S,u)=0\},\\
\U_{C+}(S)
&:=\{u\in\U^{\rm non-stop}(S):p_C(S,u)>0\},\\
\U_C^{\rm pure}(S)
&:=\{u\in\U^{\rm non-stop}(S):p_C(S,u)=1\},\\
\U_M(S)
&:=\{u\in\U^{\rm non-stop}(S):0<p_C(S,u)<1\}.
\end{align}
Thus $\U_{C+}(S)=\U_C^{\rm pure}(S)\cup\U_M(S)$, and the three families $\U_O$, $\U_C^{\rm pure}$, and $\U_M$ are mutually exclusive on non-stop actions. The Bellman envelopes for the general inherited-structure-versus-structural-novelty comparison are
\begin{equation}
Q_O^*(S):=\inf_{u\in\U_O(S)}Q^*(S,u),
\qquad
Q_{C+}^*(S):=\inf_{u\in\U_{C+}(S)}Q^*(S,u).
\label{eq:ocfamilyenvelopes}
\end{equation}
When pure structural-enrichment actions are of separate interest, write
\[
Q_C^{*,\rm pure}(S):=\inf_{u\in\U_C^{\rm pure}(S)}Q^*(S,u).
\]
Here family membership is state-relative through endpoint novelty: $Q_O^*$ is the best control whose successor structural classes remain literally simulable by inherited structural capability, whereas $Q_{C+}^*$ is the best action carrying positive-probability endpoint structural novelty. Future option value, stochastic successor geometry, later structural actions, switching costs, and future resource states enter through these Bellman envelopes.

A \emph{control-family-stable saturation path} is an inherited-structure reachability saturation path on which
\begin{equation}
\U_O(S_\rho)=\U_O^0,\qquad
\U_{C+}(S_\rho)=\U_{C+}^0
\qquad\text{for every }\rho\in I.
\label{eq:familystablesaturation}
\end{equation}
On such a path define
\[
A_{\rm OC}(\rho):=Q_O^*(S_\rho)-Q_{C+}^*(S_\rho),
\]
so the Bellman comparison is taken over fixed action families. More general extensions can replace family stability with a monotone action-correspondence condition.

A single-crossing refinement assumes cross-family increasing differences along this path. Under that condition $A_{\rm OC}$ is monotone even when the minimizing action changes inside either family, and a sign change gives a unique threshold. The corresponding $\epsilon$-optimal envelope argument is collected in \hyperref[app:proofs]{Appendix Module I}.

For the operational loss in Eq.~\eqref{eq:operationalterminal}, let
\begin{equation}
\Delta_t^{\rm sel}(u)=G_{\rm sel}^{\pi}(S_t)-\E[G_{\rm sel}^{\pi}(S_{t+1})\mid S_t,u].
\end{equation}
\begin{proposition}[Operational terminal-loss decomposition]
Whenever the displayed quantities are measurable and integrable,
\begin{equation}
L_{\rm OC}^{\rm op}(S_t)-\E[L_{\rm OC}^{\rm op}(S_{t+1})\mid S_t,u]
=\Delta_t^{\rm floor}(u)+\Delta_t^{\rm reach}(u)+\Delta_t^{\rm fin}(u)
+\Delta_t^{\rm asym}(u)+\Delta_t^{\rm dep}(u)+\Delta_t^{\rm sel}(u).
\end{equation}
\end{proposition}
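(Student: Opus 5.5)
The plan is to prove the identity by telescoping. I will expand the operational loss into its oracle part plus the selection gap, and then reuse the decomposition of the oracle benchmark already established in the Deployment benchmark decomposition proposition.

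First, by Eq.~\eqref{eq:operationalterminal}, at every state the operational loss is
\[
L_{\rm OC}^{\rm op}(S)=L_{\rm OC}^{\star}(S)+G_{\rm sel}^{\pi}(S).
\]
By Eq.~\eqref{eq:octerminal}, the oracle part is $L_{\rm OC}^{\star}(S)=\R_{\rm fin}(S)+K_{\rm dep}(\mathcal J,M)$, with $\R_{\rm fin}=\R_{\rm reach}$. I will write this expansion at $S_t$ and again at the random successor $S_{t+1}$. Then I take the conditional expectation given $(S_t,u)$ of the successor expression. Conditional expectation is linear, and the integrability hypothesis makes each separate expectation finite, so subtracting gives
\[
L_{\rm OC}^{\rm op}(S_t)-\E[L_{\rm OC}^{\rm op}(S_{t+1})\mid S_t,u]=\Delta_t^{\rm OC\star}(u)+\Delta_t^{\rm sel}(u).
\]
This uses the definitions of $\Delta_t^{\rm OC\star}$ and $\Delta_t^{\rm sel}$ directly.

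Second, I substitute Eq.~\eqref{eq:deploymentbenchmarkfourlevel} for $\Delta_t^{\rm OC\star}(u)$. That identity came from the exact four-level gap identity Eq.~\eqref{eq:dynamicthreegap}, namely $\R_{\rm reach}=\R_{\rm floor}+C_{\rm reach}+C_{\rm fin}+C_{\rm asym}$, applied at both times. After substitution the right-hand side is exactly the displayed six-term sum.

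The only delicate point is bookkeeping of integrability. No term may be infinite, since otherwise an expression of the form $\infty-\infty$ could arise when the conditional expectation is split. The hypothesis that the displayed quantities are measurable and integrable is what permits this splitting. I will also note that $G_{\rm sel}^{\pi}(S_{t+1})$ is evaluated with the same policy $\pi$ at the successor state. This matches the definition of $\Delta_t^{\rm sel}$, so no further argument is needed.
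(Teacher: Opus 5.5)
Your proposal is correct and matches the paper's proof: substitute $L_{\rm OC}^{\rm op}=L_{\rm OC}^{\star}+G_{\rm sel}^{\pi}$, split the conditional expectation by linearity (which the integrability hypothesis permits), and apply the deployment benchmark decomposition Eq.~\eqref{eq:deploymentbenchmarkfourlevel} to the oracle part. Your remark that the hypothesis is what rules out an $\infty-\infty$ split makes explicit a point the paper leaves implicit.
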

\begin{proof}
Substitute $L_{\rm OC}^{\rm op}=L_{\rm OC}^{\star}+G_{\rm sel}^{\pi}$ and apply the preceding decomposition.
\end{proof}

The decompositions identify which term an action changes. The Bellman value $Q^*$ completes the comparison by adding immediate transition and holding costs, future option value, and stochastic transitions.

\begin{proposition}[Risk cost of pruning]
If $\Pi_{\phi,H'}\subseteq\Pi_{\phi,H}$, then
\begin{equation}
\R_{\phi,H'}^*\ge\R_{\phi,H}^*.
\end{equation}
More precisely, if there is a $\delta>0$ such that every retained kernel $\kappa\in\Pi_{\phi,H'}$ has risk at least $\R_{\phi,H}^*+\delta$, then $\R_{\phi,H'}^*\ge\R_{\phi,H}^*+\delta$. Removing an attained optimizer alone need not raise the infimum when a retained near-optimal sequence remains. Whether pruning has positive trajectory value is determined by the full Bellman comparison, because pruning may also change the compensation gap, deployment burden, selection gap, transition law, and future option set. In the one-step special case with no further state effects, its savings must exceed the induced increase in terminal loss plus the intervention cost.
\end{proposition}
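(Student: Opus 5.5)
The first claim follows from monotonicity of the infimum under inclusion. Since $\Pi_{\phi,H'}\subseteq\Pi_{\phi,H}$ and the interface $\phi$ is common, composition with $\phi$ gives $\mathcal D_{(\phi,H')}\subseteq\mathcal D_{(\phi,H)}$. The infimum of the fixed functional $\mathcal L_{P,\ell}$ over a subset is at least its infimum over the superset. This yields $\R_{\phi,H'}^*\ge\R_{\phi,H}^*$. It is the same argument as the structural-contraction proposition with $c$ the identity.

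For the quantitative refinement, I would use the uniform gap hypothesis directly. Every $d=\kappa\circ\phi$ with $\kappa\in\Pi_{\phi,H'}$ satisfies $\mathcal L_{P,\ell}(d)\ge\R_{\phi,H}^*+\delta$, so the constant $\R_{\phi,H}^*+\delta$ is a lower bound for the retained risks. Taking the infimum over the retained class gives $\R_{\phi,H'}^*\ge\R_{\phi,H}^*+\delta$. Nonemptiness of $\Pi_{\phi,H'}$ is assumed throughout, so the infimum is not $+\infty$ vacuously; if it were empty, the inequality would hold trivially under $\inf\varnothing=+\infty$.

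The remark that removing an attained optimizer need not raise the floor needs a small example. Take $\A=\{0,1\}$, a single task state, and loss $\ell(a)=a$. Let $\Pi_{\phi,H}$ be the kernels placing mass $p\in[0,1]$ on action $1$, and prune only $p=0$. The retained risks $p>0$ still have infimum $0$, so the floor is unchanged. The uniform gap hypothesis is exactly what rules this example out.

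The trajectory-value statements are corollaries of the Bellman framework, not new inequalities. By the operational terminal-loss decomposition, a pruning action changes $\Delta^{\rm floor}$, the realization gaps, $\Delta^{\rm dep}$ and $\Delta^{\rm sel}$ together, so the floor increase alone does not determine its value. In the one-step case, pruning $u$ is followed by stopping and has no other state effects. Then $Q^*(S,u)=k(S,u)+L_{\rm term}(S')$ with $S'$ deterministic, and $u$ beats immediate stopping iff $L_{\rm term}(S)-L_{\rm term}(S')>k(S,u)$. Writing $L_{\rm term}$ as risk plus burden, this says the burden savings exceed the terminal-risk increase plus the intervention cost. The only real subtlety is stating this one-step special case precisely, since the remainder is bookkeeping.
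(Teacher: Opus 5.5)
Your proposal is correct and follows the paper's argument: the paper's proof is just the observation that the infimum of the same risk functional over a subset is no smaller. Your uniform-$\delta$ bound, the $p\in(0,1]$ example, and the prune-then-stop Bellman comparison correctly fill in the remarks the paper leaves implicit. The only nit is that your one-step computation silently uses the canonical $\beta=1$; with discounting the condition reads $L_{\rm term}(S)-\beta L_{\rm term}(S')>k(S,u)$.
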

\begin{proof}
The infimum of the same risk functional over a subset is at least as large as the infimum over the original set.
\end{proof}

\section{Meta-control, recursive limits, and substrate termination}
\label{app:meta}

A controller faces a latent bottleneck state $B\in\mathcal B$ and chooses an intervention $u\in\mathcal U(S)$ under a meta-loss $L_{\rm meta}(u,B)$ that prices task error, resource use, deployment burden, and future option value. Computation, observation, memory restoration, representation change, tool use, contraction, and stopping are different interventions in this common decision problem. The meta-interface map $\mu:\mathcal S\to\mathcal Z_\mu$ produces the observation $\mu(S)$ available for structural-action selection; the controller policy is the distinct kernel $\pi(\cdot\mid\mu(S))$.

\subsection{Diagnostic and action-selection floors}

Let the coarse diagnostic categories be
\begin{equation}
\mathcal C:=\{\mathrm{comp},\mathrm{obs},\mathrm{mem},\mathrm{repr},\mathrm{obj}\},
\end{equation}
and let $c:\mathcal B\to\mathcal C$ map a latent bottleneck state to its coarse error-source category. Diagnosis predicts $c(B)$, whereas action selection chooses an optimal member of $\U(S)$ under $L_{\rm meta}(u,B)$. These are different decision problems: the same coarse category can contain bottleneck states with different optimal actions under different budgets, costs, and future option values, while distinct bottlenecks can share one optimal intervention.

\begin{theorem}[Meta-diagnostic collision floor]
Consider two equiprobable states $S_0,S_1$ with latent bottlenecks $B_0,B_1$. If $\mu(S_0)=\mu(S_1)$ but $c(B_0)\ne c(B_1)$, every diagnosis kernel measurable through $\mu$ has average error at least $1/2$. For induced meta-interface laws $P_0^\mu,P_1^\mu$, the optimal binary diagnostic error is $(1-\TV(P_0^\mu,P_1^\mu))/2$.
\end{theorem}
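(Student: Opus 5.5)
The plan has two parts. The first is the exact-collision floor, which mirrors Theorem~\ref{thm:main-fiber-invariance} and the exact paired collision corollary. The second is a binary hypothesis-testing computation for the total-variation formula.

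\emph{Exact collision.} A diagnosis kernel measurable through $\mu$ has the form $\delta(\cdot\mid S)=\tilde\delta(\cdot\mid\mu(S))$ for some kernel $\tilde\delta$ on $\mathcal Z_\mu$. Since $\mu(S_0)=\mu(S_1)$, the kernel assigns the same law on $\mathcal C$ to both states. Write $p_i$ for the probability this common law gives to the correct category $c(B_i)$. Because $c(B_0)\ne c(B_1)$ are distinct atoms of $\mathcal C$, we have $p_0+p_1\le 1$. The average correctness is therefore $(p_0+p_1)/2\le 1/2$, so the average error is at least $1/2$. This argument is the same as the randomized-rule step in the exact paired collision corollary.

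\emph{Induced laws.} Now let the meta-observation be random, with laws $P_0^\mu,P_1^\mu$ on $\mathcal Z_\mu$ under the two hypotheses. Only the events $\{c(B_0)\}$ and $\{c(B_1)\}$ matter, so any diagnosis kernel can be reduced to a randomized test $f:\mathcal Z_\mu\to[0,1]$, where $f(z)$ is the probability of outputting $c(B_1)$. Reassigning mass from other categories to one of these two never increases the error.

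For such a test, the average error is
\[
\tfrac12\Big(\int f\,dP_0^\mu+\int(1-f)\,dP_1^\mu\Big)=\tfrac12\Big(1-\int f\,d(P_1^\mu-P_0^\mu)\Big).
\]
Over $0\le f\le 1$, the supremum of $\int f\,d(P_1^\mu-P_0^\mu)$ equals $\TV(P_0^\mu,P_1^\mu)$. It is attained by the indicator of the positive set in the Hahn decomposition, equivalently $\{dP_1^\mu/d\lambda> dP_0^\mu/d\lambda\}$ for a dominating measure $\lambda=P_0^\mu+P_1^\mu$. This attainment also makes the bound an exact minimum, which gives the stated error $(1-\TV)/2$. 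The exact-collision case is recovered as $P_0^\mu=P_1^\mu$, where $\TV=0$.

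\emph{Main obstacle.} The step needing care is the reduction of an arbitrary kernel over the full category set $\mathcal C$ (five labels) to a binary test. I will handle it by showing that mass placed on irrelevant categories only adds error. A secondary point is measurability of the Hahn set on the standard Borel space $\mathcal Z_\mu$, which follows from the existence of Radon--Nikodym densities.
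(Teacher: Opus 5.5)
Your proposal is correct and follows the paper's route: the paper's proof invokes the exact paired-collision argument ($p_0+p_1\le 1$) and the approximate-collision binary-testing computation, taking $\mu$ as the observation and $c(B)$ as the target, which is exactly what you reproduce. Your explicit reduction from the five-category output set to a binary test, and your treatment of randomized tests $f$ via the Hahn set, fill in details the paper leaves implicit (its derivation maximizes only over deterministic decision regions).
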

\begin{proof}
Apply the exact and approximate collision result with meta-interface $\mu$ as the observation and coarse bottleneck label $c(B)$ as the target.
\end{proof}

\begin{theorem}[Meta-action collision floor]
Suppose $\mu(S_0)=\mu(S_1)$, $\U(S_0)=\U(S_1)$, each optimal-action set
\begin{equation}
\Opt(S)=\arg\min_{u\in\U(S)}Q^*(S,u)
\end{equation}
is nonempty and measurable, and $\Opt(S_0)\cap\Opt(S_1)=\varnothing$. Any controller policy $\pi(\cdot\mid\mu(S))$ has average optimal-action error at least $1/2$ on the pair.
\end{theorem}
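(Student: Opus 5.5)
The argument mirrors the exact paired collision corollary and Theorem~\ref{thm:main-fiber-invariance}, with the meta-interface $\mu$ playing the role of the task interface and optimal-action membership playing the role of the target label. First, since the policy is $\mu$-measurable, $\pi(\cdot\mid\mu(S_0))=\pi(\cdot\mid\mu(S_1))$. Call this common action law $\nu$. The hypothesis $\U(S_0)=\U(S_1)$ ensures that $\nu$ is a law on one common feasible menu, so evaluating it against both optimal sets is meaningful and no feasibility mismatch arises.

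Next, define the optimal-action error on state $S_i$ as $1-\nu(\Opt(S_i))$. Measurability of $\Opt(S_i)$ is exactly what makes $\nu(\Opt(S_i))$ well defined, and nonemptiness ensures the error is an honest criterion rather than vacuous. Because $\Opt(S_0)\cap\Opt(S_1)=\varnothing$, additivity of $\nu$ gives
\[
\nu(\Opt(S_0))+\nu(\Opt(S_1))=\nu\bigl(\Opt(S_0)\cup\Opt(S_1)\bigr)\le1 .
\]
With equal weights, the average error is then
\[
\tfrac12\bigl[(1-\nu(\Opt(S_0)))+(1-\nu(\Opt(S_1)))\bigr]\ge\tfrac12 .
\]

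The only step needing care is the first: the argument must invoke admissibility of controller policies, namely $\pi(\cdot\mid S)=\pi(\cdot\mid\mu(S))$, to get the identical action law. It must also note that the bound holds pointwise for every such policy, with no attainment or Bellman regularity assumed beyond the stated measurability of $\Opt$. A remark will record that the floor is tight when the menu is split between the two optimal sets, exactly as in the binary case of Theorem~\ref{thm:main-fiber-invariance}.
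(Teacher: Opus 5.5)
Your proposal is correct and follows essentially the same argument as the paper. Both rest on two facts: $\mu$-measurability forces one common action law on the pair, and disjointness of $\Opt(S_0)$ and $\Opt(S_1)$ caps the total optimal-action mass at one, so average success is at most $1/2$.
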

\begin{proof}
The interface equality forces $\pi(\cdot\mid\mu(S_0))=\pi(\cdot\mid\mu(S_1))$. Because the two optimal sets are disjoint, the common distribution assigns total probability at most one to them; average optimal-action success is therefore at most $1/2$.
\end{proof}

\begin{definition}[State-relative control-family label and optimal-family set]
For a feasible state--action pair define
\begin{equation}
d(S,u):=
\begin{cases}
O, & u=u^{\rm stop}\ \text{or}\ \bigl(u\in\U^{\rm non-stop}(S),\ p_C(S,u)=0\bigr),\\[3pt]
C, & u\in\U^{\rm non-stop}(S),\ p_C(S,u)=1,\\[3pt]
M, & u\in\U^{\rm non-stop}(S),\ 0<p_C(S,u)<1.
\end{cases}
\label{eq:statefamilylabel}
\end{equation}
Here $C$ denotes pure/almost-sure endpoint structural novelty, while the notation $C+$ denotes the broader positive-probability family $p_C>0$. For nonempty
\[
\Opt(S):=\arg\min_{u\in\U(S)}Q^*(S,u),
\]
define the \emph{optimal-family set}
\begin{equation}
G(S):=\{d(S,u):u\in\Opt(S)\}\subseteq\{O,C,M\}.
\label{eq:optimalfamilyset}
\end{equation}
Thus $G(S)=\{M\}$ means that a mixed stochastic action is optimal, whereas $G(S)=\{O,C\}$ means that pure Ockhamian and pure Chattonian actions tie in Bellman value. These cases remain distinct.
\end{definition}

the recursive diagnostic prediction in Appendix~\ref{app:predictions} proves the disjoint-family collision floor, characterizes breakability by the probe-induced total variation, and derives the exact net diagnostic value $\lambda\TV(K_0^a,K_1^a)/2-c(a)$. This turns ``diagnose the bottleneck'' into an interventional requirement: a useful diagnostic action induces observably different response laws under competing bottleneck hypotheses, while the Bellman controller compares that information value with acquisition cost.

For a general bottleneck space and intervention menu, let $R_{\rm meta}^*(\mu)$ denote the Bayes meta-risk under the current diagnostic interface and let a probe $a$ return $Z_a$. Its general value of diagnostic information is
\begin{equation}
\operatorname{VoD}_{\rm gen}(a)
:=R_{\rm meta}^*(\mu)
-\E\!\left[R_{\rm meta}^*(\mu,Z_a)\right]
-c(a).
\label{eq:generalvod}
\end{equation}
The general meta-decision problem is the primary form: latent bottleneck state $B$ is viewed through the finite meta-interface $\mu(S)$ and mapped to an intervention $u$. The binary total-variation formula above is its closed-form two-bottleneck, two-control-family projection. Equation~\eqref{eq:generalvod} places clarification, retrieval for diagnosis, active testing, introspective probes, and tool probing in the same meta-decision language.

This separates \emph{identifiability} from \emph{economic diagnosability}: a probe can reveal the bottleneck yet still be irrational when its information value is smaller than its acquisition cost. In the full dynamic model the same comparison is absorbed into the Bellman action value, where diagnosis can also alter future option value.

These results produce the recursive structure:
\begin{equation}
\boxed{\text{task-interface limit}\ \longrightarrow\ \text{structural control}\ \longrightarrow\ \text{meta-interface limit}.}
\end{equation}

\begin{corollary}[Level-relative meta-interface refinement]
Fix the task law $P$, task interface $\phi$, executable support $H$, dynamic environment, and cost functional. Reserve the additional signal in $\nu$ for meta-action selection and exclude it from the terminal semantic decision rule; direct terminal availability assigns the signal to the task interface $\phi$. Under this separation, refining the controller interface from $\mu$ to $\nu$, with $\mu=r\circ\nu$ almost surely, leaves the task-level joint floor $\R_{\phi,H}^*(P)$ unchanged. If every policy measurable through $\mu$ can be reproduced through $\nu$, then the optimal dynamic value under $\nu$ is no larger than that under $\mu$.
\end{corollary}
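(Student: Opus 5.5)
The corollary has two parts, and I would prove them separately: first that the task-level floor is unchanged, then that the optimal dynamic value does not increase.

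\textbf{Floor invariance.} The joint floor $\R_{\phi,H}^*(P)=\inf_{d\in\mathcal D_{\mathcal J}}\mathcal L_{P,\ell}(d)$ depends only on $P$, $\ell$, and the class $\mathcal D_{\mathcal J}=\{\kappa\circ\phi:\kappa\in\Pi_{\phi,H}\}$. The separation hypothesis says the extra signal in $\nu$ is excluded from the terminal semantic rule. So the terminal interface stays $\phi$ and the executable class stays $\Pi_{\phi,H}$. The deployment-consistency condition \eqref{eq:interfaceconsistency} is also untouched, because $\sigma(\phi(X))$ does not change. Hence $\mathcal D_{\mathcal J}$, and with it the infimum, are literally the same objects under $\mu$ and $\nu$. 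I would add the contrapositive remark: if the signal were admitted to the terminal rule, it would enter $\iota$ and $\phi$, and Theorem~\ref{thm:main-fiber-invariance} shows that splitting fibers is exactly what moves the floor. This is why the level-relative separation is needed.

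\textbf{Value monotonicity.} Let $\mathfrak P_\mu$ and $\mathfrak P_\nu$ be the admissible controller policies measurable through $\mu$ and $\nu$, respectively.
\begin{itemize}
\item For each $\pi\in\mathfrak P_\mu$, the hypothesis gives $\pi'\in\mathfrak P_\nu$ that reproduces it. The canonical choice is $\pi'(\cdot\mid\nu(S))=\pi(\cdot\mid r(\nu(S)))$, which agrees with $\pi(\cdot\mid\mu(S))$ almost surely since $\mu=r\circ\nu$ a.s.
\item The environment, transition kernel, feasible menus $\U(S)$, running cost $k$, terminal loss $L_{\rm term}$, and discount $\beta$ are all fixed by hypothesis. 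So the induced law of the controlled trajectory $(S_t,u_t)$ and the stopping time $\tau$ coincide under $\pi$ and $\pi'$. I would argue this by induction on $t$, using the Ionescu--Tulcea construction of the path measure.
\item Therefore $J^{\pi'}(S)=J^\pi(S)$, and admissibility is preserved: $J^\pi(S)<\infty$ implies $\pi'\in\mathfrak P_{\rm adm}(S)$.
\end{itemize}
Taking infima over the larger class gives $V_\nu^*(S)=\inf_{\mathfrak P_\nu}J\le\inf_{\mathfrak P_\mu}J=V_\mu^*(S)$, which is \eqref{eq:main-meta-refinement}.

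\textbf{Main obstacle.} The delicate step is the trajectory-law equality. Agreement of $\pi(\cdot\mid\mu(S))$ and $\pi'(\cdot\mid\nu(S))$ holds only almost surely, so I must check that the exceptional null set of $S$ has measure zero under every marginal visited by the process, not just under the initial law. I would handle this in one of two ways. If the reproducibility hypothesis is stated pathwise, I use it directly. Otherwise I define $\pi'$ by the composition formula above, which makes agreement hold everywhere on the event $\mu=r\circ\nu$, and I require that this event be invariant under the dynamics. That invariance is the natural reading of ``every $\mu$-measurable policy can be reproduced'' in the statement.
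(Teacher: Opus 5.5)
Your proposal is correct and follows the paper's own argument. The floor is unchanged because, with the new signal excluded from the terminal rule, $\R_{\phi,H}^*(P)$ is still the infimum over the same class $\mathcal D_{\mathcal J}$; the value inequality holds because the reproducibility hypothesis (for instance via $\pi'(\cdot\mid\nu(S))=\pi(\cdot\mid r(\nu(S)))$) makes the $\nu$-measurable policy class contain the $\mu$-measurable one, so the same trajectory cost is minimized over a larger class. Your added care, that $\mu=r\circ\nu$ must hold along the visited state marginals and not only under the initial law, goes beyond the paper's proof. The paper simply absorbs that point into the stated assumption that every $\mu$-measurable policy can be reproduced through $\nu$.
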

\begin{proof}
Because the new signal is excluded from the terminal semantic rule, the task-level floor depends on the fixed $(P,\phi,H)$. The refinement can ignore its added signal and reproduce every old control policy, so the refined policy class contains the old class; taking the infimum of the same trajectory cost over the larger class cannot increase value.
\end{proof}

Meta-interface refinement is therefore level-relative: it is Chattonian enrichment for the controller and a realization improvement for task-level intervention selection, but it cannot by itself lower the task-level joint floor. The refined controller must still decide when to use additional signals, tools, or internal computation. This recurrent structure explains why architecture can relocate the dilemma without removing the need for control. Finite budgets, external feedback, hard constraints, and specialized controllers terminate particular recursions by supplying an external stopping rule or a richer discriminating state.

\subsection{Substrate termination and human control}

Human cognition provides an architectural comparison. Core systems for objects, actions, number, and space organize task-relevant distinctions early in cognition \cite{spelke2007}; affordance competition and sensorimotor selection bias action before a fully specified symbolic plan \cite{cisek2007}; epistemic action changes the world to simplify cognition \cite{kirsh1994}; and effort allocation supplies a specialized control mechanism \cite{shenhav2013,lieder2017}. The dissociation between language and non-linguistic thought supports the architectural separation \cite{fedorenko2024,mahowald2024}, while metacognitive sensitivity remains a distinct and imperfect capacity \cite{fleming2012,maniscalco2012}. Together, these findings motivate the architectural premises of the \emph{substrate-termination hypothesis}. The hypothesis states that non-linguistic substrate organization resolves many coarse Ockham--Chatton direction choices before explicit language-mediated meta-control.

Let $S\sim\varrho$ for a specified distribution over control states, and let
\begin{equation}
\eta:\mathcal S\to\mathcal B_{\rm sub}
\end{equation}
denote the task-relevant non-linguistic substrate state supplied by perceptual, sensorimotor, memory, and domain-specific components. The controller observation $\mu(S)$ and architectural substrate $\eta(S)$ occupy distinct levels; $\eta(S)$ can supply a direct route to action-family choice. On states with singleton pure optimal-family requirements, define $D_{\rm OC}(S)=O$ when $G(S)=\{O\}$ and $D_{\rm OC}(S)=C$ when $G(S)=\{C\}$. States with $G(S)=\{M\}$ or $|G(S)|>1$ remain in the richer family-selection problem. The substrate is directionally sufficient relative to $\varrho$ on the singleton pure-family states when a measurable map $h:\mathcal B_{\rm sub}\to\{O,C\}$ satisfies
\begin{equation}
D_{\rm OC}(S)=h(\eta(S))\quad\varrho\text{-almost surely}.
\end{equation}
A direct architectural route from $\eta(S)$ to the correct action family terminates the directional recursion at the substrate level. Selection within that family can still recruit effort allocation, confidence monitoring, and higher-order control.

A substrate directional collision occurs when $\eta(S_0)=\eta(S_1)$ while $D_{\rm OC}(S_0)\ne D_{\rm OC}(S_1)$. Its resolution requires a refined substrate state, a higher-level diagnostic interface, or an external interaction that exposes the missing distinction. Within this framework, substrate and diagnostic collisions generate a principal class of human metacognitive errors; mis-specified objectives, noisy control costs, and finite realization generate additional errors above those floors. Substrate termination remains the architectural mechanism that resolves many coarse directional choices before explicit symbolic control.

The substrate-termination hypothesis predicts architecturally truncated recursion across many human tasks. When $\eta$ is directionally sufficient, perceptual, sensorimotor, memory, and domain-specific systems resolve the coarse action family before explicit language-mediated meta-reasoning. When substrate collisions occur, higher-order diagnosis or external interaction becomes necessary. Language-first AI concentrates more distinction formation and intervention selection at symbolic and meta-control interfaces, making Ockham--Chatton recursion more explicit and increasing the value of multimodal observation, persistent memory, active experimentation, and specialized control components.

\paragraph{Human agency and boundary authorship.}
Machine capability unfolds inside interfaces, goals, executable semantics, evaluators, and institutions that humans construct and remain responsible for. As control within supplied boundaries becomes increasingly effective, human agency concentrates on boundary authorship: defining worthwhile ends, creating task-relevant distinctions, grounding norms in lived and social contexts, and accepting responsibility for structural change. The recursive analysis sharpens this role. Human contribution enters through substrate interaction, objective revision, social commitment, and institutional construction that redefine the state and action space inherited by subsequent control.

\end{textAtEnd}

\begin{textAtEnd}[category=predictions]
\section{Module II: Predictions and new research directions}
\label{app:predictions}
The predictions in this module are consequences of the inherited/finite capability distinction, Theorem~\ref{thm:main-budget-reversal}, and successor-class control; they are ordered from direct representation consequences to broader architectural forecasts.

\subsection{Prediction 1: workload-relative recovery and held-out risk transfer}
At a fixed resource level, elicited executable policies give an inner approximation to $\widehat\F_s$ and bounded-loss probes give an outer approximation on a declared finite evaluation slice. Corollary~\ref{cor:main-behavioral-identification} supplies the finite-slice Hausdorff and statistical rates, while Theorem~\ref{thm:main-workload-tail-radius} identifies the residual coverage radius when the task space is open-ended. Countable workloads admit tail truncation followed by finite-head recovery; non-atomic workload mass leaves a finite-information floor on the unrestricted singleton subclass. The inner--outer Hausdorff gap remains a stopping certificate inside the selected slice, and Eq.~\eqref{eq:main-heldout-risk-certificate} transfers that geometric certificate to previously unqueried bounded-loss objectives at the same evaluation resolution.

\subsection{Prediction 2: workload-stable finite reversals}
Failure of finite-envelope inclusion under a structurally richer successor gives every prescribed workload with full-support $X$-marginal a bounded objective that exposes a strict reversal. Architecture enrichment can therefore improve terminal capability while degrading some finite-budget deployment objectives until a compatibility path or additional budget restores the lost envelope.

\subsection{Prediction 3: restoration-budget curves for capability upgrades}
For a scalar external resource axis, the map $T^{\rm risk}_{1\to2}(s)$ defined in Eq.~\eqref{eq:main-resource-restoration} gives the minimum target-side budget required for a successor to recover the predecessor's risk-equivalent finite capability. Comparing restoration curves across model, context, retrieval, and tool upgrades separates structural enrichment from realization overhead and yields a measurable development target.

\subsection{Prediction 4: persistent positive successor value dominates fixed realization increments after saturation}
Nested finite realization makes the benefit of any fixed extra resource increment converge to zero. A boundary-changing successor with persistent positive complete Bellman gain therefore eventually dominates that fixed realization increment. Single-crossing conditions sharpen eventual dominance into one threshold; direction-change costs widen it into a hysteresis band.

\subsection{Prediction 5: recursive control is limited by diagnostic information}
When capability states requiring different directions collide at the controller interface, meta-deliberation through the same interface preserves the collision. Diagnostic probes reduce direction error according to the total variation they create between induced observation laws. This predicts monotone gains from genuinely separating diagnostic information, while additional processing of an unchanged meta-interface remains at the same collision floor.

\subsection{Prediction 6: development and deployment can exhibit a cross-level control imbalance}
\section{Cross-level Ockham--Chatton imbalance}

Across the design families surveyed here, system development changes both realization profiles and structural capability, while deployment exposes controls over terminal ceilings and pathwise activation inside the resulting system. We define the asymmetry operationally as a recurrent allocation of control across levels. Parameters, training data, learned optimization, compilers, inference procedures, and runtime design can alter the realization profile $M$; architectural additions such as modalities, memory channels, tools, interpreters, and action schemas can additionally change $\phi$ or $H$. Deployment then exposes the inherited pair $(\mathcal J,M)$ through prompts, selected context, tool gateways, terminal realization ceilings, and trajectory-level search state.

\subsection{Deployment concentrates Ockhamian control}

Deployment methods compress prompts, prune context, filter retrieval, sparsify attention, route among models and tools, adapt reasoning length, and stop search when additional work has low value. LLMLingua compresses prompts \cite{jiang2023}; selective attention learns query-dependent contextual sparsity \cite{zhang2024ssa}; RouteLLM routes requests across models, while BEST-Route jointly chooses a model and a response-sampling budget \cite{routellm2025,bestroute2025}; AdaptThink and REFRAIN adapt the amount of reasoning and the stopping decision \cite{adaptthink2025,refrain2026}. These methods primarily move current reachability or the budget-realization boundary when accessible evidence, executable support, and the asymptotic realization mechanism remain fixed.

Structural enrichment supplies the complementary direction. FLARE, Self-RAG, Adaptive-RAG, and AutoSearch make retrieval timing, retrieval depth, or retrieval strategy state dependent \cite{flare2023,selfrag2024,adaptiverag2024,autosearch2026}; when retrieval exposes a new task-relevant distinction, it moves the structural boundary. ReAct couples reasoning with environment interaction \cite{yao2023}; Toolformer enlarges executable API support \cite{schick2023}; TECTON and MeCo adapt tool choice or invocation from metareasoning signals \cite{tecton2025,meco2025}. The same engineering mechanism can therefore act on different capability coordinates: selection among already exposed options changes current reachability, ceiling authorization changes budget realizability, persistent routing or inference redesign can change $M$ and asymptotic realizability, and exposure of new evidence or semantics changes structural capability. The control-family status is state-relative: the transition is Ockhamian when its successor structural decision class remains contained in the inherited class and is Chattonian-containing when endpoint structural novelty has positive probability.

\subsection{System development combines realization-profile change and structural construction}

Frontier development contains distinct formal operations. Training and system redesign can change the realization profile $M$ by changing the learned inference mechanism, optimization dynamics, compilation path, or runtime resource-to-capability map; compute-optimal scaling studies one important development-time allocation of model size and training tokens that can reshape this profile \cite{hoffmann2022}. Architectural additions such as modalities, persistent memory access, tools, interpreters, and action schemas can change effective distinctions or executable possibilities and therefore change $\mathcal J$. Deployment-time authorization of additional terminal resources changes $s$, while computation under fixed $(\mathcal J,M,s)$ changes $q$ and current reachability. Compression, distillation, pruning, sparsity, and efficient implementations are classified by which of these formal coordinates they actually change.

The cross-level problem appears when a broad trained support is controlled through narrow prompt-level selection at deployment. Two recurrent mismatches follow:
\begin{enumerate}
  \item a broad trained support remains underused because the runtime interface hides when a specialized capability is relevant;
  \item an expanded runtime interface overloads a controller that has not learned to price structural actions against computation, contraction, and stopping.
\end{enumerate}

\begin{takeaway}
\centering\bfseries
Generality requires bidirectional control at both levels: learn how to acquire data, tools, distinctions, and strategies, and learn when to use, ignore, preserve, prune, replace, or stop them.
\end{takeaway}

We call this the \emph{cross-level Ockham--Chatton imbalance}. It predicts direction errors in systems that combine broad capacity construction with restrictive trajectory-level activation: excessive computation under missing information, excessive pruning under novelty, indiscriminate expansion under realization difficulty, and delayed switching among these regimes.

\end{textAtEnd}

\begin{textAtEnd}[category=interpretation]
\subsection{Safety, inverse scaling, and multi-agent protocols}
Inverse-scaling and U-shaped scaling results show that larger systems can rank worse on particular tasks even as other capabilities improve \cite{mckenzie2023inverse,wei2022ushape}. The capability application separates terminal expansion, finite-envelope retention, and deployed selection. The additional consequence is diagnostic: common-budget reversal points to missing finite retention, while degradation after retention points to selection or stopping.

Constrained decoding supplies an executable-support application. Grammar- and parser-constrained generation exclude inadmissible continuations through the executable mechanism \cite{scholak2021picard,geng2023grammar}; learned refusal and reranking operate through selection. The framework therefore assigns different intervention coordinates to hard executable exclusion and learned control, which suggests separate stress tests for support and selector behavior.

Multi-agent sampling, voting, consensus, and debate alter the complete protocol envelope and its selector \cite{li2024agents,kaesberg2025,cui2026}. A retained single-agent or earlier-protocol fallback embeds a predecessor route into the richer protocol and directly tests the retention principle; aggregation experiments then isolate the additional selector effect. This application converts team-size comparison into the same finite-envelope and retention measurements used for model, tool, and context upgrades.
\end{textAtEnd}

\begin{textAtEnd}[category=experiments]
\section{Module IV: Controlled identification}
\label{app:experiments}
Corollary~\ref{cor:main-behavioral-identification} defines the behavioral measurement target. The controlled studies here identify coordinate movement through matched interventions: elicited executable decisions populate inner capability approximations, while collision certificates, common-ceiling comparisons, retention interventions, endpoint crosses, and diagnostic probes constrain how those approximations can move. This design targets the generating coordinates $\phi,H,M,s$ and $\mu$ without requiring exhaustive reconstruction of the complete finite envelope.
\subsection{Matched signatures of boundary movement}

\begin{table*}[t!]
\caption{\textbf{Matched intervention signatures.} Controlled rows use 16 seeds, with 64 paired seeds for budget and retention. Values are exact accuracy, identification score, or correct counts.}
\label{tab:main-results}
\centering
\scriptsize
\setlength{\tabcolsep}{3pt}
\renewcommand{\arraystretch}{1.03}
\begin{tabularx}{\textwidth}{@{}p{0.065\textwidth}p{0.105\textwidth}X>{\centering\arraybackslash}p{0.10\textwidth}>{\centering\arraybackslash}p{0.14\textwidth}p{0.18\textwidth}@{}}
\toprule
Source & Family & Matched contrast & Baseline & Intervention & Signature \\
\midrule
Legacy & Interface & hidden; state-exposing pairs & $.500$ paired & $1.000$ exposed & floor and refinement \\
 & Executable & restricted; extensible interpreter & $0/16$; $2/16$ & $1/16$; $16/16$ & support--reasoning complementarity \\
 & Computation & pointer; affine; finite-state; 3SUM & $0,5,9,8/16$ & $14,16,16,16/16$ & realization inside structure \\
\midrule
Control & Modular & 12-step computation & $0/16$ disabled & $16/16$ low; $13/16$ high & reachability; budget exhaustion \\
 & Budget & high/low across ceilings & 1K: $48/35$ & 2K: $52/62$; 8K: $64/63$ & reversal; retained recovery \\
 & Retention & external prefix retention & $\Delta R_{\rm oracle}=-1/64$ & $\Delta G_{\rm sel}^{\pi}=+3/64$ & $\Delta R=+2/64$ exactly \\
 & Endpoint & label $\times$ successor class & search $\leq .500$ & access $1.000$ & endpoint effect; label gap $\leq.031$ \\
 & Meta & coarse; $.75$; exact probe & $\leq.500$ coarse & $.656$--$.719$; $.844$--$1$ & recursive collision release \\
 & Routing & saturation $\rho:.1\rightarrow.9$ & THINK through $.5$ & structural from $.7$ & threshold switching \\
\bottomrule
\end{tabularx}
\end{table*}

The experiments calibrate the theory through matched boundary movements. DeepSeek-V4-Flash protocols vary reasoning, evidence, executable support, memory, and a common output ceiling under fixed scoring rules. Table~\ref{tab:main-results} assembles the resulting coordinate-identification signatures. Appendix~\ref{app:legacy-experiments}, \emph{Legacy API reconstruction and routing stress tests}, gives prompts, seeds, rerun policy, validity audits, confidence analyses, and routing stress tests.

The design uses matched boundary interventions. Computation changes the trajectory; collision pairs opposite targets under one effective input; refinement exposes the decisive bit; executable expansion changes interpreter semantics; ceiling and retention manipulate the finite envelope and selector. Two theory-specific crosses complete the chain. The \emph{endpoint cross} assigns both RETRIEVE and TOOL labels to search-only and state-revealing successors. The \emph{meta-interface cross} holds an ambiguous controller card fixed, then supplies $75\%$-reliable or exact bottleneck probes. Each contrast targets one coordinate of the broader decomposition in Eq.~\eqref{eq:app-broader-failure-anatomy} or one transition in the inherited-boundary loop.

The matched contrasts reveal the typed hierarchy and its recursion directly. With decisive state and execution rules visible, thinking closes algorithmic and serial-computation gaps; hidden twins stay at the structural ceiling and executable expansion changes the value of the same reasoning intervention. In the endpoint cross, both mechanism labels attain $1$ after state access and never exceed $.5$ after search-only successors, so realized successor class predicts capability across both labels. At the control layer, every effort mode moves monotonically from its coarse collision score through the $.75$ probe to the exact probe. The same information geometry therefore governs task decisions and intervention direction.

\end{textAtEnd}

\begin{textAtEnd}[category=experiments]
\section{Legacy API reconstruction and routing stress tests}
\label{app:legacy-experiments}
\subsection{Design}

The study tested one production model, DeepSeek V4-Flash, through an API in direct and thinking modes. Prompts, seeds, decoders, scoring, stage identifiers, and output schemas were frozen before aggregation. The protocol contains 778 jobs and 1,652 authoritative stages. Records are append-only; 95 initial length terminations were rerun with the same prompt and a 16,384-token ceiling, and the latest record per stage is authoritative.

The task families isolate four theoretical signatures:
\begin{enumerate}
  \item \textbf{Compensation}: hold the external interface fixed and add test-time reasoning.
  \item \textbf{Collision}: construct observational or memory twins with identical visible input and opposite targets.
  \item \textbf{Refinement}: reveal an intervention, restore omitted memory, or enlarge executable semantics.
  \item \textbf{Routing stress}: compare saturated retrieval with relational and authority-sensitive long-context selection.
\end{enumerate}

\begin{figure}[t]
\centering
\begin{tikzpicture}
\begin{groupplot}[
  group style={group size=3 by 1,horizontal sep=0.85cm},
  width=0.28\textwidth,height=5.05cm,
  ymin=0,ymax=1.08,
  ytick={0,0.25,0.5,0.75,1},
  yticklabel style={font=\sffamily\scriptsize},
  xticklabel style={font=\sffamily\tiny,align=center},
  title style={font=\sffamily\bfseries\scriptsize,text=ink,align=center},
  label style={font=\sffamily\scriptsize},
  grid=major,grid style={draw=gray!16},
  axis line style={draw=gray!45},
  tick style={draw=gray!45},
  legend style={font=\sffamily\scriptsize,draw=none,fill=none,at={(0.5,-0.29)},anchor=north,legend columns=2},
  enlarge x limits=0.18,ybar]
\nextgroupplot[title={(a) Computation\\closes a gap},symbolic x coords={Pointer,Affine,FSM,3SUM},xtick=data]
\addplot[fill=blue!70,draw=blue] coordinates {(Pointer,0) (Affine,0.3125) (FSM,0.5625) (3SUM,0.5)};
\addplot[fill=teal!78,draw=teal] coordinates {(Pointer,0.875) (Affine,1) (FSM,1) (3SUM,1)};
\legend{Direct,Thinking}
\nextgroupplot[title={(b) Interfaces\\determine the floor},symbolic x coords={Obs. twin,Passive,Active ID},xtick=data]
\addplot[fill=blue!70,draw=blue] coordinates {(Obs. twin,0.5) (Passive,0.81) (Active ID,0.7292)};
\addplot[fill=teal!78,draw=teal] coordinates {(Obs. twin,0.5) (Passive,1) (Active ID,1)};
\nextgroupplot[title={(c) Semantic support\\is an interface},symbolic x coords={Restricted,Extensible},xtick=data]
\addplot[fill=blue!70,draw=blue] coordinates {(Restricted,0) (Extensible,0.125)};
\addplot[fill=teal!78,draw=teal] coordinates {(Restricted,0.0625) (Extensible,1)};
\end{groupplot}
\end{tikzpicture}
\caption{Matched signatures. Thinking improves executable computation when decisive distinctions and strategies are available, remains at the exact twin floor when they are not, and becomes effective after executable support is expanded.}
\label{fig:mainresults}
\end{figure}

\subsection{Compensation inside a fixed structure}

Thinking improves pointer chasing from $0/16$ to $14/16$, affine induction from $5/16$ to $16/16$, finite-state execution from $9/16$ to $16/16$, and 3SUM from $8/16$ to $16/16$. Passive causal intervention accuracy rises from $162/200$ to $200/200$, and simulator identification from $35/48$ to $48/48$. These results are consistent with a reduced compensation gap: the relevant distinctions were present, and extra serial computation realized better rules over them.

\subsection{Collision floors under more thinking}

The observation-twin construction pairs two causal worlds with identical visible evidence and opposite labels. Across 100 stages per mode, the paired score is exactly $0.5$ in both direct and thinking modes. The memory experiment shows the same pattern. Full-context instances score $48/48$ in both modes. After truncation creates paired histories with the same visible suffix and different decisive earlier bits, valid responses receive a $0.5$ paired score: $24/24$ direct responses and all 13 valid thinking responses. Restoring memory moves the matched score from $0.5$ to $1.0$.

\subsection{Executable support and finite-realization complementarity}

Under a restricted interpreter, direct decoding produces no valid held-out executable prediction and thinking produces $1/16$. Under an extensible interpreter, validity rises to $2/16$ direct and $16/16$ with thinking. Under the observed deployed profile and ceiling, expansion lowers the structural barrier and reasoning closes the remaining pathwise and finite-realization gap. Their conjunction produces the full gain.

\subsection{Long-context routing as an Ockham--Chatton stressor}

Ordinary key--value and rule retrieval remains perfect through 2,048 distractors in both modes. A relational suite then varies competitors over 2,048, 8,192, and 32,768. End-to-end direct accuracy declines from $.714$ to $.167$; thinking reaches $.500$ at 32,768 and preserves $2/2$ accuracy for relative anchors and eight-hop graphs, while unnumbered ordinal access is $0/6$ direct across scales.

\begin{figure}[t]
\centering
\begin{tikzpicture}
\begin{axis}[
  width=\columnwidth,height=5.2cm,
  xmode=log,log basis x=2,
  xmin=1500,xmax=43000,
  xtick={2048,8192,32768},xticklabels={2K,8K,32K},
  ymin=0,ymax=1.05,ytick={0,.25,.5,.75,1},
  xlabel={number of competitors},ylabel={end-to-end accuracy},
  label style={font=\sffamily\scriptsize},ticklabel style={font=\sffamily\scriptsize},
  grid=both,grid style={gray!16},axis line style={gray!45},
  legend style={font=\sffamily\scriptsize,draw=none,fill=none,at={(0.5,-0.30)},anchor=north,legend columns=3}]
\addplot[very thick,blue,mark=*,mark options={fill=blue}] coordinates {(2048,.7143) (8192,.5714) (32768,.1667)};
\addplot[very thick,teal,mark=square*,mark options={fill=teal}] coordinates {(2048,.5714) (8192,.7692) (32768,.5)};
\addplot[very thick,coral,dashed,mark=triangle*,mark options={fill=coral}] coordinates {(2048,1) (8192,1) (32768,1)};
\legend{Stress direct,Stress thinking,Ordinary retrieval}
\end{axis}
\end{tikzpicture}
\caption{Saturated retrieval and relational control are different tests. Adding context expands the nominal interface, but routing under the deployed profile and ceiling can enlarge the realization burden.}
\label{fig:stress}
\end{figure}

The legacy data establish three static signatures; the theory-specified extension tests the same coordinates under controlled interventions.
\end{textAtEnd}

\begin{textAtEnd}[category=experiments]
The 3,248-record controlled study uses frozen generators, exact-match scoring, and requests disabled, low-, and high-effort modes \cite{deepseekapi2026}. Condition labels record the requested API configuration; the generator and append-only responses preserve the model, thinking toggle, effort field, token ceiling, and response metadata for each run. Controlled families use 16 seeds; budget and retention use 64 paired seeds. Appendix~\ref{app:retained} gives the candidate trajectory identity. Appendix~\ref{app:extensions} gives endpoint and diagnostic cells, ceiling pairings, confidence intervals, termination audits, factorial controls, topology checks, and routing curves.

\paragraph{Observed boundary signature.}
Modular computation moves $q$; hidden twins preserve $\mathcal J$ and its $.5$ floor; one-bit reveal refines $\phi$; interpreter expansion changes $H$; and a common ceiling isolates finite completion. Their conjunction traces realization, structural invariance, discontinuous refinement, executable complementarity, and finite reversal. The endpoint and meta-interface crosses carry the same identification logic from deployed capability to structural transitions and recursive control.

The targeted extensions expose transition, recursion, finite realization, and selection. Crossing RETRIEVE/TOOL labels with search/access endpoints yields access score $1$, search score at most $.5$, access label gap $0$, and maximum search label gap $.031$. Coarse meta-collisions stay at or below $.5$; $.75$-reliable probes reach $.656$--$.719$, and exact probes $.844$--$1$, monotonically in every mode. Low effort leads at 2,048 tokens ($62/64$ versus $52/64$, paired $p=.00635$), while high effort reaches $64/64$ at 8,192; retention raises the 2,048 envelope to $63/64$. Misleading feedback gives $\Delta R=2/64=\Delta R_{\rm oracle}+\Delta G_{\rm sel}^{\pi}=(-1+3)/64$, assigning the residual exactly to selection.
\end{textAtEnd}

\begin{textAtEnd}[category=experiments]
\section{Retained-candidate and constraint checks}
\label{app:retained}

The retained-candidate protocol stores every earlier answer and measures oracle-prefix risk and deployed selection risk over four revision rounds. Under neutral feedback, the first revision improves both from $1/64$ to $0$ with zero selection gap. Under misleading feedback, round 0 has $(R_{\rm oracle},G_{\rm sel}^{\pi})=(1/64,0)$ and round 1 has $(0,3/64)$, yielding the exact increment $2/64=-1/64+3/64$. Round 3 similarly has deployed risk $2/64=0+2/64$, while exact verification and best-of-$k$ attain $64/64$. Verifier-grounded feedback keeps both terms at zero. The contrast isolates selection from candidate generation exactly as the retained-candidate attribution identity in \hyperref[app:proofs]{Appendix Module I} predicts; related feedback-sensitive correction results appear in \cite{huang2023selfcorrect,tyen2024,zhang2025dark}.

The protected-marker suites record zero soft-policy violations in all three modes under both ordinary and instruction-conflict prompts. Grammar projection and the deterministic external filter also record zero violations by construction. The observed suite therefore supplies a compliance control, while the executable mechanisms instantiate the set-inclusion guarantee independently of learned compliance.
\end{textAtEnd}

\begin{textAtEnd}[category=originality]
\section{Theoretical positioning and nearest-neighbor comparison}
\label{app:originality}
\label{app:core-comparison}
This section places the sharp main-text results beside their closest mathematical and LLM-theoretic neighbors, then records the contribution hierarchy used throughout the appendix.

\subsection{Finite-budget deployment as a decision representation}
Theorem~\ref{thm:main-budget-reversal} combines a classical convex-analytic proof engine with a resource-indexed capability representation. A deployed pair $(\mathcal J,M)$ induces one finite envelope $\widehat\F_s(\mathcal J,M)$ at every externally declared resource level $s$. Universal bounded-loss comparison identifies exactly this envelope, so the family $s\mapsto\widehat\F_s(\mathcal J,M)$ gives a decision-theoretic representation of finite deployment capability.

Finite-dimensional convex separation converts envelope noninclusion into a separating linear objective. Statewise shifts and positive rescaling realize that objective as a bounded nonnegative loss under any prescribed workload with full-support $X$-marginal. The capability construction supplies the deployment object and its semantics: $\mathcal J=(\phi,H)$ specifies inherited information--execution structure, $M$ specifies the architecture's resource-to-policy realization map, and $s$ places architectures on one common external resource scale.

This representation produces four linked consequences. Terminal comparison on $\widehat{\mathcal D}_{\mathcal J}$ and finite comparison on $\widehat\F_s(\mathcal J,M)$ form distinct capability orders and can reverse direction. Finite nonretention is workload-stable because every prescribed full-support $X$-marginal admits a bounded objective exposing the missing capability. Equality of all bounded-loss optima identifies equality of finite risk envelopes, giving the behavioral identification law in Corollary~\ref{cor:main-behavioral-identification}. The restoration transformation $T^{\rm risk}_{1\to2}$ is simultaneously a set-inclusion threshold and the least target-side resource at which universal bounded-loss dominance over the predecessor's budget-$s$ capability is recovered.

\subsection{Nearest-neighbor theory}
\paragraph{Blackwell--Le Cam experiment comparison.}
Blackwell and Le Cam order observation experiments through universal decision value \cite{blackwell1951,blackwell1953,lecam1964}. The present capability square carries universal comparison through a joint information--execution structure and then along a resource-indexed realization dimension. The terminal envelope $\widehat{\mathcal D}_{\mathcal J}$ incorporates effective distinctions together with executable support; the deployed envelope $\widehat\F_s(\mathcal J,M)$ adds an architecture-specific realization map evaluated at a common external resource ceiling. Theorem~\ref{thm:main-budget-reversal} identifies the latter from universal same-budget performance, while $T^{\rm risk}$ traces the resource required to recover one deployed envelope inside another. The resulting terminal/finite pair supplies the order reversal used by the static and dynamic theory.

\paragraph{Convex separation and theorem content.}
Convex separation furnishes the exact witness once the deployed envelope has been constructed. The theorem's representation content is the identification of $\widehat\F_s(\mathcal J,M)$ as the complete invariant of bounded-loss performance at a declared common resource scale. The workload-stable witness and the restoration-resource representation then convert that invariant into two operational consequences: nonretention can be exposed without changing the workload marginal, and compatibility can be measured by the resource level at which universal dominance returns.

\paragraph{Noisy support-function recovery.}
Convex-body reconstruction studies recovery from finitely many noisy support-function measurements \cite{gardner2006,guntuboyina2012}. Corollary~\ref{cor:main-behavioral-identification} connects that geometry to deployed capability: Theorem~\ref{thm:main-budget-reversal} realizes every unit support direction as a bounded-loss decision probe under a full-support workload, while finite-net discretization and fresh-sample concentration give an explicit capability-envelope certificate. The upper bound exposes the declared slice dimension $d_{\rm slice}=|\mathcal X|(|\A|-1)$, the workload-conditioning factor $q_{\min}$, probe optimization error, and statistical error on one scale; the adaptive exact-support lower bound shows an unavoidable dimension-dependent probe cost over unrestricted finite envelopes. Equation~\eqref{eq:main-heldout-risk-certificate} then converts Hausdorff recovery into out-of-probe decision-risk prediction.

\paragraph{Finite information and infinite-dimensional recovery.}
Information-based complexity studies the residual radius left by finitely many information functionals and relates optimal recovery to width-type quantities \cite{traub1988,hinrichs2023}. Structured infinite-dimensional recovery becomes possible when additional compression assumptions such as sparsity or multilevel structure reduce the effective information radius \cite{adcock2021}. Theorem~\ref{thm:main-workload-tail-radius} specializes this viewpoint to capability probes: bounded-loss decision objectives are the information functionals, and the workload $q$ induces an explicit average-TV radius. The countable truncation result then separates residual workload coverage $T_N(q)$ from finite-head shape recovery.

\paragraph{Metareasoning and adaptive computation.}
Rational metareasoning and expected-value-of-control models allocate effort among available computations \cite{shenhav2013,lieder2017}; adaptive computation and LLM test-time scaling make the amount of processing state- or prompt-dependent \cite{graves2016,snell2025,desabbata2024,adaptthink2025,refrain2026}. Endpoint-defined successor-class control adds capability-changing transitions to that allocation problem: evidence acquisition and executable expansion can change the decision class inherited by subsequent reasoning, while Theorem~\ref{thm:main-budget-reversal} prices the finite capability carried through those transitions. The Bellman state therefore couples current realization value with successor-class option value.

\paragraph{Retrieval, tools, and routing.}
RAG, ReAct, Toolformer, adaptive retrieval, tool selection, and model routing supply concrete mechanisms for changing evidence access, executable support, and realization paths \cite{lewis2020,yao2023,schick2023,flare2023,selfrag2024,adaptiverag2024,tecton2025,meco2025,routellm2025,bestroute2025}. Endpoint semantics compares these heterogeneous mechanisms by the successor decision class and finite envelope they create. This gives one control coordinate across mechanism labels and directly links system design to retention and restoration geometry.

\subsection{Contribution map}
\begin{table}[h]
\centering
\small
\setlength{\tabcolsep}{5pt}
\renewcommand{\arraystretch}{1.15}
\begin{tabularx}{\textwidth}{@{}p{3.2cm}p{3.1cm}X@{}}
\toprule
Item & Role in the paper & Contribution \\
\midrule
$\mathcal D_{\mathcal J}$ with $\mathcal J=(\phi,H)$ and $\widehat\F_s(\mathcal J,M)$ & Core capability objects & Couple task distinctions with executable semantics and separate terminal structural possibility from resource-indexed deployed capability. \\
Theorem~\ref{thm:main-budget-reversal} & Core static characterization & Identifies the finite risk envelope as the complete representation of universal same-budget decision performance; terminal/finite disagreement yields reversal and restoration geometry. \\
Endpoint-defined successor-class control & Core dynamic formulation & Classifies heterogeneous interventions by the capability class realized at the endpoint and makes that successor class inheritable by later control. \\
Corollaries~\ref{cor:main-behavioral-identification}--\ref{cor:main-countable-truncation} and Theorem~\ref{thm:main-workload-tail-radius} & Core measurement results & Give finite-slice Hausdorff/statistical recovery, exact workload-tail information radius, and constructive head--tail recovery on countable workloads. \\
Restoration budgets, saturation dominance, switching/hysteresis, recursive diagnostic limits & Theory-generated consequences & Give upgrade overheads, late-stage successor comparisons, control persistence, and information for intervention direction. \\
Full hierarchy, deficiencies, resource transforms, measurable dynamics & Extended formal system & Extend the focused structural/finite object to asymptotic realization, reachability, selection, approximation, and cross-architecture resource geometry. \\
DeepSeek matched interventions & Controlled coordinate identification & Calibrate invariance, reversal, endpoint, retention, and diagnostic signatures under owned interventions. \\
Established LLM mechanisms and phenomena & Framework applications & Use the capability coordinates to diagnose established results and generate retention, restoration, identification, and control consequences. \\
\bottomrule
\end{tabularx}
\end{table}

\paragraph{Core contribution chain.}
The static contribution begins with the resource-indexed capability object and culminates in Theorem~\ref{thm:main-budget-reversal}, which identifies its universal decision meaning exactly. Corollary~\ref{cor:main-behavioral-identification} adds finite-slice Hausdorff recovery, statistical complexity, and held-out risk transfer; Theorem~\ref{thm:main-workload-tail-radius} extends measurement to open-ended workloads through an exact finite-information radius; Corollary~\ref{cor:main-countable-truncation} recovers a constructive positive regime by separating workload-tail coverage from finite-head shape recovery; restoration geometry measures the resource cost of compatibility. The class-valued successor formulation is the dynamic center because an action can change the decision class inherited by subsequent reasoning, while Proposition~\ref{prop:main-eventual-boundary} derives late-stage dominance over fixed realization increments from saturation plus persistent successor value. The predictive module develops the closest refinements; the extended formal module expands their mathematical closure; Module V gives detailed theoretical positioning and applies the coordinates to established LLM results.
\end{textAtEnd}

\begin{textAtEnd}[category=interpretation]
\subsection{Mechanism-level applications}
\label{app:comparison}
The cases below apply the same coordinates to established mechanisms. Their role is interpretive and predictive: cited work supplies the mechanism or empirical phenomenon; the capability state supplies a common location; retention, restoration, or endpoint control supplies the additional consequence.

\paragraph{Chain of thought and Tree of Thoughts.}
Chain of thought increases serial realization and Tree of Thoughts expands and prunes intermediate search \cite{wei2022cot,yao2023tot}. Under fixed task evidence and executable support they primarily move finite reachability; an endpoint that acquires new evidence or executable semantics moves structural capability. The framework therefore predicts different returns to more search across those endpoint classes and directs boundary-changing actions toward the cases where the structural floor is active.

\paragraph{Prompt, context, retrieval, and pruning.}
Prompt compression, retrieval, and branch pruning change active context or search \cite{jiang2023,lewis2020,yao2023tot}. Their capability location follows the realized endpoint: access to a new task-relevant distinction moves $\phi$, persistent realization redesign moves $M$, and selection over already accessible content moves reachability. The generated consequence is a matched access/search comparison under a common resource contract.

\paragraph{Retrieval and tool-using agents.}
RAG, ReAct, Toolformer, adaptive retrieval, and adaptive tool systems expose external information and executable actions \cite{lewis2020,yao2023,schick2023,flare2023,selfrag2024,adaptiverag2024,autosearch2026,tecton2025,meco2025}. Endpoint semantics places their heterogeneous calls on the same successor-class axis. Retention and restoration then measure whether an expanded agent carries predecessor finite capability through the new routing burden.

\paragraph{Transformer expressivity.}
Transformer expressivity results characterize representable functions or languages under architectural and computational assumptions \cite{yun2020,strobl2024,merrill2024}. The capability application adds deployment slices and transitions: structural expressivity determines the terminal class, $M$ and $s$ determine finite realization, and control determines when a successor class is worth constructing. This yields separate empirical targets for representability, deployability, and control value.

\begin{takeaway}
\centering\bfseries
Established LLM results enter the framework as applications to a shared capability state. Their added value here comes from cross-mechanism diagnosis, common-budget retention and restoration tests, and successor-class control predictions.
\end{takeaway}
\end{textAtEnd}

\begin{textAtEnd}[category=experiments]
\section{Completed experimental extensions}
\label{app:extensions}

The extensions contribute 2,880 records to the controlled study. The 64-pair ceiling sweep crosses disabled, low, and high reasoning with 512, 1,024, 2,048, 4,096, and 8,192 output tokens. Low-effort completion accuracy is $0$, $.547$, $.969$, $.984$, and $.984$; high-effort accuracy is $0$, $.750$, $.813$, $.969$, and $1$. The paired high-minus-low difference therefore changes from $+.203$ at 1,024 (23 high-only versus 10 low-only, exact McNemar $p=.0351$) to $-.156$ at 2,048 (1 high-only versus 11 low-only, $p=.00635$), approaches zero at 4,096, and becomes $+.016$ at 8,192. High-effort conditional accuracy is $1$ throughout, and its completion failures through 4,096 are exactly length terminations. The finite ordering changes with the common ceiling while completed-answer quality remains fixed.

The explicitly retained policy menu takes the taskwise better low/high outcome at each common ceiling. Its accuracy is $0$, $.906$, $.984$, $1$, and $1$, compared with high effort's $0$, $.750$, $.813$, $.969$, and $1$. At 2,048, retaining the earlier finite policy raises the empirical envelope by $.172$; at 4,096 the retained menu covers all 64 tasks; at 8,192 high effort itself covers all 64. The sequence realizes the retention prediction of Theorem~\ref{thm:main-budget-reversal}: nonretention permits finite reversal, explicit retention restores the old option, and ceiling expansion removes the observed reversal.

The $2\times2\times3$ factorial crosses hidden versus revealed input, missing versus enabled executable tables, and disabled versus low/high reasoning. With support enabled, reveal changes identification from the hidden $.5$ ceiling to $.500$, $1.000$, and $.969$ for disabled, low, and high modes. With revealed evidence but missing tables, the corresponding scores are $.563$, $.438$, and $.094$; length termination rates are $0$, $.125$, and $.781$. Reasoning becomes effective when both the decisive distinction and executable mapping are present, exposing an information--support--realization interaction beyond a scalar scale summary.

The endpoint experiment crosses two implementation labels, RETRIEVE and TOOL, with two realized successor classes. Search endpoints return a digest computed from the visible record and preserve the hidden-state collision; access endpoints return the decisive external bit. Each access cell scores $32/32$ for every label and effort mode. Search identification scores are $.5$ for both labels with thinking disabled; reasoning-enabled modes frequently emit no admissible decision under the unresolved collision, and no search cell exceeds $.5$. The within-endpoint label gap is zero in every access cell and at most $.031$ across search cells. By contrast, the access-minus-search effect is at least $.5$ in every label--effort cell. The crossed design makes endpoint structural class the stable predictor while the mechanism name varies orthogonally.

The meta-interface experiment reuses the binary collision at the controller layer. A coarse card reports identical confidence, candidate count, and feasible families for a computation bottleneck and a missing-evidence bottleneck, whose optimal controls are THINK and REVEAL. Its scored direction accuracy is $.500$, $.188$, and $.094$ for disabled, low, and high effort; the lower reasoning-enabled scores arise from abstention and remain inside the collision ceiling. A $75\%$-reliable probe raises the same modes to $.656$, $.719$, and $.656$; an exact probe raises them to $.844$, $.969$, and $1.000$. Thus every mode moves monotonically along the information curve from collision through partial separation to exact separation. For the symmetric probe channel, $\TV=2\alpha-1$ and the Bayes direction ceiling is $\alpha$, making reliability $\alpha$ the experimental coordinate of the recursive diagnostic prediction in Appendix~\ref{app:predictions}.

Candidate retention crosses neutral, misleading, and exact-verifier feedback over 64 four-round trajectories. For every round $t$, the measured identity residual
\[
R_t^{\rm deployed}-R_t^{\rm oracle}-G_{{\rm sel},t}
\]
is exactly zero. Under misleading feedback, the roundwise triples $(R^{\rm deployed},R^{\rm oracle},G_{\rm sel}^{\pi})$ are $(1,1,0)/64$, $(3,0,3)/64$, $(0,0,0)/64$, and $(2,0,2)/64$. Thus the first additional-reasoning step gives $\Delta R=2/64=(-1+3)/64$: candidate quality improves by $1/64$ while selection worsens by $3/64$. Final self-selection scores $62/64$, whereas the retained oracle, exact verifier, and best-of-$k$ majority score $64/64$. Neutral and verifier-grounded final selection also score $64/64$. The adversarial protected-marker control records $0/16$ violations in every reasoning mode; grammar projection and post-filtering also record zero.

The multi-agent protocol obtains $16/16$ for a low-effort single agent, retained fallback, verified aggregation, and forced consensus because both reasoning-enabled proposals are correct, supplying a retained-policy consistency check. The routing study supplies the directional contrast: across THINK versus RETRIEVE, TOOL, or CLARIFY, the analytic optimum changes family between $\rho=.5$ and $.7$. All modes select the structural family on all 16 tasks at $\rho=.9$; at $\rho=.7$ the rates are $1$, $1$, and $1$ for disabled, low, and high, while at $\rho=.5$ they are $0$, $.0625$, and $0$. The empirical switch therefore tracks the increasing-differences threshold.

All generators use deterministic seeds, append-only records, exact-match schemas, finish-reason audits, and the last record per stable suite--task--stage--mode identifier. Budget and retention use 64 tasks; the remaining controlled families use 16. Wilson 95\% intervals accompany binomial cells: $52/64$ has interval $[.700,.889]$, $62/64$ has $[.893,.991]$, and $64/64$ has $[.943,1]$. The generated summary additionally reports exact paired McNemar tests, retained-envelope accuracy, conditional accuracy, token use, the complete roundwise risk identity, endpoint main effects and label gaps, the meta-interface information curve, factorial contrasts, topology outcomes, violation rates, and the routing curve.

The theoretical program derives primitive conditions that propagate increasing differences through the Bellman operator, permits changing feasible action correspondences, and models hysteresis with explicit switching costs. Dynamic architecture simulation, stochastic observation experiments, endogenous resource prices, and minimal common envelopes extend the same order, approximation, and resource geometries.
\end{textAtEnd}

\begin{textAtEnd}[category=finalappendix]
\clearpage
\onecolumn
\appendix
\restoreappendixenvironments

\section*{Module I. Core proofs and formal results}
\phantomsection\label{app:proofs}
\addcontentsline{toc}{section}{Module I. Core proofs and formal results}
This module gives the complete main-text proofs and the supporting derivations closest to the core. Formal extensions and validation follow; theoretical positioning and applications are collected in the final module.

\section{Proofs of main-text results}
\printProofs

\subsection{How the core results propagate}

The focused main results generate a larger mathematical closure. Structural invariance extends to approximate experiments; finite-envelope retention induces canonical reversal witnesses and resource comparisons; endpoint-defined Bellman control yields switching and diagnostic consequences. These extensions support the predictive module, and the application module places established cases in the same capability coordinates.

\subsection{Universal and approximate envelope consequences}

\paragraph{Formal extension: structural risk-envelope order.}
Let $\mathcal X$ and $\A$ be finite. For joint structures $\mathcal J_1,\mathcal J_2$ on common task and action spaces, the following are equivalent:
\begin{enumerate}
  \item $\widehat{\mathcal D}_{\mathcal J_1}\subseteq\widehat{\mathcal D}_{\mathcal J_2}$;
  \item for every $X$-marginal $Q_X$ with full support, every finite target space, every conditional law $Q(Y\mid X)$, and every bounded loss,
  \begin{equation}
  \R_{\mathcal J_2}^*(Q)\le\R_{\mathcal J_1}^*(Q).
  \label{eq:app-universal-risk-order}
  \end{equation}
\end{enumerate}
Literal executable inclusion is a stronger sufficient condition. Thus universal bounded-risk comparison identifies the closed convex decision envelope, while literal inclusion identifies exact simulation.
\emph{Derivation.} Expected loss is a continuous linear functional of a finite decision kernel, so optimizing over $\mathcal D_{\mathcal J}$ or its closed convex hull gives the same value. Envelope inclusion therefore implies Eq.~\eqref{eq:app-universal-risk-order}.

For necessity, suppose $C_1:=\widehat{\mathcal D}_{\mathcal J_1}\nsubseteq C_2:=\widehat{\mathcal D}_{\mathcal J_2}$. Choose $d_1\in C_1\setminus C_2$. Compact convex separation gives coefficients $w_{x,a}$ such that
\[
\langle w,d_1\rangle<\inf_{d_2\in C_2}\langle w,d_2\rangle.
\]
Fix a prescribed $q_X$ with full support on $\mathcal X$. Statewise additive shifts leave kernel comparisons unchanged because every row sums to one; a common positive rescaling then makes $w_{x,a}/q_X(x)$ a bounded nonnegative loss. With deterministic target $Y=X$, its expected loss is an affine transform of $\langle w,d\rangle$. Hence a bounded decision problem under that full-support $X$-marginal has strictly smaller optimal risk on $C_1$ than on $C_2$, contradicting universal dominance. Therefore universal dominance implies envelope inclusion.

For a fixed marginal $Q_X$, define the directed structural deficiency
\begin{equation}
\delta_{Q_X}(\mathcal J_1\!\to\!\mathcal J_2)
:=
\sup_{d_1\in\widehat{\mathcal D}_{\mathcal J_1}}
\inf_{d_2\in\widehat{\mathcal D}_{\mathcal J_2}}
\E_{Q_X}\TV\!\left(d_1(\cdot\mid X),d_2(\cdot\mid X)\right).
\label{eq:app-directed-deficiency}
\end{equation}

\paragraph{Formal extension: approximate structural inheritance.}
If $0\le\ell\le L$, then
\begin{equation}
\R_{\mathcal J_2}^*(Q)
\le
\R_{\mathcal J_1}^*(Q)
+L\,\delta_{Q_X}(\mathcal J_1\!\to\!\mathcal J_2).
\label{eq:app-deficiency-risk}
\end{equation}
For two equiprobable worlds with interface laws $P_0^\phi,P_1^\phi$, the optimal binary distinction error is
\begin{equation}
\R_{\rm test}^*=\frac{1-\TV(P_0^\phi,P_1^\phi)}{2}.
\label{eq:app-approx-collision}
\end{equation}
Thus exact fibers, approximate fibers, and approximate cross-architecture simulation lie on one quantitative scale.
\emph{Derivation.} Let $h_x(a)=\E[\ell(a,Y)\mid X=x]$. Since $0\le h_x\le L$, for any two action kernels,
\[
\left|\int h_x\,d_1(da\mid x)-\int h_x\,d_2(da\mid x)\right|
\le L\TV(d_1(\cdot\mid x),d_2(\cdot\mid x)).
\]
Integrate over $Q_X$, choose an arbitrarily near-optimal $d_1$ for $\mathcal J_1$, then an arbitrarily near-best simulator $d_2$ for that kernel, and let both approximation errors vanish. This proves Eq.~\eqref{eq:app-deficiency-risk}.

For a binary decision region $B$, equal-prior success is $\tfrac12[P_0^\phi(B)+P_1^\phi(B^c)]$. Maximizing over $B$ gives $\tfrac12(1+\TV(P_0^\phi,P_1^\phi))$ by the variational definition of total variation. Subtracting from one proves Eq.~\eqref{eq:app-approx-collision}.

\subsection{Infinite-space recovery and effective geometry}
\label{app:infinite-recovery}

Theorem~\ref{thm:main-workload-tail-radius} is the workload-specific radius-of-information counterpart of the finite support-function geometry. Its exact binary-singleton calculation separates state-space coverage from convex-shape recovery: finite slices carry a directional approximation problem, while open-ended workloads first require enough information to cover the probability mass that matters.

For a constructive countable extension, let
\[
q=\sum_{i\ge1}p_i\delta_{x_i},
\qquad p_1\ge p_2\ge\cdots>0,
\qquad |\A|=a<\infty,
\]
and define the workload-average kernel metric
\[
d_q(d,e)
:=
\sum_{i\ge1}p_i\,
\TV\!\left(d(\cdot\mid x_i),e(\cdot\mid x_i)\right).
\]
Let $d_{H,q}$ be the induced Hausdorff distance, let
$T_N(q):=\sum_{i>N}p_i$, and let $\pi_N C$ denote projection of a closed envelope $C$ onto the first $N$ task states.

Corollary~\ref{cor:main-countable-truncation} gives the constructive head--tail bound in the main text. Its proof is collected with the other main-text results above. The bound makes the infinite-space extension operational: $T_N(q)$ sets the unresolved workload mass, while the retained-head dimension $N(a-1)$ and normalized smallest mass $p_N/(1-T_N(q))$ determine the directional and statistical cost of applying finite-slice tomography to the head. This is the positive counterpart to Theorem~\ref{thm:main-workload-tail-radius}: countable tail decay supplies an effective finite evaluation geometry, whereas non-atomic mass leaves a non-vanishing finite-information radius on the unrestricted singleton subclass.

\subsection{Canonical witnesses for finite-envelope nonretention}

the workload-stable reversal prediction in Appendix~\ref{app:predictions} is the canonical workload form of Theorem~\ref{thm:main-budget-reversal}; its proof is printed above with the other main-text results. The separating construction fixes an arbitrary full-support $X$-marginal, uses statewise shifts and positive rescaling to obtain a bounded nonnegative loss, and thereby isolates finite realization as the source of reversal whenever terminal dominance also holds.

\subsection{Formal identity for the overthinking reinterpretation}

\paragraph{Derived attribution identity for retained candidates.}
Let $\mathcal C_0\subseteq\cdots\subseteq\mathcal C_T$ be externally retained candidate families under one task law and loss. Define
\[
R_t=\inf_{d\in\mathcal C_t}\mathcal L(d),
\qquad
G_t=\mathcal L(d_t)-R_t.
\]
Then $R_{t+1}\le R_t$ and, for every $0\le r<t\le T$,
\begin{equation}
\mathcal L(d_t)-\mathcal L(d_r)
=
(R_t-R_r)+(G_t-G_r).
\label{eq:app-sequence-attribution}
\end{equation}
In particular, a positive deployed-loss increment at step $t$ occurs exactly when
\begin{equation}
G_{t+1}-G_t>R_t-R_{t+1}\ge0.
\label{eq:app-selector-dominates}
\end{equation}
An observed increase in $R_t$ certifies failure of candidate retention; selector degradation alone cannot raise the oracle infimum.
\emph{Derivation.} Nested candidate families make the oracle infima nonincreasing. The identity $\mathcal L(d_t)=R_t+G_t$ holds at every step; subtracting it at times $r$ and $t$ gives Eq.~\eqref{eq:app-sequence-attribution}. Taking adjacent times and using $R_{t+1}-R_t\le0$ gives Eq.~\eqref{eq:app-selector-dominates}. If $R_{t+1}>R_t$ under unchanged evaluation, the new candidate family cannot contain the old one, because an infimum over a superset cannot increase.

The four-round external-retention experiment is an exact finite-sample instantiation: every measured residual $\mathcal L(d_t)-R_t-G_t$ is zero, and the first misleading-feedback transition gives $2/64=-1/64+3/64$.

\subsection{Formal derivation behind saturation and switching}

\paragraph{Saturation law.}
For fixed $(\mathcal J,M,Q,\ell)$ on a scalar resource axis, nestedness $\F_s\subseteq\F_t$ for $s\le t$ makes $R(s)=\R^*_{s,\mathcal J,M}(Q,\ell)$ nonincreasing. Bounded loss supplies a lower bound, so $R(s)$ converges to $R_\infty=\inf_tR(t)$. For every fixed $h>0$, both $R(s)$ and $R(s+h)$ converge to $R_\infty$, giving $R(s)-R(s+h)\to0$. This is the geometric source of Proposition~\ref{prop:main-eventual-boundary}; its proof is printed with the other main-text results above.

\paragraph{Uniqueness refinement.}
Let $G_O(\rho)$ and $G_C(\rho)$ be complete net gains from inherited-class realization and structural construction along a scalar saturation path, and write $A_G=G_O-G_C$. If $A_G$ is continuous, strictly decreasing, and changes sign, the preferred family has one threshold. A sufficient differentiable condition is $G_O'(\rho)<0$ and $G_C'(\rho)\ge0$. Increasing differences of the optimized Bellman family costs gives the same single-crossing property while allowing the minimizing action to change within each family. These conditions refine eventual dominance into a unique switching topology; asymmetric switching costs widen the crossing into a two-threshold hysteresis band.

\subsection{Formal derivation behind the diagnostic prediction}

The predictive module records the exact two-state diagnostic formula and its value-of-information interpretation.

\printProofs[predictions]

\section*{Module III. Extended formal system}
\addcontentsline{toc}{section}{Module III. Extended formal system}
The full Symbolization--Substructure construction, information--execution model, dynamic program, resource transformations, and technical capability geometry are collected here after the core results and their closest predictions.

\printProofs[foundation]

\section{Broader capability geometry and application space}
\label{app:broader-scope}

The main text isolates the structural/finite distinction needed for the two static limits and the resulting dynamic direction choice. The same inheritance principle resolves into a finer capability hierarchy when asymptotic realization, current reachability, and deployed selection are tracked explicitly:
\begin{equation}
\F_{\rm stop}(S)
\subseteq
\F_s(\mathcal J,M)
\subseteq
\F_\infty(\mathcal J,M)
\subseteq
\mathcal D_{\mathcal J}.
\label{eq:app-full-envelope-chain}
\end{equation}
For a fixed task law and loss, let $\R_{\rm reach}$, $\R_{\rm budget}$, $\R_{\rm asym}$, and $\R_{\rm floor}$ be the corresponding risk infima, let $d_S^\pi$ be the deployed decision, and let $\R_{\rm ref}\le\R_{\rm floor}$. The full hierarchy gives the intervention-owned decomposition
\begin{align}
\mathcal L(d_S^\pi)-\R_{\rm ref}
={}&\underbrace{\R_{\rm floor}-\R_{\rm ref}}_{C_{\rm struct}}
+\underbrace{\R_{\rm asym}-\R_{\rm floor}}_{C_{\rm asym}}
\nonumber\\
&+\underbrace{\R_{\rm budget}-\R_{\rm asym}}_{C_{\rm fin}}
+\underbrace{\R_{\rm reach}-\R_{\rm budget}}_{C_{\rm reach}}
\nonumber\\
&+\underbrace{\mathcal L(d_S^\pi)-\R_{\rm reach}}_{G_{\rm sel}^{\pi}}.
\label{eq:app-broader-failure-anatomy}
\end{align}
The coordinates separate structural impossibility, asymptotic realization limits, finite authorization, pathwise search, and selection. Evidence or executable expansion moves $\mathcal J$; persistent inference redesign moves $M$; authorization moves $s$; search moves the current path; diagnostic access and reranking move the selector. This finer geometry is useful for attribution once the focused structural/finite distinction has identified the relevant direction.

\begin{figure}[t]
\centering
\includegraphics[width=0.96\textwidth]{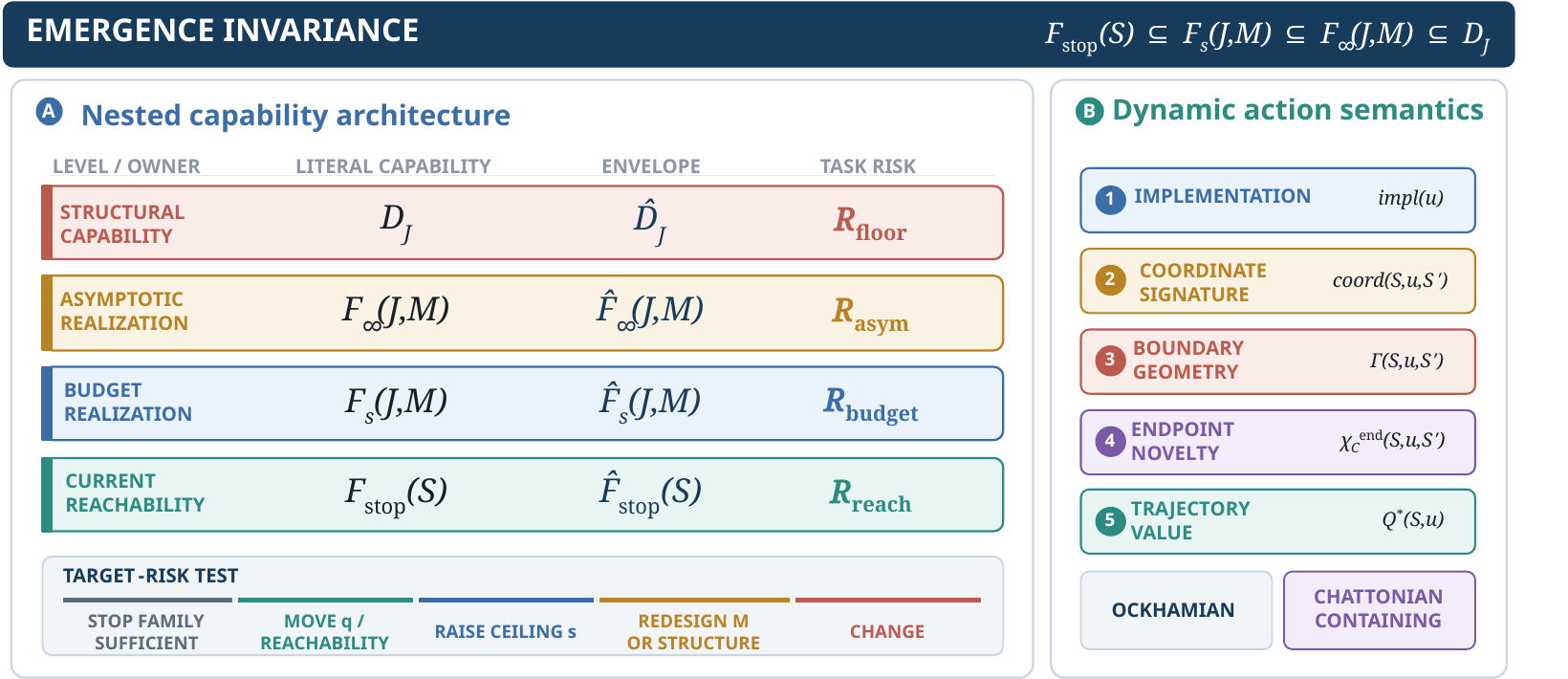}
\caption{\textbf{Full capability geometry.} The focused main text uses the structural and finite-budget levels. The broader theory resolves inherited capability into structural, asymptotic, budget, and current-reachability envelopes, and then attaches endpoint transition semantics and recursive control.}
\label{fig:appendix-full-geometry}
\end{figure}

The remaining appendix develops the theory's application space. Static sections derive information-comparison, executable-support, deficiency, and resource-transformation results. Dynamic sections treat changing feasible actions, contraction, hysteresis, recursive meta-interface collisions, and diagnostic value. Further sections project the same retention principle onto candidate-level overthinking, constrained safety, and multi-agent protocols, and compare language-mediated recursion with broader human substrate control. The empirical appendices give the complete API reconstruction, ceiling sweeps, endpoint crosses, diagnostic probes, retained-candidate checks, and routing curves.

\section{Additional formal details}

\subsection{Measurable dynamic setup}

Let $(\Sspace,\Sigma_{\Sspace})$ be a measurable state space and $(\overline{\U},\Sigma_{\U})$ a common measurable meta-action space. The feasible-action correspondence $S\mapsto\U(S)$ has a measurable graph and nonempty values, including stop. Each state contains an accessible-information variable $\iota_S$ with $\mathcal G_S=\sigma(\iota_S)$ and carries a probability kernel $S\mapsto P_S$ on $\mathcal X\times\mathcal Y$ that is a version of $\mathcal L((X,Y)\mid\mathcal G_S)$. Analyst-only latent variables can enter the transition kernel but do not condition terminal risk until an action reveals them to the deployed system. A transition that forgets evidence forms the next kernel from the retained accessible information; any evidential summary still usable by the terminal decision remains part of $\iota_S$. Separately, let $\mu:\Sspace\to\mathcal Z_\mu$ be a measurable meta-interface. An admissible deployed controller is a stochastic kernel $\pi$ from $\mathcal Z_\mu$ to $\overline{\U}$ satisfying $\pi(\U(S)\mid\mu(S))=1$; hence its state dependence factors through $\mu(S)$.

Let $S\mapsto\F_{\rm stop}(S)$ be a nonempty state-dependent decision-kernel correspondence contained in $\F_{s(S)}(\mathcal J(S),M(S))$, with the measurability conditions required for the risk infimum and measurable terminal selection. Let $k:\Sspace\times\overline{\U}\to[0,\infty]$, $c:\Sspace\times\overline{\U}\to[0,\infty)$, and $L_{\rm term}:\Sspace\to[0,\infty]$ be measurable, and let the controlled state transition be a measurable stochastic kernel. For feasible non-stop actions assume $c(S,u)\le b$, and assume stopping is feasible with finite terminal loss. Assume the state is Markov sufficient for future controlled evolution, policy classes are closed under one-step concatenation, and the dynamic programming principle applies. Assume $S\mapsto\R_{\rm floor}(S)$, $\R_{\rm asym}(S)$, $\R_{\rm budget}(S)$, $\R_{\rm reach}(S)$ and the gaps $C_{\rm reach}(S)$, $C_{\rm fin}(S)$, $C_{\rm asym}(S)$ are measurable and integrable whenever their conditional expectations are used. Whenever endpoint structural novelty is used, assume that
\begin{equation}
\mathcal N_C
:=
\left\{
(S,u,S'):
\mathcal D_{\mathcal J(S')}\not\subseteq\mathcal D_{\mathcal J(S)}
\right\}
\label{eq:noveltymeasurable}
\end{equation}
is measurable in the realized transition variables. Then $\chi_C^{\rm end}$ is measurable and
$ p_C(S,u)=\E[\chi_C^{\rm end}(S,u,S')\mid S,u] $ is a measurable state--action function. Whenever Bellman infima or measurable selections are taken over $\U_O(S)$, $\U_{C+}(S)$, $\U_C^{\rm pure}(S)$, or $\U_M(S)$, assume that the corresponding state-relative feasible-family correspondence has a measurable graph and nonempty values on the states where that envelope or selection is invoked. Whenever the operational loss is used, the policy-induced terminal kernel selection $S\mapsto d_S^\pi$ is measurable, $d_S^\pi\in\F_{\rm stop}(S)$, and the selection gap $G_{\rm sel}^{\pi}(S)$ is measurable and integrable under $P_S$.

\subsection{Budget-forced finite stopping}

\begin{theorem}[Uniform stopping under finite budget]
Suppose $0\le b_0<\infty$, budget is updated by $b_{t+1}=b_t-c(S_t,u_t)$ and is never replenished, every feasible non-stop action consumes at least $\kappa>0$, and no non-stop action is feasible when $b_t<\kappa$. Then every policy satisfies
\begin{equation}
\tau\le\left\lfloor\frac{b_0}{\kappa}\right\rfloor
\end{equation}
almost surely.
\end{theorem}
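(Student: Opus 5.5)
The plan is a potential-function (counting) argument on the never-replenished budget. Fix an arbitrary admissible policy and an arbitrary realized trajectory; everything is pathwise, so the ``almost surely'' qualifier will follow automatically once the bound holds on every trajectory consistent with the feasibility constraints.

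The first step is to show by induction on $t$ that, as long as the first $t$ actions are non-stop, $b_t\le b_0-t\kappa$. The base case $t=0$ is trivial. For the induction step, if $u_t$ is a feasible non-stop action, then $c(S_t,u_t)\ge\kappa$, and the update $b_{t+1}=b_t-c(S_t,u_t)$ gives $b_{t+1}\le b_t-\kappa\le b_0-(t+1)\kappa$. I will also record that $b_t\ge0$ along feasible paths, since the measurable setup imposes $c(S,u)\le b$ for feasible non-stop actions; this keeps the budget nonnegative but is not needed for the upper bound.

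The second step combines this with the infeasibility hypothesis. Set $n=\lfloor b_0/\kappa\rfloor$. Suppose the trajectory has not stopped before time $n+1$, so that $u_0,\dots,u_n$ are all non-stop. The induction then gives $b_{n+1}\le b_0-(n+1)\kappa<0\le\kappa$. This bound already holds at time $n$, where $b_n\le b_0-n\kappa<\kappa$ by the definition of the floor.

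Because no non-stop action is feasible when $b_t<\kappa$, the action $u_n$ cannot be non-stop. Since $\U(S_n)$ is nonempty and always contains stop, the policy, which is supported on the feasible menu, must choose $u^{\rm stop}$ at some time $t\le n$. Hence $\tau\le n$ on every trajectory, and therefore almost surely and for every policy.

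The only delicate point is indexing: $\tau$ is the first time the stop action is taken, so I have to check that reaching $b_n<\kappa$ forces stopping at time $n$ itself, not at $n+1$. The inequality $b_0-n\kappa<\kappa$, which comes directly from the floor, settles this.
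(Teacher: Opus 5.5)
Your proposal is correct and matches the paper's proof: both count at least $\kappa$ of consumption per non-stop step to get $b_m\le b_0-m\kappa<\kappa$ at $m=\lfloor b_0/\kappa\rfloor$, then use the infeasibility of non-stop actions below $\kappa$ to force stopping by time $m$. The detour through time $n+1$ in your second step is unnecessary, since, as you note yourself, the contradiction already arises at time $n$.
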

\begin{proof}
Let $m=\lfloor b_0/\kappa\rfloor$. If the process has not stopped before time $m$, the first $m$ actions are non-stop and consume at least $m\kappa$. Hence $b_m\le b_0-m\kappa<\kappa$. Only stop is then feasible, so $\tau=m$.
\end{proof}

\subsection{Complete dynamic policy-class nesting}

\begin{proposition}[Policy-class monotonicity]
Fix the same initial state, transition kernel, costs, terminal loss, and discount factor. If complete dynamic policy classes satisfy $\mathfrak P_t\subseteq\mathfrak P_{t+1}$, then
\begin{equation}
\inf_{\pi\in\mathfrak P_{t+1}}J^\pi(S_0)\le
\inf_{\pi\in\mathfrak P_t}J^\pi(S_0).
\end{equation}
\end{proposition}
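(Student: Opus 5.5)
The plan is to reduce the statement to the elementary fact that an infimum over a larger set is no larger. The only substance is checking that $J^\pi(S_0)$ depends on $\pi$ alone and not on which class $\pi$ is listed in.

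\textbf{Step 1: the trajectory cost is class-independent.} By Eq.~\eqref{eq:dynamiccost}, $J^\pi(S_0)$ is built from the initial state, the transition kernel, the running cost $k$, the terminal loss $L_{\rm term}$, the discount factor $\beta$, and the stopping time $\tau^\pi$ induced by $\pi$. The hypothesis fixes all of these except $\pi$ itself. Hence $J^\pi(S_0)$ is a single extended-real-valued functional of $\pi$, taking values in $[0,\infty]$, and it is well defined for any $\pi$ in either class.

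\textbf{Step 2: compare the infima.} Every $\pi\in\mathfrak P_t$ also lies in $\mathfrak P_{t+1}$, so
\[
\inf_{\pi'\in\mathfrak P_{t+1}}J^{\pi'}(S_0)\le J^\pi(S_0).
\]
Taking the infimum over $\pi\in\mathfrak P_t$ on the right-hand side gives the claim. This uses the convention $\inf\varnothing=+\infty$ when $\mathfrak P_t$ is empty, and it remains valid when values equal $+\infty$.

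\textbf{Expected obstacle: admissibility.} The only possible difficulty is whether the policies might be filtered through the admissible set $\mathfrak P_{\rm adm}(S_0)$ before the infimum is taken. That filtering respects inclusion, because $\mathfrak P_t\cap\mathfrak P_{\rm adm}\subseteq\mathfrak P_{t+1}\cap\mathfrak P_{\rm adm}$. Moreover, inadmissible policies have $J^\pi=+\infty$ and so never lower an infimum. Either reading of the infimum therefore gives the same inequality.
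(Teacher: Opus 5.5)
Your proposal is correct and matches the paper's proof, which simply observes that the same cost functional $J^\pi(S_0)$ is minimized over a superset. Your remarks on the class-independence of $J^\pi$ and on admissibility filtering are sound elaborations of that one-line argument.
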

\begin{proof}
The same cost functional is minimized over a superset.
\end{proof}

Interface refinement yields policy-class nesting when the new controller can ignore new information, reproduce old computation, preserve old executable semantics, and select the old action sequence. A non-comparable replacement has no such monotonicity guarantee unless it is embedded in a common envelope that can reproduce the old policy class.

\printProofs[experiments]

\section*{Module V. Related work and theoretical positioning}
\addcontentsline{toc}{section}{Module V. Related work and theoretical positioning}
This final module places the completed theory and evidence beside the closest mathematical and LLM-system lines, then applies the capability coordinates to established results.

\printProofs[originality]
\printProofs[interpretation]
\end{textAtEnd}

\printProofs[finalappendix]


\clearpage
\begin{thebibliography}{99}
\small
\bibitem{ockham} Spade et al. \emph{William of Ockham}.

\bibitem{chatton} Keele et al. \emph{Walter Chatton}.

\bibitem{fedorenko2024} Fedorenko et al. \emph{Language is Primarily a Tool for Communication Rather Than Thought}.

\bibitem{mahowald2024} Mahowald et al. \emph{Dissociating Language and Thought in Large Language Models}.

\bibitem{spelke2007} Spelke et al. \emph{Core Knowledge}.

\bibitem{cisek2007} Cisek. \emph{Cortical Mechanisms of Action Selection: The Affordance Competition Hypothesis}.

\bibitem{kirsh1994} Kirsh et al. \emph{On Distinguishing Epistemic from Pragmatic Action}.

\bibitem{shenhav2013} Shenhav et al. \emph{The Expected Value of Control: An Integrative Theory of Anterior Cingulate Cortex Function}.

\bibitem{lieder2017} Lieder et al. \emph{Strategy Selection as Rational Metareasoning}.

\bibitem{fleming2012} Fleming et al. \emph{The Neural Basis of Metacognitive Ability}.

\bibitem{maniscalco2012} Maniscalco et al. \emph{A Signal Detection Theoretic Approach for Estimating Metacognitive Sensitivity from Confidence Ratings}.

\bibitem{vaswani2017} Vaswani et al. \emph{Attention Is All You Need}.

\bibitem{yun2020} Yun et al. \emph{Are Transformers Universal Approximators of Sequence-to-Sequence Functions?}.

\bibitem{strobl2024} Strobl et al. \emph{What Formal Languages Can Transformers Express? A Survey}.

\bibitem{wei2022cot} Wei et al. \emph{Chain-of-Thought Prompting Elicits Reasoning in Large Language Models}.

\bibitem{xie2022} Xie et al. \emph{An Explanation of In-Context Learning as Implicit Bayesian Inference}.

\bibitem{lewis2020} Lewis et al. \emph{Retrieval-Augmented Generation for Knowledge-Intensive NLP Tasks}.

\bibitem{yao2023} Yao et al. \emph{ReAct: Synergizing Reasoning and Acting in Language Models}.

\bibitem{yao2023tot} Yao et al. \emph{Tree of Thoughts: Deliberate Problem Solving with Large Language Models}.

\bibitem{schick2023} Schick et al. \emph{Toolformer: Language Models Can Teach Themselves to Use Tools}.

\bibitem{jiang2023} Jiang et al. \emph{LLMLingua: Compressing Prompts for Accelerated Inference of Large Language Models}.

\bibitem{zhang2024ssa} Zhang et al. \emph{Selective Attention: Enhancing Transformer through Principled Context Control}.

\bibitem{bansal2026} Bansal et al. \emph{Let's (not) just put things in Context: Test-time Training for Long-context LLMs}.

\bibitem{hoffmann2022} Hoffmann et al. \emph{An Empirical Analysis of Compute-Optimal Large Language Model Training}.

\bibitem{he2016} He et al. \emph{Deep Residual Learning for Image Recognition}.

\bibitem{graves2014} Graves et al. \emph{Neural Turing Machines}.

\bibitem{graves2016} Graves. \emph{Adaptive Computation Time for Recurrent Neural Networks}.

\bibitem{snell2025} Snell et al. \emph{Scaling LLM Test-Time Compute Optimally Can be More Effective than Scaling Parameters for Reasoning}.

\bibitem{flare2023} Jiang et al. \emph{Active Retrieval Augmented Generation}.

\bibitem{selfrag2024} Asai et al. \emph{Self-RAG: Learning to Retrieve, Generate, and Critique through Self-Reflection}.

\bibitem{adaptiverag2024} Jeong et al. \emph{Adaptive-RAG: Learning to Adapt Retrieval-Augmented Large Language Models through Question Complexity}.

\bibitem{desabbata2024} De Sabbata et al. \emph{Rational Metareasoning for Large Language Models}.

\bibitem{routellm2025} Ong et al. \emph{RouteLLM: Learning to Route LLMs from Preference Data}.

\bibitem{bestroute2025} Ding et al. \emph{BEST-Route: Adaptive LLM Routing with Test-Time Optimal Compute}.

\bibitem{tecton2025} Alazraki et al. \emph{Meta-Reasoning Improves Tool Use in Large Language Models}.

\bibitem{meco2025} Li et al. \emph{Adaptive Tool Use in Large Language Models with Meta-Cognition Trigger}.

\bibitem{adaptthink2025} Zhang et al. \emph{AdaptThink: Reasoning Models Can Learn When to Think}.

\bibitem{refrain2026} Sun et al. \emph{Stop When Enough: Adaptive Early-Stopping for Chain-of-Thought Reasoning}.

\bibitem{autosearch2026} Sun et al. \emph{AutoSearch: Adaptive Search Depth for Efficient Agentic RAG via Reinforcement Learning}.

\bibitem{wei2022} Wei et al. \emph{Emergent Abilities of Large Language Models}.

\bibitem{schaeffer2023} Schaeffer et al. \emph{Are Emergent Abilities of Large Language Models a Mirage?}.

\bibitem{hu2024} Hu et al. \emph{Predicting Emergent Abilities with Infinite Resolution Evaluation}.

\bibitem{du2024} Du et al. \emph{Understanding Emergent Abilities of Language Models from the Loss Perspective}.

\bibitem{merrill2024} Merrill et al. \emph{The Expressive Power of Transformers with Chain of Thought}.

\bibitem{mielke2022} Mielke et al. \emph{Reducing Conversational Agents' Overconfidence Through Linguistic Calibration}.

\bibitem{farquhar2024} Farquhar et al. \emph{Detecting Hallucinations in Large Language Models Using Semantic Entropy}.

\bibitem{chrisman1992} Chrisman. \emph{Reinforcement Learning with Perceptual Aliasing: The Perceptual Distinctions Approach}.

\bibitem{kaelbling1998} Kaelbling et al. \emph{Planning and Acting in Partially Observable Stochastic Domains}.

\bibitem{blackwell1951} Blackwell. \emph{Comparison of Experiments}.

\bibitem{blackwell1953} Blackwell. \emph{Equivalent Comparisons of Experiments}.

\bibitem{bender2020} Bender et al. \emph{Climbing towards NLU: On Meaning, Form, and Understanding in the Age of Data}.

\bibitem{lecam1964} Le Cam. \emph{Sufficiency and Approximate Sufficiency}.

\bibitem{chen2025overthink} Chen et al. \emph{Do NOT Think That Much for $2+3=?$ On the Overthinking of Long Reasoning Models}.

\bibitem{fan2025} Fan et al. \emph{Missing Premise Exacerbates Overthinking: Are Reasoning Models Losing Critical Thinking Skill?}.

\bibitem{zhang2026overthink} Zhang et al. \emph{Do LLMs Really Need 10+ Thoughts for ``Find the Time 1000 Days Later''? Towards Structural Understanding of LLM Overthinking}.

\bibitem{su2021} Su et al. \emph{RoFormer: Enhanced Transformer with Rotary Position Embedding}.

\bibitem{mckenzie2023inverse} McKenzie et al. \emph{Inverse Scaling: When Bigger Isn't Better}.

\bibitem{wei2022ushape} Wei et al. \emph{Inverse Scaling Can Become U-Shaped}.

\bibitem{huang2023selfcorrect} Huang et al. \emph{Large Language Models Cannot Self-Correct Reasoning Yet}.

\bibitem{tyen2024} Tyen et al. \emph{LLMs Cannot Find Reasoning Errors, but Can Correct Them Given the Error Location}.

\bibitem{zhang2025dark} Zhang et al. \emph{Understanding the Dark Side of LLMs' Intrinsic Self-Correction}.

\bibitem{scholak2021picard} Scholak et al. \emph{PICARD: Parsing Incrementally for Constrained Auto-Regressive Decoding from Language Models}.

\bibitem{geng2023grammar} Geng et al. \emph{Grammar-Constrained Decoding for Structured NLP Tasks without Finetuning}.

\bibitem{li2024agents} Li et al. \emph{More Agents Is All You Need}.

\bibitem{kaesberg2025} Kaesberg et al. \emph{Voting or Consensus? Decision-Making in Multi-Agent Debate}.

\bibitem{cui2026} Cui et al. \emph{Free-MAD: Consensus-Free Multi-Agent Debate}.

\bibitem{traub1988} Traub et al. \emph{Information-Based Complexity}.

\bibitem{hinrichs2023} Hinrichs et al. \emph{Random Sections of $\ell_p$-Ellipsoids, Optimal Recovery and Gelfand Numbers of Diagonal Operators}.

\bibitem{adcock2021} Adcock et al. \emph{Uniform Recovery in Infinite-Dimensional Compressed Sensing and Applications to Structured Binary Sampling}.

\bibitem{gardner2006} Gardner et al. \emph{Convergence of Algorithms for Reconstructing Convex Bodies and Directional Measures}.

\bibitem{guntuboyina2012} Guntuboyina. \emph{Optimal Rates of Convergence for Convex Set Estimation from Support Functions}.

\bibitem{hofstatter2025} Hofst\"atter et al. \emph{The Elicitation Game: Evaluating Capability Elicitation Techniques}.

\bibitem{brown2025task} Brown et al. \emph{Adaptively Profiling Models with Task Elicitation}.

\bibitem{deepseekapi2026} DeepSeek-AI. Thinking Mode; DeepSeek-V4-Flash-0731 Reasoning Effort. \emph{Official API documentation and model repository}, accessed August 19, 2026. \url{https://api-docs.deepseek.com/guides/thinking_mode/}; \url{https://huggingface.co/deepseek-ai/DeepSeek-V4-Flash-0731/blob/main/encoding/README.md}.

\end{thebibliography}
\end{document}